\newcommand{\simone}[1]{#1}
\DeclareMathOperator{\Span}{span}
\DeclareMathOperator{\Rows}{rows}
\def\Pp{P_{\Phi}}
\def\Ppm{P_{\Phi_{-1}}}
\newcommand{\kntk}{K_{\textup{NTK}}}
\newcommand{\varrf}{\varphi_{\textup{RF}}}
\newcommand{\tvarrf}{\tilde \varphi_{\textup{RF}}}
\newcommand{\varntk}{\varphi_{\textup{NTK}}}
\newcommand{\tvarntk}{\tilde \varphi_{\textup{NTK}}}
\def\P{\mathbb{P}}
\def\R{\mathbb{R}}
\newcommand{\opnorm}[1]{\left\lVert#1\right\rVert_{\textup{op}}}
\def\b0{{0}}
\def\RR{\mathbb{R}}
\def\>{\rangle}
\newcommand{\E}{\mathbb{E}}
\newcommand{\distas}[1]{\mathbin{\overset{#1}{\sim}}}
\newcommand{\bigO}[1]{\mathcal{O}\left(#1\right)}
\def\Span{\textrm{Span}}
\newcommand{\norm}[1]{\left\|#1\right\|}
\newcommand{\subGnorm}[1]{\left\|#1\right\|_{\psi_2}}
\newcommand{\subEnorm}[1]{\left\|#1\right\|_{\psi_1}}
\newcommand{\abs}[1]{\left|#1\right|}
\newcommand{\evmin}[1]{\lambda_{\rm min}\left(#1\right)}
\def\Lip{\mathrm{Lip}}
\def\PP{\mathbb{P}}
\def\det{\mathop{\rm det}\nolimits}
\def\min{\mathop{\rm min}\nolimits}
\def\max{\mathop{\rm max}\nolimits}
\numberwithin{equation}{section}
\newtheoremstyle{myexample} 
    {\topsep}                    
    {\topsep}                    
    {\rm }                   
    {}                           
    {\bf }                   
    {.}                          
    {.5em}                       
    {}  
\newtheoremstyle{myremark} 
    {\topsep}                    
    {\topsep}                    
    {\rm}                        
    {}                           
    {\bf}                        
    {.}                          
    {.5em}                       
    {}  
\newtheorem{claim}{Claim}[section]
\newtheorem{lemma}[claim]{Lemma}
\newtheorem{assumption}{Assumption}
\newtheorem{theorem}{Theorem}
\newtheorem{proposition}[claim]{Proposition}
\newtheorem{definition}[claim]{Definition}
\theoremstyle{myremark}
\theoremstyle{myremark}
\theoremstyle{myexample}
\author{Simone Bombari\thanks{Institute of Science and Technology Austria (ISTA). Emails: \texttt{\{simone.bombari, marco.mondelli\}@ist.ac.at}.}\;,
\;\;Marco Mondelli\footnotemark[1]}
\title{How Spurious Features Are Memorized: \\
Precise Analysis for Random and NTK Features}
\begin{document}

\newtheorem*{theoremcentering}{Theorem \ref{thm:maincentering}}
\newtheorem*{theoremcentered}{Theorem \ref{thm:centered}}
\newtheorem*{corhammer}{Corollary \ref{cor:hammer}}
\newtheorem*{cormem}{Corollary \ref{cor:memcap}}
\newtheorem*{theoremoptim}{Theorem \ref{thm:optimization}}

\maketitle

\begin{abstract}

Deep learning models are known to overfit and memorize spurious features in the training dataset. While numerous empirical studies have aimed at understanding this phenomenon, a rigorous theoretical framework to quantify it is still missing. In this paper, we consider spurious features that are uncorrelated with the learning task, and we provide a precise characterization of how they are memorized via two separate terms: \emph{(i)} the \emph{stability} of the model with respect to individual training samples, and \emph{(ii)} the \emph{feature alignment} between the spurious feature and the full sample. While the first term is well established in learning theory and it is connected to the generalization error in classical work, the second one is, to the best of our knowledge, novel. Our key technical result gives a precise characterization of the feature alignment for the two prototypical settings of random features (RF) and neural tangent kernel (NTK) regression. We prove that the memorization of spurious features weakens as the generalization capability increases and, through the analysis of the feature alignment, we unveil the role of the model and of its activation function. 
Numerical experiments show the predictive power of our theory on standard datasets (MNIST, CIFAR-10).
\end{abstract}

\section{Introduction}

Neural networks often use features that are not inherently relevant for the intended task. This phenomenon can be caused by positive spurious correlations between certain patterns and the learning task \cite{Geirhos_2020, xiao2021noise}, but it occurs even when the patterns are rare \cite{yang2022understanding} or simply irrelevant \cite{Hermann2020whatshapes}, leading the model to \emph{memorize} spurious relations present in the training data, which are not predictive for the sampling distribution. An extensive empirical effort has aimed at mitigating this phenomenon \cite{plumb2022finding, chang21augmentation}. In fact, the benefits of solving this problem range from robustness to distribution-shift \cite{geirhos2018imagenettrained, zhou21combating}, fairness \cite{zliobaite15}, and data-privacy \cite{Leino2020}. However, avoiding to overfit spurious features is not always feasible, since memorization can be optimal for accuracy and over-parameterized models often exhibit their best performance when trained long enough to achieve $0$ training error \cite{Nakkiran2020Deep, feldman2020}.

In this regard, a related (but separate) body of work has characterized the role of benign overfitting \cite{belkin2021, bartlett20benign}, and it has precisely described the in-distribution generalization 
of interpolating models, such as random features and neural tangent kernels \cite{mei2022generalization, ghorbani2021linearized, montanari2022interpolation}.
However, this powerful theoretical machinery does not cover the memorization of spurious features, as noise is generally modelled to be in the labels, rather than in the input data. More generally, while practical work has tried to understand the impact of spurious features and disentangle them from core features in deep learning models \cite{Hermann2020whatshapes, singla2022salient}, theoretical approaches remain predominantly directed to understand how learning is impacted by the complexity of the features \cite{qiu2023complexity}, or the degree of overparameterization \cite{sagawa2020overparam}, without capturing 
the role of the 
architecture.

Our paper bridges this gap, offering an analytically tractable framework to understand and quantify the memorization of spurious 
features. We consider a setting similar to \cite{yang2022understanding}, and we in particular look at the case where the spurious features are not correlated with the true label of the sample 
(thus the term \emph{memorization}). Formally, we model the sample $z$ as composed by two distinct parts, \emph{i.e.}, $z \equiv [x, y]$, where $x$ is the core feature and $y$ the spurious one, see Figure \ref{fig:cat} for an illustration. The memorization of spurious features is captured by the correlation between the true label $g$ of the training sample and the output of the model evaluated on the spurious sample $z^s \equiv [-, y]$, where ``$-$'' corresponds to removing the core feature $x$ (\emph{e.g.}, replacing it with all zeros). In fact, 
$z^s$ is independent of the label $g$, as the spurious feature $y$ is un-informative. However, due to memorization, the output of the model evaluated on $z^s$ can still be correlated with $g$. Our analysis describes quantitatively this phenomenon \simone{in the setting of} 
generalized linear \simone{regression}. 
Surprisingly, it turns out that 
the emergence of memorization can be reduced to the separate effect of two distinct components:


\begin{enumerate}
    \item The \emph{feature alignment} $\mathcal F(z^s, z)$, 
    see \eqref{eq:featalign}. This represents the similarity in feature space between the training sample $z$ and the spurious one $z^s$; it depends on the feature map of the model and on the rest of the training dataset. To the best of our knowledge, this is the first time that attention is raised over such an object.
    \item The \emph{stability} $\mathcal S_z$ of the model with respect to $z$, see Definition \ref{def:stability}. Similar notions of stability are provided in a rich line of work \cite{bousquet2002stability}, which relates them to generalization.
\end{enumerate}

Our technical contributions can be summarized as follows:
\begin{itemize}
    \item We connect the stability \simone{in generalized linear regression} to the feature alignment between samples, see Lemma \ref{lemma:proj}. Then, we show that this connection makes the memorization of spurious features 
    a natural consequence of the generalization error of the model. This is the case when $\mathcal F (z^s, z)$ can be well approximated by a constant $\gamma > 0$, independent of the original sample $z$.
    \item We focus on two settings widely analyzed in the theoretical literature, \emph{i.e.}, \emph{(i)} random features (RF) \cite{rahimi2007random}, and \emph{(ii)} the neural tangent kernel (NTK) \cite{JacotEtc2018}. Using tools from high dimensional probability, we prove the concentration of $\mathcal F (z^s, z)$ to a positive constant $\gamma$, see Theorems \ref{thm:RF} and \ref{thm:mainntk}. For the NTK, we obtain a closed-form expression for $\gamma$, which unveils the role of the activation function in the memorization of spurious features. 
\end{itemize}

In a nutshell, our results give a precise characterization of
the feature alignment of RF and NTK models. This in turn establishes how the memorization of spurious features grows with the generalization error, \simone{and how it depends on the chosen model (RF/NTK) and, in particular, on the activation function}. 
Finally, going beyond RF/NTK models trained on synthetic data, we empirically show that our theoretical predictions transfer to standard datasets (see Figure \ref{fig:ntk_real} that analyzes the impact of the activation function on MNIST and CIFAR-10) and different neural networks \simone{(see Figures \ref{fig:nn} and \ref{fig:nn_rebuttal} that consider fully connected, convolutional, and ResNet architectures).}

\begin{wrapfigure}{r}{0.45\textwidth}
  \vspace{-1.2em}
  \begin{center}
    \includegraphics[width=0.45\textwidth]{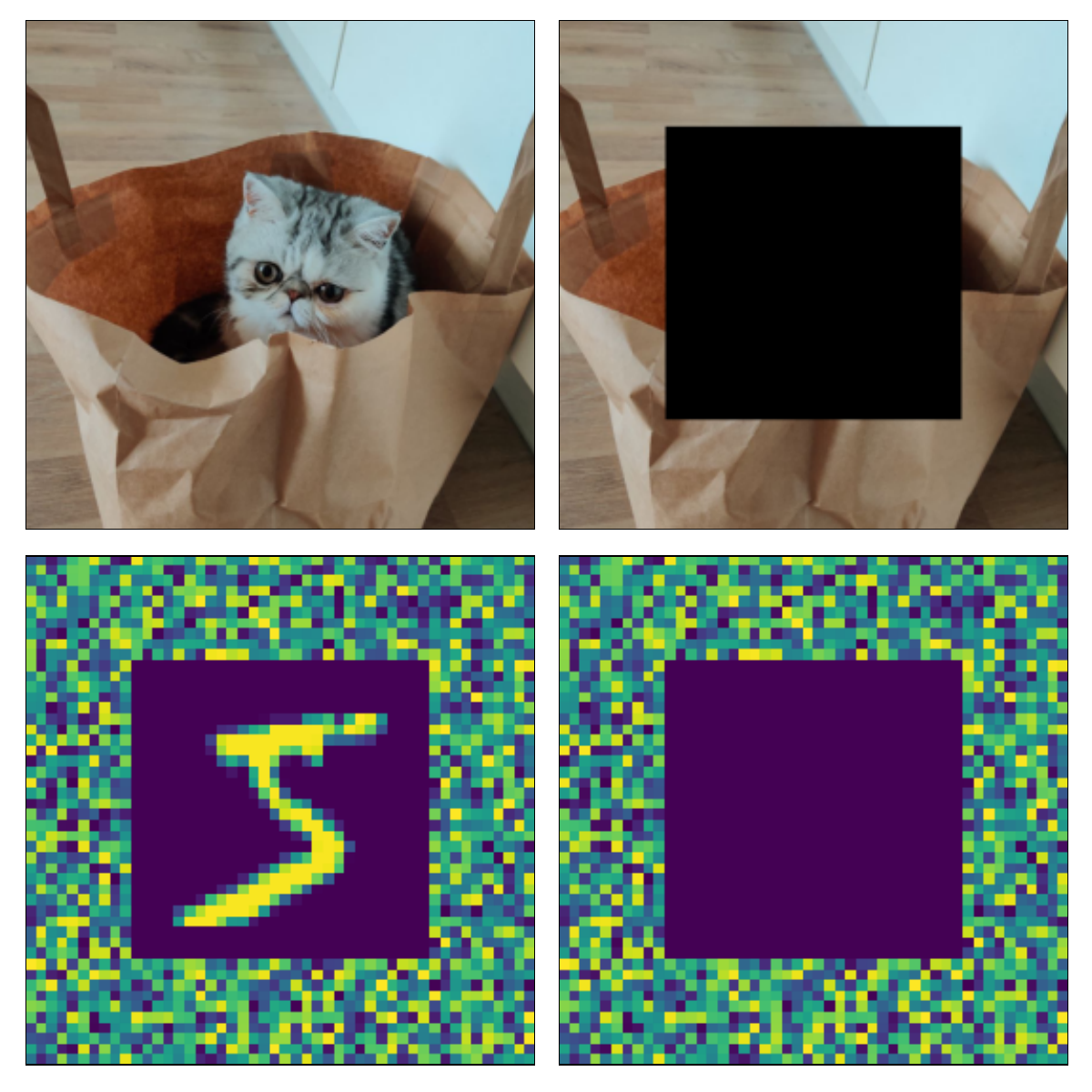}
  \end{center}
  \caption{Example of a training sample $z$ (top-left) and its spurious counterpart $z^s$ (top-right). 
  In experiments, we add a noise background ($y$) around the original images ($x$) before training (bottom-left). We then query the trained model only with the noise component (bottom-right).}
  \label{fig:cat}
\end{wrapfigure}


\section{Related work}

\paragraph{Spurious features.}

Spurious correlations refer to signals that are correlated but not causally related to the learning task \cite{Geirhos_2020, 
xiao2021noise}, and they have been shown to lead to poor out-of-distribution robustness \cite{geirhos2018imagenettrained, zhou21combating} or biased predictors \cite{zliobaite15, seonguk22bias, bishwa23biasfeat}. The phenomenon has been studied through the lens of overparameterization \cite{sagawa2020overparam} and simplicity bias \cite{Hermann2020whatshapes, shah2020simplicity, qiu2023complexity}, where the latter refers to models that are inherently prone to learn ``easy'' patterns first \cite{kalimeris19sgd}.

Our paper considers spurious features that are independent of the learning task and, hence, focuses on their memorization. Spurious features in the training set can in fact be memorized also if they are irrelevant or rare \cite{yang2022understanding, bansal2022measures}. This overfitting can then be used to retrieve information on the training set \cite{Leino2020, bombari2022differential}. 


\vspace{-0.2em}

\paragraph{Memorization and stability.}
Memorization 
measures the influence of a single sample on the final
trained model. \cite{feldman2020, feldman2020b} point out its advantages on learning from heavy-tailed data, and \cite{devansh17mem, stephenson2021on} investigate its emergence in neural networks. 
A related concept is that of leave-one-out stability, which has been studied in a classical line of work: \cite{hatmatrix} focus on under-parameterized linear models; \cite{bassily21, elisseeff2002, sayan2006} link it to generalization; and \cite{bousquet2002stability} discuss a wide range of variations on this object. 




\paragraph{Random features and neural tangent kernel.}
The random features (RF) model 
\cite{rahimi2007random, pennington17nonlinear, louart2018random} can be regarded as a two-layer neural network with random first layer weights. This model is theoretically appealing, as it is analytically tractable and offers deep-learning-like behaviours, such as, for example, the double descent phenomenon \cite{mei2022generalization}. 
The neural tangent kernel (NTK) can be regarded as the kernel obtained by linearizing a neural network around the initialization \cite{JacotEtc2018, bartlett2021deep}. 
A popular line of work has analyzed its spectrum \cite{fan2020spectra,adlam2020neural,wang2021deformed} and bounded its smallest eigenvalue \cite{theoreticalinsghts,tightbounds,montanari2022interpolation,bombari2022memorization}. 
The behaviour of the NTK has been used in practical work to study adversarial training \cite{loo2022evolution} and examples \cite{tsilivis2022what}, and to understand reconstruction attacks for dataset distillation \cite{loo2024understanding}.


\section{Preliminaries}\label{sec:prel}

\paragraph{Notation.}
Given a vector $v$, we denote by $\norm{v}_2$ its Euclidean norm. Given $v \in \R^{d_v}$ and $u \in \R^{d_u}$, we denote by $v \otimes u \in \R^{d_v d_u}$ their Kronecker product.
Given a matrix $A \in \R^{m\times n}$, we denote by $P_A \in \R^{n \times n}$ the projector over $\Span \{ \Rows (A) \}$. All the complexity notations $\Omega(\cdot)$, $\mathcal{O}(\cdot)$, $o(\cdot)$ and $\Theta(\cdot)$ are understood for sufficiently large data size $N$, input dimension $d$, number of neurons $k$, and number of parameters $p$. We indicate with $C,c>0$ numerical constants, independent of $N, d, k, p$.



\paragraph{Setting.}
Let $(Z, G)$ be a labelled training dataset, where $Z=[z_1, \ldots, z_N]^\top \in \R^{N \times d}$ contains the training data (sampled i.i.d.\ from a distribution $\mathcal P_Z$) on its rows and $G=(g_1, \ldots, g_N) \in \R^N$ contains the corresponding labels. We assume the label $g_i$ to be a (\simone{eventually noisy}) function of the sample $z_i$. 
Let $\varphi : \R^d \to \R^p$ be a generic feature map, from the input space to a feature space of dimension $p$. \simone{We consider the following \emph{generalized linear regression} model}
\begin{equation}\label{eq:glm}
    f(z, \theta) = \varphi(z)^\top \theta,
\end{equation}
where $\varphi(z) \in \R^p$ is the feature vector associated to the input 
$z$, and $\theta \in \R^p$ are 
trainable parameters of the model.
We 
minimize the empirical risk with a quadratic loss: 
\begin{equation}\label{eq:optloss}
    \min_\theta \norm{\Phi \theta - G}_2^2,
\end{equation}
where $\Phi := [\varphi(z_1), \ldots, \varphi(z_N)]^\top \in \R^{N \times p}$ is the feature matrix. 
We use the shorthand $K := \Phi \Phi^\top \in \R^{N \times N}$ for the kernel associated with the feature map. If $K$ is invertible (\emph{i.e.}, the model can fit any set of labels $G$), gradient descent converges to the interpolator which is the closest in $\ell_2$ norm to the initialization \simone{(see equation (33) in \cite{bartlett2021deep})}:
\begin{equation}\label{eq:thetastar}
    \theta^* = \theta_0 + \Phi^+ (G - f(Z, \theta_0)),
\end{equation}
where $\theta^*$ is the gradient descent solution, $\theta_0$ the initialization, $f(Z, \theta_0) := \Phi \theta_0$ the output of the model \eqref{eq:glm} at initialization, and $\Phi^+ := \Phi^\top K^{-1}$ the Moore-Penrose inverse.
Let $z \sim \mathcal P_Z$ be an independent test sample. Then, we define the \emph{generalization error} of the trained model as
\begin{equation}\label{eq:generr}
    \mathcal R = \E_{z \sim \mathcal P_Z} \left[ \left(f(z, \theta^*) - g \right)^2 \right],
\end{equation}
where $g$ denotes the ground-truth label of the test sample $z$.


\paragraph{Stability.} Let us 
introduce quantities related to \enquote{incomplete} datasets. 
We indicate with $\Phi_{-1} \in \R^{(N-1) \times p}$ the feature matrix of the training set \emph{without} the first sample $z_1$. For simplicity, we focus on the removal of the first sample, and similar considerations hold for the removal of any other sample. In other words, $\Phi_{-1}$ is equivalent to $\Phi$, without the first row. Similarly, using \eqref{eq:thetastar}, we indicate with $\theta_{-1}^* :=  \theta_0 + \Phi_{-1}^+ \left( G_{-1} - f(Z_{-1}, \theta_0) \right)$ the set of parameters the algorithm would have converged to if trained over $(Z_{-1}, G_{-1})$, the original dataset without the first pair sample-label $(z_1, g_1)$. 
We can now proceed with the definition of our notion of \enquote{stability}. 
\begin{definition}\label{def:stability}
Let $\theta^*$ ($\theta^*_{-1}$) be the parameters of the model $f$ given by \eqref{eq:glm} trained on the dataset $Z$ ($Z_{-1}$), as in \eqref{eq:thetastar}. We define the \emph{stability} $\mathcal S_{z_1} : \R^d \to \R$ with respect to the training sample $z_1$ as
\begin{equation}\label{eq:stabdef}
    \mathcal S_{z_1} := f(\cdot, \theta^*) - f(\cdot, \theta_{-1}^*).
\end{equation}
\end{definition}
This quantity indicates how the trained model changes if we add $z_1$ to the dataset $Z_{-1}$. If the training algorithm completely fits the data (as in \eqref{eq:thetastar}), then $\mathcal S_{z_1}(z_1) = g_1 - f(z_1, \theta_{-1}^*)$, which implies that
\begin{equation}\label{eq:genstab}
    \E_{z_1 \sim \mathcal P_Z} \left[\mathcal S^2_{z_1}(z_1) \right] = \E_{z_1 \sim \mathcal P_Z} \left[ \left( f(z_1, \theta_{-1}^*) - g_1 \right)^2 \right] = \E_{z \sim \mathcal P_Z} \left[ \left( f(z, \theta_{-1}^*) - g \right)^2 \right] =: \mathcal R_{Z_{-1}},
\end{equation}
where the purpose of the second step is just to match the notation used in \eqref{eq:generr}, and $\mathcal R_{Z_{-1}}$ denotes the generalization error of the algorithm that uses $Z_{-1}$ as training set.

\paragraph{Memorization of spurious features.}
The input samples are decomposed in two \emph{independent} components, \emph{i.e.}, $z \equiv [x, y]$. With this notation, we mean that $z \in \R^d$ is the concatenation of $x \in \R^{d_x}$ and $y \in \R^{d_y}$ ($d_x + d_y = d$). Here, $x$ is the \emph{core feature} that is useful to accomplish the task (\emph{e.g.}, the cat in top-left image of Figure \ref{fig:cat}), while $y$ is the \emph{spurious feature} containing noise (\emph{e.g.}, the background). 
Formally, we assume that, for $i\in \{1, \ldots, N\}$, $g_i = g(x_i)$, where $g$ is a labelling function, \emph{i.e.}, the label depends only on the core feature $x_i$ and it is independent of the spurious feature $y_i$ \simone{(a similar setting was previously considered in \cite{loureiro21})}. 
Even if $y_i$ is not useful for learning, the training algorithm may overfit it, memorizing its co-occurrence with the label $g_i$.
This phenomenon could then lead to unpredictable behaviours at test time, such as poor performance on samples $z_t = [x_t, y_i]$ with different information but the same spurious feature \cite{yang2022understanding}, or data privacy breaches, through extraction of information about $x_i$ via the knowledge of $y_i$ \cite{Leino2020, bombari2022differential}.
Specifically, in computer vision, an unusual background could expose information about the object in the foreground \cite{Leino2020}. In natural language processing, sensitive information ($x_i$) about an individual ($y_i$) could be extracted with proper prompting strategies: a language model might in fact be able to successfully auto-complete \textit{``The address of $y_i$ is...''} with \textit{``...$x_i$''}, as shown by \cite{bombari2022differential} on question-answering tasks. \simone{With slight abuse of notation, during the discussion, we will use the term \emph{feature} to indicate both elements in feature space $\varphi(z) \in \R^p$ (as in \eqref{eq:glm}), and portions of the samples $x$ and $y$. This choice is common in the related literature, and we will elaborate whenever it could generate confusion. }

To quantify the memorization of the spurious feature $y_i$, \simone{we will consider
\begin{equation}
    \textup{Cov}\left(f(z^s_i, \theta^*), g_i\right),
\end{equation}
which represents the covariance between the true label $g_i=g(x_i)$ and the output of the trained model \eqref{eq:thetastar} evaluated on 
$z^s_i = [x, y_i]$, which replaces 
the core feature $x_i$ with an independent feature $x$.}
As $x$ and $x_i$ are independent, this correlation is due to the memorization of $y_i$. In the experiments, similarly to \cite{singla2022salient, yang2022understanding}, we report the \emph{spurious accuracy}, \emph{i.e.}, 
the fraction of queries $f(z^s_i, \theta^*)$ that returns the correct label $g(x_i)$. For MNIST and CIFAR-10, instead of sampling an independent $x$, we simply set it to $0$, see Figure \ref{fig:cat}. \simone{Our code is publicly available at the GitHub repository \href{https://github.com/simone-bombari/spurious-features-memorization}{\texttt{simone-bombari/spurious-features-memorization}}.}

\section{Memorization and feature alignment}\label{sec:4}


We start by relating the output $f(z^s_i(x), \theta^*)$ evaluated on the spurious sample $z^s_i = [x, y_i]$ with the output $f(z_i, \theta^*)$
evaluated on the original sample $z_i$. For generalized linear regression, 
this can be 
elegantly done via the notion of \emph{stability} of Definition \ref{def:stability}.  By symmetry, from now on, we set $i=1$ without loss of generality. 
\begin{lemma}\label{lemma:proj}
    Let $\varphi : \R^d \to \R^p$ be a 
    feature map, such that the induced kernel $K \in \R^{N \times N}$ on the training set is invertible. Let $z_1 \in \R^d$ be an element of the training dataset $Z$, and $z \in \R^d$ a generic test sample. Let $\Ppm$ be the projector over $\Span \{\Rows (\Phi_{-1}) \}$ and $\mathcal S_{z_1}$ the stability with respect to $z_1$, as in Definition \ref{def:stability}. Define
    \begin{equation}\label{eq:featalign}
    \mathcal F_\varphi(z, z_1) := \frac{\varphi(z)^\top \Ppm^\perp \varphi(z_1)}{\norm{\Ppm^\perp \varphi(z_1)}_2^2}
    \end{equation}
    the \emph{feature alignment} between $z$ and $z_1$. Then, we have
    \begin{equation}\label{eq:lemmaproj}
    \mathcal S_{z_1}(z) = \mathcal F_\varphi(z, z_1) \, \mathcal S_{z_1}(z_1).
    \end{equation}
\end{lemma}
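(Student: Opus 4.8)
The plan is to reduce everything to the parameter difference $v := \theta^* - \theta_{-1}^*$. Since $f$ is linear in $\theta$, for every input $z$ we have $\mathcal S_{z_1}(z) = f(z,\theta^*) - f(z,\theta_{-1}^*) = \varphi(z)^\top v$, and in particular $\mathcal S_{z_1}(z_1) = \varphi(z_1)^\top v$. Hence it suffices to prove that $v$ is a scalar multiple of $\Ppm^\perp \varphi(z_1)$: if $v = a\,\Ppm^\perp\varphi(z_1)$ for some $a \in \R$, then, using that $\Ppm^\perp$ is an orthogonal projector (so $\Ppm^\perp = (\Ppm^\perp)^\top = (\Ppm^\perp)^2$), we get $\mathcal S_{z_1}(z) = a\,\varphi(z)^\top\Ppm^\perp\varphi(z_1)$ and $\mathcal S_{z_1}(z_1) = a\,\norm{\Ppm^\perp\varphi(z_1)}_2^2$; eliminating the scalar $a$ between these two identities yields exactly \eqref{eq:lemmaproj} with $\mathcal F_\varphi$ as in \eqref{eq:featalign}. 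This last step is legitimate because $\norm{\Ppm^\perp\varphi(z_1)}_2 \neq 0$: otherwise $\varphi(z_1) \in \Span\{\Rows(\Phi_{-1})\}$, the rows of $\Phi$ would be linearly dependent, and $K = \Phi\Phi^\top$ would be singular, contradicting the hypothesis.

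To show $v \in \Span\{\Ppm^\perp\varphi(z_1)\}$, I would combine two structural facts. First, from the closed form \eqref{eq:thetastar} and $\Phi^+ = \Phi^\top K^{-1}$, the vector $\theta^* - \theta_0$ lies in $\col(\Phi^\top) = \Span\{\Rows(\Phi)\}$; likewise $\theta_{-1}^* - \theta_0 \in \Span\{\Rows(\Phi_{-1})\}$ (invertibility of $K$ forces $\Phi$, hence its submatrix $\Phi_{-1}$, to have full row rank, so $\Phi_{-1}^+$ is well defined). Subtracting, $v \in \Span\{\Rows(\Phi)\} = \Span\{\varphi(z_1)\} + \Span\{\Rows(\Phi_{-1})\}$. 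Second, since both $\theta^*$ and $\theta_{-1}^*$ interpolate their respective training sets, $\varphi(z_i)^\top\theta^* = g_i = \varphi(z_i)^\top\theta_{-1}^*$ for every $i \in \{2,\dots,N\}$, so $\varphi(z_i)^\top v = 0$ for all such $i$; that is, $v \perp \Span\{\Rows(\Phi_{-1})\}$, equivalently $\Ppm v = 0$ and $v = \Ppm^\perp v$.

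Putting these together: write $v = a\,\varphi(z_1) + w$ with $a \in \R$ and $w \in \Span\{\Rows(\Phi_{-1})\}$ (possible by the first fact); applying $\Ppm^\perp$ annihilates $w$, and by the second fact $v = \Ppm^\perp v = a\,\Ppm^\perp\varphi(z_1)$. This is the claim, and the argument concludes as described in the first paragraph. The computations are elementary; the one genuinely delicate point, and the conceptual core of the lemma, is precisely this identification of $v$ with a multiple of $\Ppm^\perp\varphi(z_1)$, which rests on the interplay between the minimum-norm (row-space) form of the gradient-descent interpolator and the fact that the two solutions $\theta^*$ and $\theta_{-1}^*$ agree on the shared samples $z_2,\dots,z_N$.
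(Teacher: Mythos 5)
Your proof is correct, but it follows a genuinely different route from the paper's. The paper's argument works on the projector side: it Gram--Schmidt decomposes $\Pp = \Ppm + \frac{\Ppm^\perp\varphi(z_1)\varphi(z_1)^\top\Ppm^\perp}{\norm{\Ppm^\perp\varphi(z_1)}_2^2}$, inserts this into $f(z,\theta^*) - \varphi(z)^\top\theta_0 = \varphi(z)^\top\Pp\Phi^+(G - f(Z,\theta_0))$, and identifies the $\Ppm$ part with $f(z,\theta_{-1}^*) - \varphi(z)^\top\theta_0$ via a leave-one-out identity ($\Ppm\Phi^+ u = \Phi_{-1}^+ u_{-1}$, the paper's Equation \eqref{eq:leaveoneouttrick}); the residual rank-one piece then produces $\mathcal F_\varphi$. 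You instead work on the parameter side: you characterize $v = \theta^* - \theta_{-1}^*$ directly as a scalar multiple of $\Ppm^\perp\varphi(z_1)$ by intersecting the constraint $v \in \Span\{\Rows(\Phi)\}$ (from the row-space form of the minimum-norm interpolator) with $v \perp \Span\{\Rows(\Phi_{-1})\}$ (from the shared interpolation conditions on $z_2,\dots,z_N$), and then the linearity $\mathcal S_{z_1}(\cdot) = \varphi(\cdot)^\top v$ finishes the job. Your argument is arguably more transparent about where the lemma comes from — the entire difference between the two trained models lives in the one-dimensional direction $\Ppm^\perp\varphi(z_1)$ — whereas the paper's proof is more mechanical but requires no separate observation about interpolation, only the algebraic identity for $\Ppm\Phi^+$. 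Your appeal to $\norm{\Ppm^\perp\varphi(z_1)}_2 \neq 0$ follows from the invertibility of $K$, which is exactly the content of the paper's Lemma \ref{lemma:dendiv0}.
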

Classical work \cite{hatmatrix} considers a similar problem in the under-parametrized regime, exploiting the projector $H = \Phi (\Phi^\top \Phi)^{-1} \Phi^\top \in \R^{N \times N}$ ($p \leq N$ is needed for $\Phi^\top \Phi$ to be invertible), known as the \emph{hat matrix}. In contrast, Lemma \ref{lemma:proj} focuses on the over-parameterized regime and highlights the role of the projector $\Ppm$. The 
different behaviour of under-parameterized and over-parameterized models requires the proof of Lemma \ref{lemma:proj} to follow a different strategy, which is discussed in Appendix \ref{app:proj}.



In words, Lemma \ref{lemma:proj} relates the stability with respect to $z_1$ evaluated on the two samples $z$ and $z_1$ through the quantity $\mathcal F_\varphi(z, z_1)$, which captures the similarity between $z$ and $z_1$ in the feature space induced by $\varphi$. As a sanity check, 
the feature alignment between any sample and itself is equal to one, which trivializes \eqref{eq:lemmaproj}. Then, as $z$ and $z_1$ become less aligned, the stability $\mathcal S_{z_1}(z)=f(z, \theta^*)-f(z, \theta^*_{-1})$ starts to differ from $\mathcal S_{z_1}(z_1)=f(z_1, \theta^*)-f(z_1, \theta^*_{-1})$. 
We note that the feature alignment also depends on the rest of the training set $Z_{-1}$, as $Z_{-1}$ implicitly appears in the projector $\Ppm^\perp$. We also remark that the invertibility of $K$ directly implies that the denominator in \eqref{eq:featalign} is different from zero, see Lemma \ref{lemma:dendiv0} in Appendix \ref{app:proj}.

%


Armed with Lemma \ref{lemma:proj}, we now characterize the correlation between $f(z^s_1, \theta^*)$ and $g_1$. Let us replace $\mathcal F_\varphi(z^s_1, z_1)$ in \eqref{eq:lemmaproj} with a constant $\gamma_\varphi > 0$, independent from $z_1$. 
This is justified by Sections \ref{sec:rf} and \ref{sec:ntk}, where we prove the concentration of $\mathcal F_\varphi(z^s_1, z_1)$ for the RF and NTK model, respectively. Then, by using the definition of stability in \eqref{eq:stabdef}, we get
\begin{equation}\label{eq:pg}
    f(z_1^s, \theta^*) = f(z_1^s, \theta^*_{-1}) + \gamma_\varphi \, \mathcal S_{z_1}(z_1) = f(z_1^s, \theta^*_{-1}) + \gamma_\varphi \left( g_1 -  f(z_1, \theta^*_{-1})\right).
\vspace{0.3em}
\end{equation}
Note that $f(z_1^s, \theta^*_{-1})$ is independent from $g_1$, as it doesn't depend on $x_1$. In fact, $z^s_i = [x, y_i]$ is independent of $x_1$, and $\theta^*_{-1}$ is not trained on $x_1$. Thus, if the algorithm is stable, in the sense that $\mathcal S_{z_1}(z_1)$ is close to 0, we have that $f(z_1^s, \theta^*)$ has little dependence on $g_1$. Conversely, if $\mathcal S_{z_1}(z_1)$ grows, then $f(z_1^s, \theta^*)$ will start picking up the correlation with $g_1$. Concretely, we can look at the covariance between $f(z_1^s, \theta^*)$ and $g_1$, in the probability space of $z_1$:
\vspace{0.1em}
\begin{equation}\label{eq:pg2}
\begin{aligned}
    &\textup{Cov}\left(f(z^s_1, \theta^*), g_1\right) = \gamma_\varphi   \textup{Cov}\left( \mathcal S_{z_1}(z_1), g_1 \right)  \\
    &\leq\gamma_\varphi \sqrt{ \textup{Var}\left( \mathcal S_{z_1}(z_1) \right) \textup{Var}\left(g_1 \right)} \leq \gamma_\varphi  \sqrt{ \mathcal R_{Z_{-1}}} \sqrt{\textup{Var}\left(g_1 \right)}.
\end{aligned}
\end{equation}
Here, the first step uses \eqref{eq:pg} and the independence between $f(z_1^s, \theta^*_{-1})$ and $g_1$, the second step is an application of Cauchy-Schwarz, and the last step follows from \eqref{eq:genstab}.

The terms 
$\gamma_\varphi$ and $\sqrt{\mathcal R_{Z_{-1}}}$ in the RHS of \eqref{eq:pg2} show that the memorization of spurious features is affected by two factors: \emph{(i)} the similarity between $z^s_1$ and $z_1$ (formalized by $\mathcal F_\varphi(z^s_1, z_1)$), and \emph{(ii)} the generalization error of the model. While the dependence on the generalization error is not entirely surprising (overfitting causes both the inability of the model to generalize and the memorization of spurious correlations between $g_1$ and $y_1$), the key role of the feature alignment (in the form defined in \eqref{eq:featalign}) is a-priori far from obvious. Furthermore, via the analysis of $\mathcal F_\varphi(z^s_1, z_1)$ 
in the next two sections, we will show how the memorization is affected by the choice of the feature map and, specifically, of the activation function. 


Similarly, we can lower bound the \emph{spurious loss}, defined as
\begin{equation}
    \mathcal L := \E_{z_1} \left[ \left( f(z_1^s, \theta^*) - g_1 \right)^2 \right].
\end{equation}
By introducing the shorthands $\bar f := \E_{z_1} [ f(z_1^s, \theta^*) ]$ and $\bar g := \E_{z_1} [ g_1 ]$, we have the lower bound:
\begin{equation}\label{eq:newiclr}
\begin{aligned}
    \mathcal L 
    &= \E_{z_1} \left[ \left( \left( f(z_1^s, \theta^*) - \bar f \right) - \left( g_1 - \bar g \right) + \left(\bar f - \bar g \right) \right)^2 \right] \\
    &= \E_{z_1} \left[ \left( f(z_1^s, \theta^*) - \bar f \right)^2 \right] + \E_{z_1} \left[ \left( g_1 - \bar g \right)^2 \right] +  \E_{z_1} \left[  \left(\bar f - \bar g \right)^2 \right] - 2 \, \textup{Cov}\left(f(z^s_1, \theta^*), g_1\right) \\
    &\geq \E_{z_1} \left[ \left( g_1 - \bar g \right)^2 \right] - 2 \, \gamma_\varphi \, \sqrt{ \mathcal R_{Z_{-1}}} \sqrt{\textup{Var}\left(g_1 \right)}. 
\end{aligned}
\end{equation}
The first term on the RHS of \eqref{eq:newiclr} is the minimal loss when no information about $x_1$ is available (and, thus, the best estimator is the expectation of $g_1$). 
The second term 
indicates how the memorization of the spurious feature $y_1$
improves the trivial guess $\bar g$, and it depends again on $\mathcal R_{Z_{-1}}$ and $\gamma_\varphi$. 




\section{Main result for random features}\label{sec:rf}

The \emph{random features (RF) model} takes the form
\begin{equation}\label{eq:featmaprf}
    f_{\textup{RF}}(z, \theta) = \varrf(z)^\top \theta, \qquad \varrf(z)=\phi(V z),
\end{equation}
where $V$ is a $k \times d$ matrix s.t.\ $V_{i,j} \distas{}_{\rm i.i.d.}\mathcal{N}(0, 1 / d)$, and $\phi$ is an activation 
applied component-wise. The number of parameters of this model is $k$, as $V$ is fixed and $\theta\in \mathbb R^k$ contains trainable parameters. We denote by $\mu_l$ the $l$-th Hermite coefficient of $\phi$ (see Appendix \ref{app:Hermite} for details).


\begin{assumption}[Data distribution]\label{ass:datadist}
        The input data $(z_1,\ldots,z_N)$ are $N$ i.i.d.\ samples from 
        $\mathcal P_Z = \mathcal P_X \times \mathcal P_Y$ s.t.\ 
        $z_i \in \R^d$ can be written as $z_i = [x_i, y_i]$, with $x_i\in \R^{d_x}$, $y_i\in\R^{d_y}$ and $d = d_x + d_y$. We assume that $x_i\sim \mathcal P_X$ is independent of $y_i\sim \mathcal P_Y$, and the following 
        holds:
	\begin{enumerate}
		\item $\norm{x}_2 = \sqrt{d_x}$ and $\norm{y}_2 = \sqrt{d_y}$. 
		\item $\E[x] = 0$ and $\E[y] = 0$. 
     	\item $\mathcal P_X$ and $\mathcal P_Y$ satisfy \emph{Lipschitz concentration}.
	\end{enumerate}
\end{assumption}

The first two assumptions are achieved by 
data pre-processing 
and 
could 
be relaxed as in Assumption 1 of \cite{bombari2022memorization} at the cost of a more involved argument. 
The third assumption (see Appendix \ref{app:notation} for details) corresponds to data 
having well-behaved tails, and it covers a number of important cases, \emph{e.g.}, standard Gaussian \cite{vershynin2018high}, uniform on the sphere/hypercube \cite{vershynin2018high}, or data obtained via GANs \cite{seddik2020random}. This requirement is common in the related literature \cite{bombari2022memorization, 
bubeck2021a,tightbounds} and it is often replaced by 
a stronger requirement (\emph{e.g.}, data uniform on the sphere), see \cite{montanari2022interpolation}.

\begin{figure*}[t]
  \begin{center}
    \includegraphics[width=\textwidth]{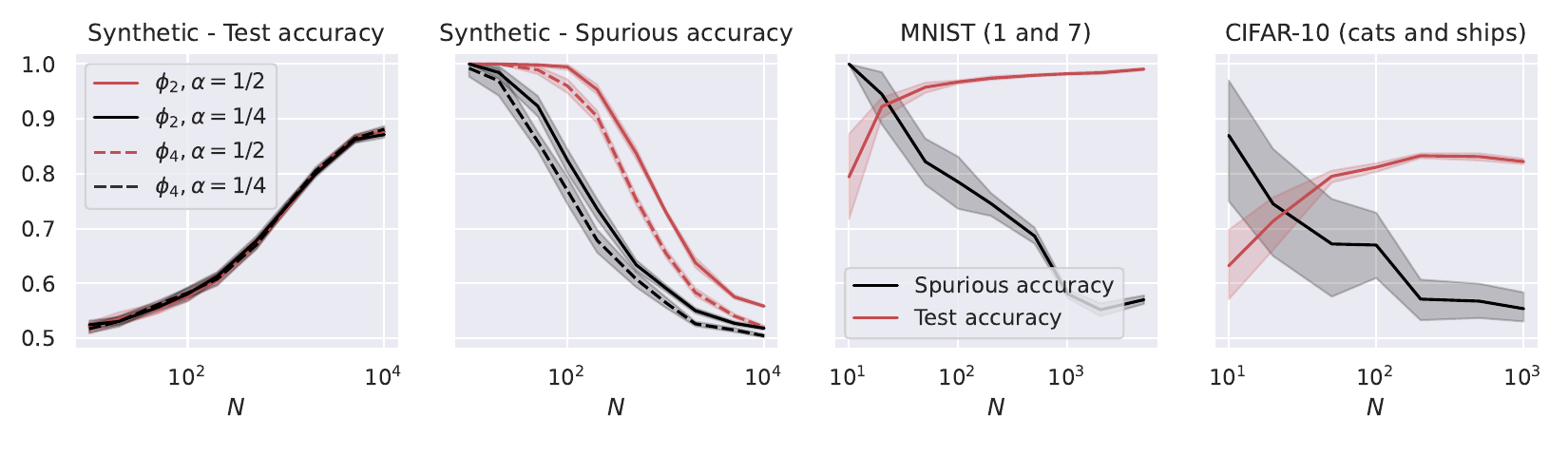}
  \end{center}
  \vspace{-1em}
  \caption{Test and spurious accuracies as a function of the number of training samples $N$, for various binary classification tasks. In the first two plots, we consider the RF model in \eqref{eq:featmaprf} with $k = 10^5$ trained over Gaussian data with $d = 1000$. The labeling function is $g(x) = \textup{sign}(u^\top x)$. We repeat the experiments for $\alpha = \{ 0.25, 0.5 \}$ and for the two activations $\phi_2 = h_1 + h_2$ and $\phi_4 = h_1 + h_4$, where $h_i$ denotes the $i$-th Hermite polynomial (see Appendix \ref{app:Hermite}). In the last two plots, we consider the same model with ReLU activation, trained over two MNIST and CIFAR-10 classes. The width of the noise background is $10$ pixels for MNIST and $8$ pixels for CIFAR-10, see Figure \ref{fig:cat}. The spurious accuracy is obtained by querying the model only with the noise background from the training set, replacing all the other pixels with $0$, and taking the sign of the output. As we consider binary classification, an accuracy of 0.5 is achieved by random guessing. We plot the average over 10 independent trials
  and the confidence band at 1 standard deviation.}
  \vspace{-1em}
  \label{fig:rf}
\end{figure*}

\begin{assumption}[Over-parameterization and high-dimensional data]\label{ass:overparam}
\begin{equation}\label{eq:overp}
    N \log^3 N = o(k), \qquad \sqrt d \log d = o(k), \qquad k \log^4 k = o( d^2).
\end{equation}
\end{assumption}
The first condition in \eqref{eq:overp} requires the number of neurons $k$ to scale faster than the number of data points $N$. This over-parameterization leads to a lower bound on the smallest eigenvalue of the kernel induced by the feature map, which in turn implies that the model interpolates the data, as required to write \eqref{eq:thetastar}. This over-parameterized regime also achieves minimum test error \cite{mei2022generalization}. Combining the second and third conditions in \eqref{eq:overp}, we have that $k$ can scale between $\sqrt d$ and $d^2$ (up to $\log$ factors). Finally, merging the first and third condition gives that $d^2$ scales faster than $N$. We notice that this holds for standard datasets (MNIST, CIFAR-10 and ImageNet).



\begin{assumption}[Activation function]\label{ass:activationfunc}
    The activation function $\phi$ is a non-linear $L$-Lipschitz function.
\end{assumption}


\begin{theorem}\label{thm:RF}
Let Assumptions \ref{ass:datadist}, \ref{ass:overparam}, 
and \ref{ass:activationfunc} hold, and let $x \sim \mathcal P_X$ be sampled independently from everything. Consider querying the trained RF model \eqref{eq:featmaprf} with $z_1^s = [x, y_1]$. Let $\alpha = d_y / d$ and $\mathcal F_\textup{RF}(z_1^s, z_1)$ be the feature alignment between $z_1^s$ and $z_1$, as defined in \eqref{eq:featalign}. Then,
\begin{equation}\label{eq:concRF}
    \left| \mathcal F_{\textup{RF}}(z_1^s, z_1) - \gamma_{\textup{RF}} \right| = o(1),
\end{equation}
with probability at least $1 - \exp(-c \log^2 N)$ over $V$, $Z$ and $x$, where 
$\gamma_{\textup{RF}} \leq 1$ does not depend on $z_1$ and $x$.
Furthermore, \simone{letting $\mu_l$ be the $l$-th Hermite coefficient of the activation $\phi$ (see Appendix \ref{app:Hermite}),} we have
\begin{equation}\label{eq:LBRF}
    \gamma_{\textup{RF}} > \frac{\sum_{l = 2}^{+\infty} \mu_l^2 \alpha^l}{\sum_{l = 1}^{+\infty} \mu_l^2} - o(1),
\end{equation}
with probability at least $1 - \exp(-c \log^2 N)$ over $V$ and $Z_{-1}$, 
\emph{i.e.}, $\gamma_{\textup{RF}}$ is bounded away from $0$ with high probability.
\end{theorem}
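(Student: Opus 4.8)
The plan is to analyze the feature alignment $\mathcal F_{\textup{RF}}(z_1^m, z_1) = \varphi_{\textup{RF}}(z_1^m)^\top \Ppm^\perp \varphi_{\textup{RF}}(z_1) / \norm{\Ppm^\perp \varphi_{\textup{RF}}(z_1)}_2^2$ by controlling numerator and denominator separately through concentration of the relevant random quantities. First I would split each feature vector into its component in $\Span\{\Rows(\Phi_{-1})\}$ and its orthogonal complement. Since $\Phi_{-1}$ has $N-1$ rows and lives in $\R^p$ with $p = k \gg N$, the projector $\Ppm$ captures a low-dimensional subspace. The key structural fact I would invoke: for Lipschitz-concentrated data and sub-Gaussian weights $V$, the kernel $\krf$ is well-approximated (in operator norm, after centering) by a \enquote{linearized} kernel of the form $\mu_1^2 \, \tfrac{ZZ^\top}{d} + (\text{diagonal-ish term})$, with the smallest eigenvalue lower bound coming from Lemma \ref{lemma:evminRF}. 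This concentration, together with the norm normalization $\norm{\varphi_{\textup{RF}}(z)}_2^2 \approx k \sum_l \mu_l^2$ (by a Lipschitz concentration argument on $\phi(Vz)$), lets me compute the required inner products up to $o(1)$ relative error.

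The heart of the computation is the numerator $\varphi_{\textup{RF}}(z_1^m)^\top \Ppm^\perp \varphi_{\textup{RF}}(z_1)$. I would write $\Ppm^\perp = \Id - \Phi_{-1}^\top \krf[-1]^{-1} \Phi_{-1}$ where $\krf[-1] := \Phi_{-1}\Phi_{-1}^\top$, and expand. The leading term is $\varphi_{\textup{RF}}(z_1^m)^\top \varphi_{\textup{RF}}(z_1) = \sum_{j=1}^k \phi(v_j^\top z_1^m)\phi(v_j^\top z_1)$; by the Hermite expansion of $\phi$ and the fact that $z_1^m$ and $z_1$ share the $y_1$-block while the $x$-blocks are independent, the inner product $\langle z_1^m, z_1\rangle/d$ concentrates on $\alpha = d_y/d$, so this term concentrates on $k \sum_{l\geq 0} \mu_l^2 \alpha^l$. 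The centering (Assumption \ref{ass:datadist}.2 and the implicit centering in $\Ppm^\perp$, since $\Ppm$ contains directions correlated with the constant / first-Hermite component) removes the $l=0$ and effectively the dominant low-degree contributions shared across the whole training set; what survives in $\Ppm^\perp$ is the \enquote{fresh} part, giving $k\sum_{l\geq 2}\mu_l^2\alpha^l$ in the numerator (up to lower-order terms), while the denominator $\norm{\Ppm^\perp\varphi_{\textup{RF}}(z_1)}_2^2$ concentrates on $k \sum_{l\geq 2}\mu_l^2$ (the full norm minus the degree-$0$ and degree-$1$ pieces absorbed by $\Ppm$). Dividing yields $\gamma_{\textup{RF}} = \sum_{l\geq 2}\mu_l^2\alpha^l / \sum_{l\geq 2}\mu_l^2$ as the natural candidate, and since $\alpha^l \le \alpha^2$ for $l \ge 2$ one gets $\gamma_{\textup{RF}} \le 1$; combined with the cruder bound that the degree-$1$ term contributes at most $\mu_1^2$ to the denominator-side normalization but the numerator keeps $\sum_{l\ge2}\mu_l^2\alpha^l$, one obtains the slightly weaker but $z_1$-independent lower bound \eqref{eq:LBRF} with $\sum_{l\ge1}\mu_l^2$ in the denominator. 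For \eqref{eq:LBRF} one only needs concentration over $V$ and $Z_{-1}$ (not over the fresh $x$), which is why that bound is stated with the weaker probabilistic qualifier.

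The main obstacle I expect is controlling the cross terms involving $\Phi_{-1}^\top \krf[-1]^{-1}\Phi_{-1}$ applied to $\varphi_{\textup{RF}}(z_1)$ and $\varphi_{\textup{RF}}(z_1^m)$: these require showing that $\varphi_{\textup{RF}}(z_1)$'s projection onto the row span of $\Phi_{-1}$ is essentially captured by its low-degree Hermite components (degrees $0$ and $1$), so that the residual in $\Ppm^\perp$ is the higher-degree part — and crucially that $z_1^m$, despite being a \enquote{new} query, has the same low-degree projection because it shares $y_1$ and because the $x$-dependent degree-$1$ term $\mu_1 v_j^\top[x,0]/\sqrt{\cdot}$ also lies (approximately) in the span of $\Phi_{-1}$'s corresponding directions (as $x \sim P_X$ is in-distribution). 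This step is where the operator-norm kernel approximation, the smallest-eigenvalue bound from Lemma \ref{lemma:evminRF}, and a union bound over the $o(1)$-failure events must be combined carefully; quantifying the error terms to beat the $\exp(-c\log^2 N)$ probability requires the Lipschitz concentration property and the scaling in Assumption \ref{ass:overparam} (in particular $N\log^3 N = o(k)$ to make $\krf[-1]^{-1}$ well-conditioned and $k\log^4 k = o(d^2)$ to control the higher-order Hermite / off-diagonal fluctuations). I would organize the write-up as: (i) norm and inner-product concentration lemmas for $\varphi_{\textup{RF}}$; (ii) operator-norm approximation of $\krf$ and $\krf[-1]$ by their linearizations; (iii) decomposition of $\Ppm^\perp\varphi_{\textup{RF}}(z_1)$ and $\Ppm^\perp\varphi_{\textup{RF}}(z_1^m)$ into low- and high-degree parts; (iv) assembling numerator and denominator and taking the ratio; (v) the union bound over failure events.
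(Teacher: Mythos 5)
Your proposal captures the general flavor of the paper's argument (Hermite expansion, centering, low- vs.\ high-degree decomposition), and you correctly identify that the lower bound \eqref{eq:LBRF} should only cost probability over $V$ and $Z_{-1}$. However, there are genuine gaps in the route you sketch.

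The most serious one is the claim that the denominator $\norm{\Ppm^\perp\varrf(z_1)}_2^2$ concentrates on $k\sum_{l\geq 2}\mu_l^2$, ``the full norm minus the degree-$0$ and degree-$1$ pieces absorbed by $\Ppm$.'' This is false. The projector $\Ppm$ has rank $N-1$, whereas $\mu_1 V z_1 \in \R^k$ with $k = \omega(N\log^3 N)$; a rank-$(N-1)$ projector cannot absorb the bulk of a generic direction in $\R^k$. What the paper actually shows (Lemma~\ref{lemma:Pseeslin}) is only that $\norm{\Ppm(\tilde\phi(Vz_1)-\mu_1 Vz_1)}_2 = o(\sqrt k)$: that is, what $\Ppm$ \emph{sees} of $\tilde\varrf(z_1)$ is essentially its degree-$1$ piece, but this says nothing about $\Ppm$ absorbing that piece. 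The denominator is bounded only via the trivial $\norm{\Ppm^\perp\tilde\varrf(z_1)}_2^2 \leq \norm{\tilde\varrf(z_1)}_2^2 \approx k\sum_{l\geq 1}\mu_l^2$, which is exactly why $\sum_{l\geq 1}\mu_l^2$ appears in the denominator of \eqref{eq:LBRF} --- not, as you suggest, because a tighter estimate with $\sum_{l\geq 2}\mu_l^2$ is being loosened. Consequently the ``natural candidate'' closed form $\gamma_{\textup{RF}} = \sum_{l\geq 2}\mu_l^2\alpha^l/\sum_{l\geq 2}\mu_l^2$ is not what the theorem delivers; the paper never closes $\gamma_{\textup{RF}}$ in explicit form for RF (only for NTK), and instead defines $\gamma_{\textup{RF}} := \E_{z_1,z_1^m}[\varrf(z_1^m)^\top\Ppm^\perp\varrf(z_1)]/\E_{z_1}[\norm{\Ppm^\perp\varrf(z_1)}_2^2]$, a quantity that still depends on $V$ and $Z_{-1}$ but not on $z_1,x$. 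Without this definitional device the very statement ``$\gamma_{\textup{RF}}$ does not depend on $z_1, x$'' is hard to realize.

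Two further points. First, you propose to control the numerator and denominator via an operator-norm approximation of the kernel by a linearized-plus-diagonal surrogate; the paper instead proves concentration of the specific bilinear forms $\varrf(z_1^m)^\top\Ppm^\perp\varrf(z_1)$ and $\norm{\Ppm^\perp\varrf(z_1)}_2^2$ to their conditional expectations directly, via the general Hanson--Wright inequality for Lipschitz-concentrated vectors (Lemma~\ref{lemma:newlemma}). The operator-norm route is a legitimate tool elsewhere but would not automatically yield the fine-grained bilinear-form control needed here, and it is not obvious it could be pushed through under the assumptions at the stated probability level. Second, you do not address the remaining piece of the numerator decomposition, $\mu_1^2(Vz_1^m)^\top\Ppm^\perp(Vz_1)$, which the paper shows (Lemma~\ref{lemma:RFlast}) concentrates around the nonnegative quantity $\mu_1^2\norm{\Ppm^\perp V_y y_1}_2^2$ --- the contribution of the shared noise block. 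Its sign is what justifies dropping it when passing to the lower bound \eqref{eq:LBRF}; without observing this, your chain of approximations does not cleanly yield a one-sided inequality. The upper bound $\gamma_{\textup{RF}}\le 1$ also does not follow from the closed form you posit (which is not established); the paper obtains it by two applications of Cauchy--Schwarz on the expectation ratio.
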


\simone{Theorem \ref{thm:RF} proves the concentration of the feature alignment $\mathcal F_{\textup{RF}}(z^s_1, z_1)$ to a constant $\gamma_{\textup{RF}}$ between
$\sum_{l = 2}^{+\infty} \mu_l^2 \alpha^l/\sum_{l = 1}^{+\infty} \mu_l^2>0$ and $1$. The lower bound increases with $\alpha$ (as expected, since $\alpha$ is the fraction of the input given by the spurious feature), and it depends in a non-trivial way on the activation 
via its Hermite coefficients.} \simone{This result validates the argument in \eqref{eq:pg}, where we replaced the feature alignment $\mathcal F_{\textup{RF}}(z^s_1, z_1)$ with a constant $\gamma_{\textup{RF}} > 0$. Thus, \eqref{eq:pg2} now reads
\begin{equation}\label{eq:pg2RF}
    \textup{Cov}\left(f_{\textup{RF}}(z^s_1, \theta^*), g_1\right)  \leq \gamma_{\textup{RF}}  \sqrt{ \mathcal R_{Z_{-1}}} \sqrt{\textup{Var}\left(g_1 \right)},
\end{equation}
which quantifies the memorization of spurious features in terms of the generalization error and the constant $\gamma_{\textup{RF}}$.}

These effects are clearly displayed in Figure \ref{fig:rf} for binary classification 
on synthetic (first two plots) and image (last two plots) datasets. Specifically, as the number of samples $N$ increases, the test accuracy increases and the spurious accuracy (obtained by querying the trained model with the spurious sample $f(z_i^s, \theta^*)$) decreases.  Furthermore, for the synthetic dataset, while the test accuracy does not depend on $\alpha$ or the activation function, the spurious accuracy increases with $\alpha$ and by taking an activation function with dominant low-order Hermite coefficients, as predicted by \eqref{eq:LBRF}.
For an additional experiment highlighting the dependence on $\alpha$, we refer the reader to Figure \ref{fig:varying_alpha} in Appendix \ref{app:exp}.


\paragraph{Proof sketch.} 
We set
 \begin{equation}
        \gamma_{\textup{RF}} := \frac{\E_{z_1, z_1^s} [ \varphi_{\textup{RF}}(z_1^s)^\top \Ppm^\perp \varphi_{\textup{RF}}(z_1)]}{\E_{z_1} [ \|\Ppm^\perp \varphi_{\textup{RF}}(z_1)\|_2^2 ]}.
    \end{equation}
Here, $\Ppm$ is the projector over $\Span \{\Rows (\Phi_{\textup{RF}, -1}) \}$ and $\Phi_{\textup{RF}, -1}$ the RF feature matrix after removing the first row.
With this choice, the numerator and denominator of $\gamma_{\textup{RF}}$ equal the expectations of the corresponding quantities appearing in $\mathcal F_{\textup{RF}}(z_1^s, z_1)$. Thus, the concentration result in \eqref{eq:concRF} is obtained from the general form of the Hanson-Wright inequality in \cite{HWconvex}, see Lemma \ref{lemma:newlemma}. 
The upper bound $\gamma_{\textup{RF}} \leq 1$ follows from an application of Cauchy-Schwarz inequality. In contrast, the lower bound is more involved and it is obtained via the three steps below.

\begin{figure*}[!t]
  \begin{center}
    \includegraphics[width=\textwidth]{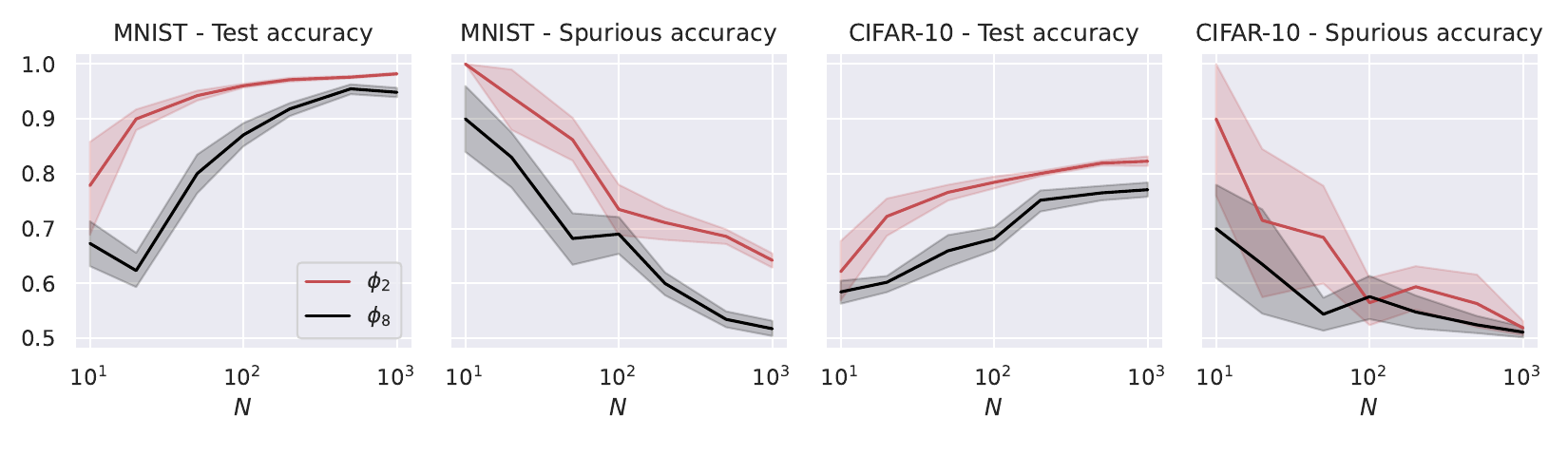}
  \end{center}
  \caption{We consider the NTK model in \eqref{eq:NTKmodel} with $k = 100$, trained on MNIST (digits 1 and 7, first and second plots), and CIFAR-10 (cats and ships, third and fourth plots). We repeat the experiments for activations whose derivatives are $\phi'_2 = h_0 + h_1$ and $\phi'_8 = h_0 + h_7$, where $h_i$ denotes the $i$-th Hermite polynomial (see Appendix \ref{app:Hermite}). The rest of the setup is the same as that of Figure \ref{fig:rf}.} 
  \label{fig:ntk_real}
\end{figure*}

\emph{\underline{Step 1: Centering the feature map $\varrf$.}} We extract the term $\E_V \left[ \phi(V z) \right]$ from the expression of $\mathcal F_{\textup{RF}}$ and 
show 
it can be neglected, due to the specific structure of $\Ppm^\perp$. Specifically, letting $\tvarrf(z) := \varrf(z)  - \E_{V} \left[ \varrf(z) \right]$, we have
\begin{equation}\label{eq:step1}
    \mathcal F_{\textup{RF}}(z_1^s, z_1) \simeq \frac{\tvarrf(z_1^s)^\top \Ppm^\perp \tvarrf(z_1)}{\norm{\Ppm^\perp \tvarrf(z_1)}_2^2} = \frac{\tvarrf(z_1^s)^\top \tvarrf(z_1) - \tvarrf(z_1^s)^\top \Ppm \tvarrf(z_1)}{\norm{\Ppm^\perp\tvarrf(z_1)}_2^2},
\end{equation}
where $\simeq$ denotes an equality up to a $o(1)$ term. This is formalized in Lemma \ref{lemma:PpRFon1}.

\emph{\underline{Step 2: Linearization of the centered feature map $\tvarrf$.}} We consider the terms $\tvarrf(z_1^s), \tvarrf(z_1)$ that multiply $\Ppm$ in the RHS of \eqref{eq:step1}, and we show that they are well approximated by their first-order Hermite expansions ($\mu_1 V z_1^s$ and $\mu_1 V z_1$, respectively). In fact, the  rest of the Hermite series scales at most as $N / d^2$, which is negligible due to Assumption \ref{ass:overparam}. Specifically, Lemma \ref{lemma:Pseeslin} implies
\begin{equation}\label{eq:step2}
     \frac{\tvarrf(z_1^s)^\top \tvarrf(z_1) - \tvarrf(z_1^s)^\top \Ppm \tvarrf(z_1)}{\norm{\Ppm^\perp\tvarrf(z_1)}_2^2} \simeq\frac{\tvarrf(z_1^s)^\top \tvarrf(z_1) - \mu_1 ^2 (V z_1^s)^\top \Ppm (V z_1)}{ \norm{\Ppm^\perp\tvarrf(z_1)}_2^2}.
\end{equation}

\emph{\underline{Step 3: Lower bound in terms of $\alpha$ and $\{\mu_l\}_{l\ge 2}$.}} To conclude, we express the RHS of \eqref{eq:step2} as follows:
\begin{equation}\label{eq:sketchrf}
\begin{aligned}
      & \frac{\tvarrf(z_1^s)^\top \tvarrf(z_1)- \mu_1 ^2 (V z_1^s)^\top (V z_1) + \mu_1 ^2 (V z_1^s)^\top \Ppm^\perp (V z_1)}{\norm{\tvarrf(z_1)}_2^2 - \norm{\Ppm \tvarrf(z_1)}_2^2} \\
     &\gtrsim \frac{\tvarrf(z_1^s)^\top\tvarrf(z_1) -\mu_1 ^2 (V z_1^s)^\top (V z_1)}{\norm{\tvarrf(z_1)}_2^2} \simeq \frac{\sum_{l = 2}^{+\infty} \mu_l^2 \alpha^l}{\sum_{l = 1}^{+\infty} \mu_l^2}>0,
\end{aligned}
\end{equation}
where $\gtrsim$ denotes an inequality up to a $o(1)$ term. As $\Ppm=I-\Ppm^\perp$,
the term in the first line equals the RHS of \eqref{eq:step2}. 
Next, we show that $\mu_1 ^2 (V z_1^s)^\top \Ppm^\perp (V z_1)$ is equal to $\mu_1 ^2 (V [y_1, 0])^\top \Ppm^\perp (V [y_1, 0])$ (which corresponds to the common noise part in $z_1, z_1^s$) plus a vanishing term, see Lemma \ref{lemma:RFlast}. As $\mu_1 ^2 (V [y_1, 0])^\top \Ppm^\perp (V [y_1, 0])\ge 0$, the inequality in the second line follows. The last step is obtained by showing concentration over $V$ of numerator and denominator. The expression on the RHS of \eqref{eq:sketchrf} is strictly positive as $\alpha>0$ and $\phi$ is non-linear by Assumption \ref{ass:activationfunc}.

\section{Main result for NTK features}\label{sec:ntk}

We consider the following two-layer neural network
\begin{equation}\label{eq:linNN}
    f_{\textup{NN}}(z, w) = \sum_{i = 1}^k \phi\left(W_{i:} z\right).
\end{equation}
The hidden layer contains $k$ neurons; $\phi$ is an activation function applied component-wise; 
$W \in \mathbb R^{k \times d}$ denotes the weights of the hidden layer; $W_{i:}$ denotes the $i$-th row of $W$; and we set the $k$ weights of the second layer to $1$. We indicate with $w$ the vector containing the parameters of this model, \emph{i.e.}, $w = [\text{vec}(W)] \in \R^p$, with $p = kd$. We initialize the network with standard (\emph{e.g.}, He’s or LeCun’s) initialization, \emph{i.e.}, $[W_0]_{i,j} \distas{}_{\rm i.i.d.} \mathcal N(0, 1 / d)$.

The \emph{NTK regression model} takes the form
\begin{equation}\label{eq:NTKmodel}
  f_{\textup{NTK}}(z, \theta) = \varntk(z)^\top \theta,  \qquad \varntk(z) = \nabla_w f_{\textup{NN}}(z, w) |_{w = w_0}=z \otimes \phi'(W_0 z),
\end{equation}
where $\nabla_w f_{\textup{NN}}$ in the second line is computed via the chain rule. The vector of trainable parameters
is $\theta \in \R^p$, with $p = kd$, which is initialized with $\theta_0 = w_0= [\text{vec}(W_0)]$. We note that
$f_{\textup{NTK}}(z, \theta)$ is the linearization of $f_{\textup{NN}}(z, w)$ around the initial point $w_0$ \cite{bartlett2021deep,JacotEtc2018}, and the model in \eqref{eq:NTKmodel} corresponds to training the two-layer network \eqref{eq:linNN} in the lazy regime \cite{chizat2019lazy,oymak2019overparameterized}. As such, it has received significant attention in the theoretical literature, see \emph{e.g.} \cite{bombari2023universal,dohmatob2022non, montanari2022interpolation}. 

\begin{assumption}[Over-parameterization and topology]\label{ass:overparamntk}
\begin{equation}\label{eq:overparamntk}
        N \log^8 N = o(kd), \qquad N > d, \qquad k = \bigO{d}.
    \end{equation}
\end{assumption}
The first condition is the smallest (up to $\log$ factors) over-parameterization that guarantees interpolation \cite{bombari2022memorization}. The second condition is rather mild (it is easily satisfied by standard datasets) and purely technical. 
The third condition is required to lower bound the smallest eigenvalue of the kernel induced by the feature map \eqref{eq:NTKmodel}, and a stronger requirement, \emph{i.e.}, the strict inequality $k<d$, has appeared in prior work \cite{QuynhICML2017, QuynhICML2018,QuynhMarco2020}. 

\vspace{0.3em}

\begin{figure*}[!t]
  \begin{center}
    \includegraphics[width=\textwidth]{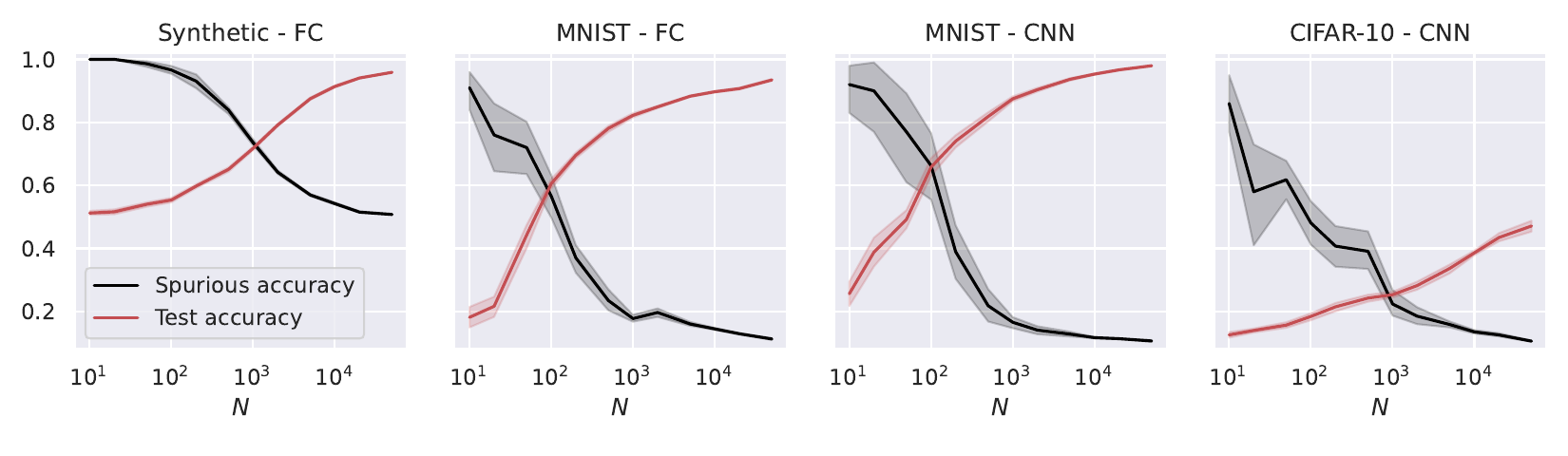}
  \end{center}
  \caption{Test and spurious accuracies as a function of the number of training samples $N$, for a fully connected (FC, first two plots), \simone{and a small 
  convolutional neural network (CNN, last two plots)}. In the first plot, we use synthetic (Gaussian) data with $d = 1000$, and the labeling function is $g(x) = \textup{sign}(u^\top x)$. 
  As we consider binary classification, the accuracy of random guessing is $0.5$. The other plots use subsets of the MNIST and CIFAR-10 datasets, with an external layer of noise added to images, see Figure \ref{fig:cat}. 
  As we consider $10$ classes, the accuracy of random guessing is $0.1$. We plot the average over 10 independent trials and the confidence band at 1 standard deviation.}
  \label{fig:nn}
\end{figure*}

\begin{assumption}[Activation function]\label{ass:activationfuncntk}
    The activation function $\phi$ is 
    non-linear and its derivative $\phi'$ is 
    $L$-Lipschitz. 
\end{assumption}

\vspace{0.3em}

We denote by $\mu'_l$ the $l$-th Hermite coefficient of $\phi'$. We remark that the invertibility of the kernel $\kntk$ induced by the feature map \eqref{eq:NTKmodel} 
follows from Lemma \ref{lemma:evminntk}. At this point, we are ready to state our main result for the NTK model, whose full proof is contained in Appendix \ref{app:NTK}.

\vspace{0.3em}

\begin{theorem}\label{thm:mainntk}
Let Assumptions \ref{ass:datadist}, \ref{ass:overparamntk}, and \ref{ass:activationfuncntk} 
hold, and let $x \sim \mathcal P_X$ be sampled independently from everything. Consider querying the trained NTK model \eqref{eq:NTKmodel} with $z_1^s = [x, y_1]$. Let $\alpha = d_y / d \in (0, 1)$ and $\mathcal F_\textup{NTK}(z_1^s, z_1)$ be the feature alignment between $z_1^s$ and $z_1$, as defined in \eqref{eq:featalign}. Then, \simone{letting $\mu'_l$ be the $l$-th Hermite coefficient of the derivative of the activation $\phi'$ (see Appendix \ref{app:Hermite}),} we have \vspace{0.2em}
\begin{equation}
\left| \mathcal F_{\textup{NTK}}(z_1^s, z_1) - \gamma_{\textup{NTK}} \right| = o(1), \qquad \mbox{where }\,\,\, 0 < \gamma_{\textup{NTK}} := \alpha \, \frac{ \sum_{l = 1}^{+\infty} {\mu'_l}^2 \alpha^l }{\sum_{l = 1}^{+\infty} {\mu'_l}^2} < 1, 
\end{equation}
with probability at least $1 - N \exp(-c \log^2 k) - \exp(-c \log^2 N)$ over $Z$, $x$ and $W_0$. 
\end{theorem}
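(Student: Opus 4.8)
The plan is to mirror the three-step strategy used for the RF model, adapted to the tensor-product structure of the NTK feature map $\varntk(z) = z \otimes \phi'(W_0 z)$. As in the RF proof sketch, I would first define the candidate limit as a ratio of expectations,
\[
\gamma_{\textup{NTK}} := \frac{\E_{z_1, z_1^m}\!\left[\varntk(z_1^m)^\top \Ppm^\perp \varntk(z_1)\right]}{\E_{z_1}\!\left[\norm{\Ppm^\perp \varntk(z_1)}_2^2\right]},
\]
where now $\Ppm$ is the projector onto the row span of $\Phi_{\textup{NTK},-1}$, and then establish the concentration $|\mathcal F_{\textup{NTK}}(z_1^m,z_1) - \gamma_{\textup{NTK}}| = o(1)$ by showing that numerator and denominator each concentrate around their expectations. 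Concentration of the numerator should follow from a Hanson--Wright-type bound (as invoked in Lemma \ref{lemma:newlemma}) together with the Lipschitz concentration of $P_X$ and $P_Y$ from Assumption \ref{ass:datadist} and the Lipschitz property of $\phi'$ from Assumption \ref{ass:activationfuncntk}; the denominator is bounded below using the smallest-eigenvalue control on $\kntk$ (Lemma \ref{lemma:evminntk}), which is exactly what Assumption \ref{ass:overparamntk} secures. The probability bound of the form $1 - N\exp(-c\log^2 k) - \exp(-c\log^2 N)$ reflects a union bound over the $N$ rows (for event related to $W_0$ acting on each $z_i$) plus the data/test concentration event.

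For the explicit value of $\gamma_{\textup{NTK}}$, the key is a linearization step analogous to Steps 1--2 of the RF sketch, but cleaner because the Kronecker factor $z$ is untouched by $\phi'$. I would write $\varntk(z_1^m)^\top \varntk(z_1) = \langle z_1^m, z_1\rangle \cdot \phi'(W_0 z_1^m)^\top \phi'(W_0 z_1)$, and similarly decompose the term involving $\Ppm$. The inner product $\phi'(W_0 z_1^m)^\top \phi'(W_0 z_1)$ concentrates, by the Hermite expansion of $\phi'$, around $k \sum_{l\ge 0} {\mu'_l}^2 \langle z_1^m, z_1\rangle^l / d^l$; since $z_1^m = [x, y_1]$ and $z_1 = [x_1, y_1]$ with $x \perp x_1$ and $\norm{x}_2^2 = \norm{x_1}_2^2 = d_x$, $\norm{y_1}_2^2 = d_y$, the overlap $\langle z_1^m, z_1\rangle = \langle y_1, y_1\rangle + \langle x, x_1\rangle \approx d_y = \alpha d$ (the cross term $\langle x, x_1\rangle$ being $O(\sqrt{d_x})$, hence negligible at scale $d$). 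Thus the relevant power series evaluates at argument $\alpha$, giving $\sum_l {\mu'_l}^2 \alpha^l$ in the numerator after normalization. The extra factor of $\alpha$ in $\gamma_{\textup{NTK}}$ comes from the scalar prefactor $\langle z_1^m, z_1\rangle / \langle z_1, z_1\rangle \approx \alpha d / d = \alpha$, since $\norm{z_1}_2^2 = d$ whereas $\langle z_1^m, z_1\rangle \approx \alpha d$. The denominator contributes $\norm{\varntk(z_1)}_2^2 \approx d \cdot k \sum_l {\mu'_l}^2$, and one shows, as in the RF Step 3, that the $\Ppm$-projections of the two NTK vectors have negligible overlap beyond what the common noise part $[y_1,0]\otimes\phi'(\cdot)$ contributes — but here, unlike RF, I expect the leading linear-in-$\phi'$ correction (the $\mu'_0$ constant term plays the role the $\mu_1$ term played for RF) to be handled by the same "$P$ sees only the linearization" lemma, showing $\Ppm$ acting on $\phi'(W_0 z)$ reduces to $\Ppm$ acting on $\mu'_0 \mathbf{1} + \mu'_1 W_0 z$, whose contribution is either cancelled or pushed into the $o(1)$.

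The main obstacle, I expect, is controlling the interaction between the projector $\Ppm$ and the Kronecker structure: $\Ppm$ lives on $\R^{kd}$ and is the projector onto the span of $N-1$ vectors $z_i \otimes \phi'(W_0 z_i)$, which do not factor as a tensor of two projectors, so the clean scalar-times-inner-product decomposition used for $\varntk(z_1^m)^\top \varntk(z_1)$ does not directly survive after inserting $\Ppm$. The argument must show that $\varntk(z_1^m)^\top \Ppm \varntk(z_1)$ is, up to $o(1)$ relative error, governed only by the components of $z_1^m, z_1$ inside the "active" subspace spanned by the other samples' features, and that the part orthogonal to the shared noise direction is killed. This is where I anticipate the bulk of the technical work: bounding $\norm{\Ppm^\perp \varntk(z_1)}_2^2$ from below (needs $\evmin{\kntk}$, i.e. Lemma \ref{lemma:evminntk} and Assumption \ref{ass:overparamntk}'s $N\log^8 N = o(kd)$), and showing the cross-term $\varntk(z_1^m)^\top \Ppm \varntk(z_1)$ minus its "noise-only" surrogate is $o(d k)$ — presumably via a sequence of lemmas estimating $\norm{\Ppm \varntk(z)}_2$ and the off-diagonal Gram entries $\varntk(z_i)^\top \varntk(z_j)$ for $i \ne j$, exploiting again the Lipschitz concentration of the data and the $k = \bigO{d}$ scaling to keep error terms under control.
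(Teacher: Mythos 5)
Your proposal captures the broad shape (center $\phi'$, control the $\Ppm$ contributions, concentrate the free inner products), and several observations are sound: the factorization $\varntk(z_1^m)^\top\varntk(z_1) = \langle z_1^m, z_1\rangle\,\phi'(W_0 z_1^m)^\top\phi'(W_0 z_1)$, the role of $\evmin{\kntk}$ in lower-bounding the denominator, the overlap $\langle z_1^m, z_1\rangle\approx\alpha d$. But there is a genuine gap in the centering step, and the imported RF machinery is the wrong tool for the rest.

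The centering step is not a ``$\Ppm$ sees only the linearization'' reduction, and no such reduction would work here. In NTK the vector to be annihilated by $\Ppm^\perp$ is $z\otimes\mu_0'\mathbf{1}_k$, which depends on $z$ through its Kronecker factor. Contrast RF, where $\E_V[\phi(Vz)]=\mu_0\mathbf{1}_k$ is a fixed $z$-independent vector that can be realized as a weighted average of rows of $\Phi_{-1}$ (this is what makes Lemma~\ref{lemma:PpRFon1} go through). For NTK you must instead argue that $z\otimes\mathbf{1}_k$ sits approximately in $\Span\{\Rows(\Phi_{-1})\}$, which requires writing $z$ as a combination of the rows of $Z_{-1}$. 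But those rows need not span $\R^d$, and this is exactly the obstruction the paper addresses: it perturbs $Z_{-1}\to Z_{-1}+\delta A$ so the rows generically span $\R^d$ (Lemma~\ref{lemma:invertible}), proves the centering claim on the perturbed projector (Lemma~\ref{lemma:Ppntkon1pert}), and pulls it back via a continuity-in-$\delta$ argument (Lemmas~\ref{lemma:continuous} and~\ref{lemma:Ppntkon1}). Your sketch names none of this, instead folding it into the same Hermite-truncation lemma used for RF — which does not exist for NTK and is not what removes $\mu'_0$. Relatedly, your formula for the numerator retains the $l=0$ term; the reason $\gamma_{\textup{NTK}}$ starts at $l=1$ is precisely that the centering step kills the $\mu'_0$ component via $\Ppm^\perp$, not a cancellation inside the $\Ppm$ cross term.

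Once centered, the NTK argument is actually \emph{simpler} than RF, not a close copy of it: there is no linearization lemma (no analogue of Lemma~\ref{lemma:Pseeslin}), no ``noise-only surrogate'' for the $\Ppm$ cross term, and no Hanson–Wright. The Kronecker factor gives each entry $\varphi(z_{i+1})^\top\tilde\varphi(z) = (z_{i+1}^\top z)\cdot\phi'(Wz_{i+1})^\top\tilde\phi'(Wz)$ an extra $O(\sqrt d\log N)$ smallness from $z_{i+1}^\top z$, so Lemma~\ref{lemma:Pntksmall} directly gives $\norm{\Phi_{-1}\tilde\varphi(z)}_2 = o(dk)$; combined with $\evmin{K_{-1}}=\Omega(dk)$ this makes \emph{both} $|\tilde\varphi(z_1^m)^\top\Ppm\tilde\varphi(z_1)|$ and $\norm{\Ppm\tilde\varphi(z_1)}_2^2$ vanish relative to $dk$ (Lemma~\ref{lemma:ntkpenultimatestep}). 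The closed form then drops out of Lemma~\ref{lemma:ntklaststep} by elementary concentration of the free inner products. Your plan of defining $\gamma_{\textup{NTK}}$ as a ratio of expectations and invoking Hanson–Wright to concentrate numerator and denominator separately would not be wrong, but it reintroduces the RF machinery in a place where the structure of the NTK feature map lets you avoid it.
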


\simone{Theorem \ref{thm:RF} proves the concentration of the feature alignment $\mathcal F_{\textup{NTK}}(z^s_1, z_1)$ to a constant $\gamma_{\textup{NTK}}$, which has an exact expression depending on
$\alpha$ and the Hermite coefficients of the derivative of the activation.}
\simone{This result validates the argument in \eqref{eq:pg}, where we replaced the feature alignment $\mathcal F_{\textup{NTK}}(z^s_1, z_1)$ with a constant $\gamma_{\textup{NTK}} > 0$. Thus, \eqref{eq:pg2} now reads
\begin{equation}\label{eq:pg2NTK}
    \textup{Cov}\left(f_{\textup{NTK}}(z^s_1, \theta^*), g_1\right)  \leq \gamma_{\textup{NTK}}  \sqrt{ \mathcal R_{Z_{-1}}} \sqrt{\textup{Var}\left(g_1 \right)},
\end{equation}
which quantifies the memorization of spurious features in terms of the generalization error and the constant $\gamma_{\textup{NTK}}$.}

Figure \ref{fig:ntk_real} considers training on MNIST and CIFAR-10, and it shows that the predictions of Theorem \ref{thm:mainntk} also hold for standard datasets: as $N$ increases, the test accuracy improves and the spurious accuracy decreases; considering activations with dominant high-order Hermite coefficients reduces memorization. For additional experiments using the same setup as Figure \ref{fig:rf} and highlighting the dependence on $\alpha$, we refer the reader to Figures \ref{fig:ntk} and \ref{fig:varying_alpha} in Appendix \ref{app:exp}.


\paragraph{Proof sketch.} The argument is more direct than for the RF model since, in this case, we are able to express $\gamma_{\textup{NTK}}$ in closed form. We denote by $\Ppm$ the projector over the span of the rows of the NTK feature matrix $\Phi_{\textup{NTK}, -1}$ without the first row. Then, the \emph{first step} is to \emph{center the feature map} $\varntk$, which gives 
\begin{equation}\label{eq:sketchntk}
\begin{aligned}
     \frac{\varntk(z_1^s)^\top \Ppm^\perp \varntk(z_1)}{\norm{\Ppm^\perp \varntk(z_1)}_2^2} \simeq \frac{\tvarntk(z_1^s)^\top \Ppm^\perp \tvarntk(z_1)}{\norm{\Ppm^\perp \tvarntk(z_1)}_2^2},   
\end{aligned}
\end{equation}
where $\tvarntk(z) := 
z \otimes \left( \phi'(W_0 z) - \E_{W_0} \left[ \phi'(W_0 z) \right]\right)$. While a similar step appeared in the analysis of the RF model, its implementation for NTK requires a different strategy. In particular, we exploit that the samples $z_1$ and $z_1^s$ are \emph{approximately} contained in the span of the rows of $Z_{-1}$ (see Lemma \ref{lemma:Ppntkon1pert}). As the rows of $Z_{-1}$ may not \emph{exactly} span all $\R^d$, we resort to an \emph{approximation} by adding a small amount of independent noise to every entry of $Z_{-1}$. The resulting perturbed dataset $\bar Z_{-1}$ satisfies $\Span \{ \Rows( \bar Z_{-1}) \} = \R^d$ (see Lemma \ref{lemma:invertible}), and we conclude via a continuity argument with respect to the magnitude of the perturbation (see Lemmas \ref{lemma:continuous} and \ref{lemma:Ppntkon1}).


The \emph{second step} is to upper bound the terms $\left| \tvarntk(z_1^s)^\top \Ppm \tvarntk(z_1) \right|$ and $\norm{\Ppm \tvarntk(z_1)}_2^2$, showing they have \emph{negligible magnitude}, which gives 
\begin{equation}\label{eq:sketchntk2}
\begin{aligned}
       \frac{\tvarntk(z_1^s)^\top \Ppm^\perp \tvarntk(z_1)}{\norm{\Ppm^\perp \tvarntk(z_1)}_2^2}
      \simeq \frac{\tvarntk(z_1^s)^\top  \tvarntk(z_1)}{\norm{\tvarntk(z_1)}_2^2}.  
\end{aligned}
\end{equation}
This is a consequence of the fact that, if $z \sim \mathcal P_Z$ is independent from $Z_{-1}$, then $\tilde \varphi(z)$ is roughly orthogonal to $\Span \{\Rows (\Phi_{\textup{NTK}, -1}) \}$, see Lemma \ref{lemma:ntkpenultimatestep}.

Finally, the \emph{third step} is to show the \emph{concentration} over $W_0$ of the numerator and denominator of the RHS of \eqref{eq:sketchntk2}, see Lemma \ref{lemma:ntklaststep}. This allows us to conclude that
\begin{equation}\label{eq:sketchntk3}
\begin{aligned}
      \frac{\tvarntk(z_1^s)^\top  \tvarntk(z_1)}{\norm{\tvarntk(z_1)}_2^2}   \simeq \alpha \, \frac{ \sum_{l = 1}^{+\infty} {\mu'_l}^2 \alpha^i }{\sum_{l = 1}^{+\infty} {\mu'_l}^2}>0.
\end{aligned}
\end{equation}
The RHS of \eqref{eq:sketchntk3} is strictly positive as $\alpha>0$ and $\phi$ is non-linear by Assumption \ref{ass:activationfuncntk}.

\begin{wrapfigure}{r}{0.6\textwidth}
  \vspace{-0.5em}
  \begin{center}
    \includegraphics[width=0.6\textwidth]{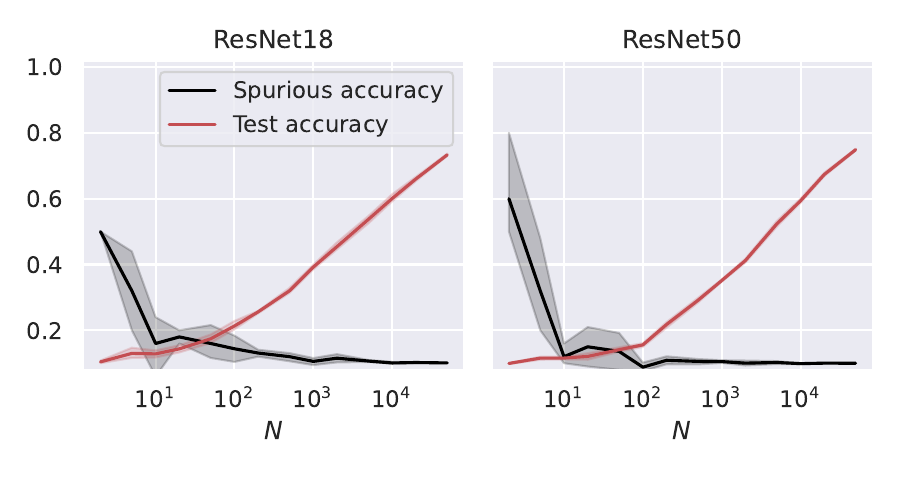}
  \end{center}
  \vspace{-1em}
  \caption{\simone{Test and spurious accuracies as a function of the number of training samples $N$, for two ResNet architectures. We use subsets of the CIFAR-10 dataset, with an external layer of noise added to images, see Figure \ref{fig:cat}. 
  As we consider $10$ classes, the accuracy of random guessing is $0.1$. We plot the average over 10 independent trials and the confidence band at 1 standard deviation.}}
  \label{fig:nn_rebuttal}
\end{wrapfigure}

\paragraph{Discussion. } 
As showed by \eqref{eq:pg2} and \eqref{eq:newiclr}, if 
$\mathcal F_\varphi(z_1, z_1^s)$ converges to 0, spurious correlations are \emph{not} memorized by the model. 
However, Theorems \ref{thm:RF} and \ref{thm:mainntk} prove that, for the RF and NTK model respectively, a strictly positive geometric overlap ($\alpha > 0$) guarantees a strictly positive feature alignment. Thus, these results imply that, as long as the generalization error is not vanishing, spurious features are memorized and, in fact, we quantify the extent to which memorization occurs. 
Remarkably, our experiments on real datasets (see Figure \ref{fig:ntk_real}) show that the effect of the activation function is in line with our predictions: taking $\phi'$ with higher order Hermite coefficients lead to models that are less prone to memorize spurious features. Finally, while we focus on generalized linear \simone{regression}, the interplay between memorization of spurious features and generalization provided by our analysis holds in far more generality and, in particular, it is displayed \simone{by neural networks also capable of feature learning}, see Figures \ref{fig:nn} and \ref{fig:nn_rebuttal}.

\section{Conclusions}

In this work, we present a theoretical framework to quantify the memorization of spurious features. 
Our characterization hinges on \emph{(i)} the classical notion of \emph{stability} of the model w.r.t.\ a training sample, and \emph{(ii)} a novel notion of \emph{feature alignment} $\mathcal F(z^s, z)$ between two samples that share the same spurious feature $y$. By providing a precise analysis of the feature alignment in the prototypical settings of random and NTK features regression, we show that the memorization is proportional to the generalization error, and we characterize the proportionality constant, revealing how it depends on the model and its activation function. Our theoretical predictions are confirmed by numerical experiments on standard datasets (see Figure \ref{fig:ntk_real}) and on different neural network architectures (see Figures \ref{fig:nn} and \ref{fig:nn_rebuttal}).

The approach we put forward is rather general, and  
our results could be extended to cases where the spurious feature $y$ is correlated with the ground-truth label $g$. Another possible extension involves testing the trained model on a new spurious feature $y'$, which is not present in the training set but is correlated with a feature $y$ that has already been seen; \simone{or capturing the role of \emph{simplicity bias} in this phenomenon.} We also remark that 
the formalism introduced by Lemma \ref{lemma:proj} applies to any feature map $\varphi$ (\emph{e.g.}, with multiple fully-connected, convolutional or attention layers). Characterizing the feature alignment of such maps would allow to compare different models and establish which of those is less prone to memorizing spurious correlations. 


\section*{Acknowledgements}

The authors were partially supported by the 2019 Lopez-Loreta prize, and they would like to thank (in alphabetical order) Grigorios Chrysos, Simone Maria Giancola, Mahyar Jafari Nodeh, Christoph Lampert, Marco Miani, GuanWen Qiu, and Peter S\'{u}ken\'{i}k for helpful discussions.

{
\small

\bibliographystyle{plain}
\bibliography{bibliography.bib}

}


\newpage

\appendix

\section{Additional notations and remarks}\label{app:notation}
Given a sub-exponential random variable $X$, let $\|X\|_{\psi_1} = \inf \{ t>0 \,\,: \,\,\mathbb E[\exp(|X|/t)] \le 2 \}$.
Similarly, for a sub-Gaussian random variable, let $\|X\|_{\psi_2} = \inf \{ t>0 \,\,: \,\,\mathbb E[\exp(X^2/t^2)] \le 2 \}$.
We use the analogous definitions for vectors. In particular, let $X \in \mathbb R^n$ be a random vector, then $\subGnorm{X} := \sup_{\norm{u}_2=1} \subGnorm{u^\top X}$ and $\subEnorm{X} := \sup_{\norm{u}_2=1} \subEnorm{u^\top X}$. Notice that if a vector has independent, mean-0, sub-Gaussian (sub-exponential) entries, then it is sub-Gaussian (sub-exponential). This is a direct consequence of Hoeffding's inequality and Bernstein's inequality (see Theorems 2.6.3 and 2.8.2 in \cite{vershynin2018high}).

We say that a random variable or vector respects the Lipschitz concentration property if there exists an absolute constant $c > 0$ such that, for every Lipschitz continuous function $\tau: \RR^d \to \RR$, we have $\E |\tau(X)| < + \infty$, and for all $t>0$,
\begin{equation}\label{eq:deflipconc}
    \PP\left(\abs{\tau(x)- \E_X [\tau(x)]}>t\right) \leq 2e^{-ct^2 / \norm{\tau}_{\Lip}^2}.
\end{equation}

When we state that a random variable or vector $X$ is sub-Gaussian (or sub-exponential), we implicitly mean $\subGnorm{X} = \bigO{1}$, \emph{i.e.} it doesn't increase with the scalings of the problem. Notice that, if $X$ is Lipschitz concentrated, then $X - \E[X]$ is sub-Gaussian.
If $X \in \R$ is sub-Gaussian and $\tau: \R \to \R$ is Lipschitz, we have that $\tau(X)$ is sub-Gaussian as well. Also, if a random variable is sub-Gaussian or sub-exponential, its $p$-th momentum is upper bounded by a constant (that might depend on $p$). 

In general, we indicate with $C$ and $c$ absolute, strictly positive, numerical constants, that do not depend on the scalings of the problem, \emph{i.e.} input dimension, number of neurons, or number of training samples. Their value may change from line to line.

Given a matrix $A$, we indicate with $A_{i:}$ its $i$-th row, and with $A_{:j}$ its $j$-th column. Given a square matrix $A$, we denote by $\evmin{A}$ its smallest eigenvalue. Given a matrix $A$, we indicate with $\sigma_{\min}(A) = \sqrt{\evmin{A^\top A}}$ its smallest singular value, with $\opnorm{A}$ its operator norm (and largest singular value), and with $\norm{A}_F$ its Frobenius norm ($\norm{A}^2_F = \sum_{ij} A_{ij}^2$).

Given two matrices $A, B\in \R^{m\times n}$, we denote by $A \circ B$ their Hadamard product, and by $A \ast B=[(A_{1:}\otimes B_{1:}),\ldots,(A_{m:}\otimes B_{m:})]^\top \in\RR^{m\times n^2}$ their row-wise Kronecker product (also known as Khatri-Rao product). We denote $A^{*2} = A \ast A$. We remark that $\left( A * B \right) \left( A * B \right)^\top = AA^\top \circ BB^\top$. We say that a matrix $A\in \R^{n \times n}$ is positive semi definite (p.s.d.) if it's symmetric and for every vector $v \in \R^n$ we have $v^\top A v \geq 0$.

\subsection{Hermite polynomials}\label{app:Hermite}

In this subsection, we refresh standard notions on the Hermite polynomials. For a more comprehensive discussion, we refer to \cite{booleananalysis}.
The (probabilist's) Hermite polynomials $\{ h_j \}_{j \in \mathbb N}$ are an orthonormal basis for $L^2(\R, \gamma)$, where $\gamma$ denotes the standard Gaussian measure. The following result holds. 

\begin{proposition}[Proposition 11.31, \cite{booleananalysis}]
Let $\rho_1, \rho_2$ be two standard Gaussian random variables, with correlation $\rho \in [-1, 1]$. Then,
\begin{equation}
    \E_{\rho_1, \rho_2} \left[ h_i(\rho_1) h_j(\rho_2) \right] = \delta_{ij} \rho^i,
\end{equation}
where $\delta_{ij} = 1$ if $i = j$, and $0$ otherwise.
\end{proposition}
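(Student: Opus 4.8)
The plan is to prove this via the exponential generating function of the (probabilist's) Hermite polynomials, which is the most direct route. Recall that the orthonormal Hermite polynomials $\{h_n\}_{n\ge 0}$ satisfy $\sum_{n\ge 0}\frac{t^n}{\sqrt{n!}}\,h_n(x) = e^{tx - t^2/2}$ for all $x,t\in\R$, the right-hand side being entire in $t$. I would compute the mixed generating function $F(s,t):=\E_{\rho_1,\rho_2}\big[e^{s\rho_1 - s^2/2}\,e^{t\rho_2 - t^2/2}\big]$ in two ways. First, directly: since $(\rho_1,\rho_2)$ is jointly Gaussian with unit variances and correlation $\rho$, its moment generating function is $\E[e^{s\rho_1+t\rho_2}]=\exp\!\big(\tfrac12(s^2+t^2+2\rho st)\big)$, so that $F(s,t)=e^{-(s^2+t^2)/2}\,\E[e^{s\rho_1+t\rho_2}]=e^{\rho st}=\sum_{n\ge 0}\frac{\rho^n}{n!}\,s^n t^n$. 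Second, by expanding each exponential into its Hermite series and integrating term by term, $F(s,t)=\sum_{i,j\ge 0}\frac{s^i t^j}{\sqrt{i!\,j!}}\,\E[h_i(\rho_1)h_j(\rho_2)]$. Matching the coefficient of $s^i t^j$ in the two expressions: for $i\ne j$ the first form has no such monomial, forcing $\E[h_i(\rho_1)h_j(\rho_2)]=0$; for $i=j=n$ one gets $\tfrac{1}{n!}\E[h_n(\rho_1)h_n(\rho_2)]=\tfrac{\rho^n}{n!}$, i.e.\ $\E[h_n(\rho_1)h_n(\rho_2)]=\rho^n$. Together these read exactly $\E[h_i(\rho_1)h_j(\rho_2)]=\delta_{ij}\rho^i$.

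The one point requiring care — and the main (mild) obstacle — is justifying the term-by-term integration, i.e.\ interchanging $\E$ with the double sum. I would do this by dominated convergence: $e^{s\rho - s^2/2}\in L^2(\gamma)$ since $\E[e^{2s\rho - s^2}]=e^{s^2}<\infty$, so the partial sums of its Hermite expansion converge to it in $L^2(\gamma)$; by Cauchy–Schwarz the product $e^{s\rho_1 - s^2/2}e^{t\rho_2 - t^2/2}$ is integrable, and absolute convergence $\sum_{i,j}\frac{|s|^i|t|^j}{\sqrt{i!\,j!}}\,\E|h_i(\rho_1)h_j(\rho_2)|<\infty$ follows from $\E|h_i(\rho_1)h_j(\rho_2)|\le \|h_i\|_{L^2(\gamma)}\|h_j\|_{L^2(\gamma)}=1$ together with $\sum_i\frac{|s|^i}{\sqrt{i!}}<\infty$, which legitimizes Fubini and the coefficient comparison.

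Alternatively, I would sidestep infinite sums entirely by the noise-operator argument. Write $\rho_2=\rho\rho_1+\sqrt{1-\rho^2}\,\xi$ with $\xi\sim\mathcal{N}(0,1)$ independent of $\rho_1$ (which reproduces the correct joint law), and first establish the single-variable identity $\E_\xi[h_j(ay+b\xi)]=a^j h_j(y)$ whenever $a^2+b^2=1$. This is a one-dimensional Gaussian integral of a polynomial, provable either by the same generating-function computation ($\E_\xi[e^{t(ay+b\xi)-t^2/2}]=e^{(ta)y-(ta)^2/2}$, using $1-b^2=a^2$) or by induction on $j$ using the recurrence $\sqrt{j+1}\,h_{j+1}(x)=x\,h_j(x)-\sqrt{j}\,h_{j-1}(x)$, the relation $h_j'=\sqrt{j}\,h_{j-1}$, and Gaussian integration by parts $\E[\xi g(\xi)]=\E[g'(\xi)]$. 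Then, conditioning on $\rho_1$,
\[
\E[h_i(\rho_1)h_j(\rho_2)]=\E_{\rho_1}\!\big[h_i(\rho_1)\,\E_\xi[h_j(\rho\rho_1+\sqrt{1-\rho^2}\,\xi)]\big]=\rho^j\,\E_{\rho_1}[h_i(\rho_1)h_j(\rho_1)]=\rho^j\delta_{ij},
\]
where the last step uses the $L^2(\gamma)$-orthonormality of $\{h_n\}$, and $\rho^j\delta_{ij}=\rho^i\delta_{ij}$ completes the proof (with the boundary cases $\rho=\pm1$, $b=0$, holding trivially).
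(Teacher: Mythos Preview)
Your proof is correct; both the generating-function route and the noise-operator alternative are standard and valid arguments for this classical identity. Note, however, that the paper does not actually prove this proposition: it is stated in Appendix~\ref{app:Hermite} purely as a quotation of Proposition~11.31 from \cite{booleananalysis}, with no accompanying argument. So there is no ``paper's own proof'' to compare against---you have supplied a full proof where the paper simply invokes a reference. Your generating-function computation is the textbook approach (and is essentially how the cited reference proves it), and the Mehler-kernel/noise-operator alternative you sketch is a clean way to avoid any interchange-of-limits worries by reducing everything to polynomial integrals.
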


The first 5 Hermite polynomials are
\begin{equation}
    h_0(\rho) = 1, \quad h_1(\rho) = \rho, \quad h_2(\rho) = \frac{\rho^2 - 1}{\sqrt 2}, \quad h_3(\rho) = \frac{\rho^3 - 3\rho}{\sqrt 6}, \quad h_4(\rho) = \frac{\rho^4 - 6 \rho^2 + 3}{\sqrt{24}}.
\end{equation}

\begin{proposition}[Definition 11.34, \cite{booleananalysis}]
Every function $\phi \in L^2(\R, \gamma)$ is uniquely expressible as
\begin{equation}
    \phi(\rho) = \sum_{i \in \mathbb N} \mu^\phi_i h_i(\rho),
\end{equation}
where the real numbers $\mu^\phi_i$'s are called the Hermite coefficients of $\phi$, and the convergence is in $L^2(\R, \gamma)$. More specifically, 
\begin{equation}
    \lim_{n \to + \infty} \norm{\left(\sum_{i = 0}^n \mu^\phi_i h_i(\rho)\right) - \phi(\rho)}_{L^2(\R, \gamma)} = 0.
\end{equation}
\end{proposition}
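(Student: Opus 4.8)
The statement is the assertion that $\{h_j\}_{j\in\mathbb N}$ is a complete orthonormal system in the Hilbert space $L^2(\R,\gamma)$ --- with inner product $\inner{f,g}=\int_{\R} fg\,d\gamma$ --- together with its standard Fourier-expansion consequences. The plan is to establish completeness directly and then read off the expansion from abstract Hilbert space theory. Orthonormality is already available: applying the preceding proposition with correlation $\rho=1$ (take $\rho_1=\rho_2$) gives $\E_{\rho\sim\gamma}[h_i(\rho)h_j(\rho)]=\delta_{ij}$. For completeness, note that $h_j$ is a polynomial of degree exactly $j$ with nonzero leading coefficient, so the change of basis between $\{1,\rho,\dots,\rho^n\}$ and $\{h_0,\dots,h_n\}$ is triangular and invertible; hence $\Span\{h_j:j\in\mathbb N\}$ coincides with the space $\mathcal P$ of all polynomials, and it suffices to prove that $\mathcal P$ is dense in $L^2(\R,\gamma)$. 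Equivalently, one must show that the only $f\in L^2(\R,\gamma)$ satisfying $\int_{\R} f(\rho)\,\rho^n\,d\gamma(\rho)=0$ for every $n\ge 0$ is $f=0$ $\gamma$-almost everywhere.

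This density is the heart of the argument. Given such an $f$, define $F:\CC\to\CC$ by $F(z)=\int_{\R} f(\rho)\,e^{z\rho}\,d\gamma(\rho)$. Using $|e^{z\rho}|\le e^{|z|\,|\rho|}$ together with Cauchy--Schwarz against the Gaussian weight (and the fact that $\gamma$ is a probability measure, so $\int e^{2|z||\rho|}\,d\gamma<\infty$ for every $z$), the integral converges absolutely and locally uniformly in $z$, so $F$ is entire; differentiating under the integral sign --- justified by the same Gaussian domination on compact sets of $z$ --- yields $F^{(n)}(0)=\int_{\R} f(\rho)\,\rho^n\,d\gamma(\rho)=0$ for all $n$, whence $F\equiv 0$. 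Specializing to $z=it$ with $t\in\R$ shows that the Fourier transform of the finite measure with Lebesgue density $f(\rho)\,e^{-\rho^2/2}$ vanishes identically, so $f(\rho)\,e^{-\rho^2/2}=0$ for Lebesgue-a.e.\ $\rho$, i.e.\ $f=0$ $\gamma$-a.e. Hence $\Span\{h_j\}$ is dense in $L^2(\R,\gamma)$.

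A dense orthonormal family in a Hilbert space is an orthonormal basis, so for every $\phi\in L^2(\R,\gamma)$ the coefficients $\mu^\phi_i:=\inner{\phi,h_i}=\int_{\R}\phi(\rho)h_i(\rho)\,d\gamma(\rho)$ obey $\sum_i(\mu^\phi_i)^2=\norm{\phi}^2_{L^2(\R,\gamma)}<\infty$, and the partial sums $\sum_{i=0}^n\mu^\phi_i h_i$ converge to $\phi$ in $L^2(\R,\gamma)$, which is exactly the stated limit. Uniqueness is immediate: if $\phi=\sum_i a_i h_i$ with convergence in $L^2(\R,\gamma)$, then continuity of the inner product gives $a_j=\inner{\phi,h_j}=\mu^\phi_j$ for every $j$. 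The only delicate point in the whole proof is the density step above --- concretely, justifying the analyticity of $F$ and the two interchanges (integration with differentiation, and integration with the limit defining the Fourier transform) --- and all of these follow from the fact that the Gaussian density $d\gamma$ decays fast enough to dominate $e^{z\rho}$ uniformly on compact $z$-sets; the remainder is standard Hilbert-space bookkeeping.
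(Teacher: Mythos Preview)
Your proof is correct and follows the classical route to completeness of the Hermite system: reduce to density of polynomials in $L^2(\R,\gamma)$, then kill any orthogonal $f$ by showing that its Gaussian-weighted Laplace transform $F(z)=\int f(\rho)e^{z\rho}\,d\gamma(\rho)$ is entire with all derivatives vanishing at the origin, hence identically zero, and conclude via injectivity of the Fourier transform on $L^1$. The justifications you flag (analyticity of $F$, differentiation under the integral, $f e^{-\rho^2/2}\in L^1(\R)$) are all sound thanks to Gaussian decay.

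There is nothing to compare against, however: the paper does not prove this proposition. It is quoted verbatim from the reference \cite{booleananalysis} (labelled ``Definition 11.34'') as background material in the appendix on Hermite polynomials, with no accompanying argument. So your write-up is not an alternative to the paper's proof but a self-contained derivation of a result the authors simply import.
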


This readily implies the following result.
\begin{proposition}
    Let $\rho_1, \rho_2$ be two standard Gaussian random variables with correlation $\rho \in [-1, 1]$, and let $\phi, \tau \in L^2(\R, \gamma)$. Then,
\begin{equation}
    \E_{\rho_1, \rho_2} \left[ \phi(\rho_1) \tau(\rho_2) \right] = \sum_{i \in \mathbb N} \mu^\phi_i \mu^\tau_i \rho^i.
\end{equation}
\end{proposition}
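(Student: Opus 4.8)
The plan is to reduce the identity to the finite-dimensional case already recorded in the second proposition of this subsection, namely $\E_{\rho_1,\rho_2}[h_i(\rho_1)h_j(\rho_2)] = \delta_{ij}\rho^i$, and then pass to the limit. First I would write the Hermite expansions $\phi = \sum_{i\in\mathbb N}\mu^\phi_i h_i$ and $\tau = \sum_{j\in\mathbb N}\mu^\tau_j h_j$, which converge in $L^2(\R,\gamma)$ by the preceding proposition, and set $\phi_n := \sum_{i=0}^{n}\mu^\phi_i h_i$, $\tau_n := \sum_{j=0}^{n}\mu^\tau_j h_j$. For each finite $n$, bilinearity of the expectation together with $\E[h_i(\rho_1)h_j(\rho_2)] = \delta_{ij}\rho^i$ gives the exact identity
\[
    \E_{\rho_1,\rho_2}\!\left[\phi_n(\rho_1)\tau_n(\rho_2)\right] = \sum_{i=0}^{n}\mu^\phi_i\mu^\tau_i\rho^i .
\]

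Next I would control the error between this truncated quantity and $\E[\phi(\rho_1)\tau(\rho_2)]$. Writing $\E[\phi(\rho_1)\tau(\rho_2)] - \E[\phi_n(\rho_1)\tau_n(\rho_2)] = \E[(\phi-\phi_n)(\rho_1)\tau(\rho_2)] + \E[\phi_n(\rho_1)(\tau-\tau_n)(\rho_2)]$, applying the Cauchy--Schwarz inequality to each term, and using that $\rho_1$ and $\rho_2$ are each marginally standard Gaussian (so $\E[g(\rho_i)^2] = \norm{g}_{L^2(\R,\gamma)}^2$ for any $g\in L^2(\R,\gamma)$), I get
\[
    \left|\E[\phi(\rho_1)\tau(\rho_2)] - \E[\phi_n(\rho_1)\tau_n(\rho_2)]\right| \le \norm{\phi-\phi_n}_{L^2(\R,\gamma)}\norm{\tau}_{L^2(\R,\gamma)} + \norm{\phi_n}_{L^2(\R,\gamma)}\norm{\tau-\tau_n}_{L^2(\R,\gamma)} .
\]
Since $\norm{\phi_n}_{L^2(\R,\gamma)} \le \norm{\phi}_{L^2(\R,\gamma)}$ by Parseval's identity (a consequence of $\{h_i\}$ being an orthonormal basis) and $\norm{\phi-\phi_n}_{L^2(\R,\gamma)},\norm{\tau-\tau_n}_{L^2(\R,\gamma)}\to 0$ as $n\to\infty$, the right-hand side vanishes, so $\E[\phi(\rho_1)\tau(\rho_2)] = \lim_{n\to\infty}\sum_{i=0}^{n}\mu^\phi_i\mu^\tau_i\rho^i$.

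Finally I would observe that the series $\sum_{i\in\mathbb N}\mu^\phi_i\mu^\tau_i\rho^i$ converges absolutely: since $|\rho|\le 1$ and the coefficient sequences lie in $\ell^2$ (with norms $\norm{\phi}_{L^2(\R,\gamma)}$ and $\norm{\tau}_{L^2(\R,\gamma)}$ by Parseval), Cauchy--Schwarz gives $\sum_i|\mu^\phi_i\mu^\tau_i||\rho|^i \le \sum_i|\mu^\phi_i||\mu^\tau_i| \le \norm{\phi}_{L^2(\R,\gamma)}\norm{\tau}_{L^2(\R,\gamma)}$, so the limit above equals $\sum_{i\in\mathbb N}\mu^\phi_i\mu^\tau_i\rho^i$, which is the claim. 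There is essentially no real obstacle here; the only point requiring care — and the sole reason the statement is not literally immediate — is the interchange of the expectation with the infinite sum, which is exactly what the Cauchy--Schwarz bound on the bilinear form takes care of. Equivalently, one may phrase the whole argument as: both $(\phi,\tau)\mapsto\E[\phi(\rho_1)\tau(\rho_2)]$ and $(\phi,\tau)\mapsto\sum_i\mu^\phi_i\mu^\tau_i\rho^i$ are bounded bilinear forms on $L^2(\R,\gamma)\times L^2(\R,\gamma)$ that coincide on the dense subspace spanned by the Hermite polynomials, hence coincide everywhere.
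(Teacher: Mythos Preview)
Your proof is correct and matches the paper's approach: the paper does not give a detailed argument at all, stating only that the result ``readily'' follows from the two preceding propositions (the identity $\E[h_i(\rho_1)h_j(\rho_2)]=\delta_{ij}\rho^i$ and the $L^2(\R,\gamma)$-convergence of the Hermite expansion), and your write-up is precisely the standard fleshing-out of that implication via truncation and Cauchy--Schwarz.
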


\section{Proof of Lemma \ref{lemma:proj}}\label{app:proj}

We start by refreshing some useful notions of linear algebra. 
Let $A \in \R^{N \times p}$ be a matrix, with $p \geq N$, and $A_{-1} \in \R^{(N-1) \times p}$ be obtained from $A$ after removing the first row. We assume $AA^\top$ to be invertible, \emph{i.e.,} the rows of $A$ are linearly independent. Thus, also the rows of $A_{-1}$ are linearly independent, implying that $A_{-1}A_{-1}^\top$ is invertible as well. We indicate with $P_A \in \R^{p \times p}$ the projector over $\Span \{ \Rows (A) \}$, and we correspondingly define $P_{A_{-1}} \in \R^{p \times p}$. As $AA^\top$ is invertible, we have that $\textrm{rank}(A) = N$.

By singular value decomposition, we have $A = UDO^\top$, where $U \in \R^{N \times N}$ and $O \in \R^{p \times p}$ are orthogonal matrices, and $D \in \R^{N \times p}$ contains the (all strictly positive) singular values of $A$ in its \enquote{left} diagonal, and is 0 in every other entry. Let us define $O_1 \in \R^{N \times p}$ as the matrix containing the first $N$ rows of $O^\top$. This notation implies that if $O_1 u = 0$ for $u \in \R^ p$, then $A u =0$, \emph{i.e.}, $u \in \Span \{ \Rows (A) \}^\perp$. The opposite implication is also true, which implies that $\Span \{ \Rows (A) \} = \Span \{ \Rows (O_1) \}$. As the rows of $O_1$ are orthogonal, we can then write 
\begin{equation}
    P_A = O_1^\top O_1.
\end{equation}

We define $D_s \in \R^{N \times N}$, as the square, diagonal, and invertible matrix corresponding to the first $N$ columns of $D$. Let's also define $I_N \in \R^{p \times p}$ as the matrix containing 1 in the first $N$ entries of its diagonal, and 0 everywhere else. We have
\begin{equation}\label{eq:projform}
\begin{aligned}
    P_A =& O_1^\top O_1 = O I_N O^\top \\
    =& O D^\top  D_s^{-2} D O^\top = O D^\top U^\top U D_s^{-2} U^\top U D O^\top \\
    =& A^\top \left( U D_s^2 U^\top \right)^{-1} A = A^\top \left( U D O^\top O D^\top U^\top \right)^{-1} A \\
    =& A^\top \left( AA^\top \right)^{-1} A \equiv A^+A,
\end{aligned}
\end{equation}
where $A^+$ denotes the Moore-Penrose inverse.

Notice that this last form enables us to easily derive
\begin{equation}\label{eq:leaveoneouttrick}
    P_{A_{-1}} A^+ v = A_{-1}^+ A_{-1} A^+ v = A_{-1}^+ I_{-1} A A^+ v =  A_{-1}^+ I_{-1} v = A_{-1}^+ v_{-1},
\end{equation}
where $v \in \R^N$, $I_{-1} \in \R^{(N - 1) \times N}$ is the $N \times N$ identity matrix without the first row, and $v_{-1} \in \R^{N-1}$ corresponds to $v$ without its first entry.

\begin{lemma}\label{lemma:dendiv0}
Let $\Phi \in \R^{N \times k}$ be a matrix whose first row is denoted as $\varphi(z_1)$. Let $\Phi_{-1} \in \R^{(N-1) \times k}$ be the original matrix without the first row, and let $\Ppm$ be the projector over the \textup{span} of its rows. Then,
\begin{equation}
    \norm{\Ppm^\perp \varphi(z_1)}^2_2 \geq \evmin{\Phi \Phi^\top}.
\end{equation}
\end{lemma}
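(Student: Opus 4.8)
The plan is to lower-bound $\norm{\Ppm^\perp \varphi(z_1)}_2^2$ by relating the removal of a single row to the smallest singular value of the full matrix $\Phi$. First I would observe that $\Ppm^\perp \varphi(z_1)$ is the residual of $\varphi(z_1)$ after projecting onto $\Span\{\Rows(\Phi_{-1})\}$, so $\norm{\Ppm^\perp \varphi(z_1)}_2 = \min_{a \in \R^{N-1}} \norm{\varphi(z_1) - \Phi_{-1}^\top a}_2$, i.e. it is the distance from $\varphi(z_1)$ to the row span of $\Phi_{-1}$. Equivalently, writing $b = [1, -a^\top]^\top \in \R^N$, this equals $\min_{b : b_1 = 1} \norm{\Phi^\top b}_2$, since $\Phi^\top b = \varphi(z_1) - \Phi_{-1}^\top a$ when $b_1 = 1$.

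Next I would relax the constraint $b_1 = 1$ to $\norm{b}_2 \ge 1$: any $b$ with $b_1 = 1$ satisfies $\norm{b}_2 \ge 1$, so
\begin{equation}
    \norm{\Ppm^\perp \varphi(z_1)}_2^2 = \min_{b : b_1 = 1} \norm{\Phi^\top b}_2^2 \ge \min_{b : \norm{b}_2 \ge 1} \norm{\Phi^\top b}_2^2 = \min_{\norm{b}_2 = 1} \norm{\Phi^\top b}_2^2 = \sigma_{\min}(\Phi^\top)^2 = \evmin{\Phi\Phi^\top},
\end{equation}
where the second-to-last equality uses that $\norm{\Phi^\top b}_2^2$ scales quadratically in $\norm{b}_2$ so the minimum over $\norm{b}_2 \ge 1$ is attained on the unit sphere (using that $\Phi\Phi^\top$ is invertible, hence $\Phi^\top b \ne 0$ for $b \ne 0$), and the last equality is the variational characterization of the smallest eigenvalue of the p.s.d. matrix $\Phi\Phi^\top = (\Phi^\top)^\top \Phi^\top$. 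This chain gives exactly the claimed bound.

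I do not anticipate a serious obstacle here — the argument is a short variational computation. The only point requiring a little care is the first identity, namely that projecting out the row span of $\Phi_{-1}$ and then taking the norm can be rewritten as a constrained minimization over vectors $b$ with first coordinate $1$; this is just the definition of orthogonal projection as the best approximation, combined with the fact that $\Span\{\Rows(\Phi_{-1})\} = \{\Phi_{-1}^\top a : a \in \R^{N-1}\}$ and that prepending the fixed first row $\varphi(z_1)$ corresponds to fixing $b_1 = 1$ in $\Phi^\top b$. Everything else is the standard relaxation-to-the-sphere trick and the Rayleigh quotient characterization of $\evmin{\Phi\Phi^\top}$, both of which are elementary given that $\Phi\Phi^\top$ is invertible as assumed in the statement.
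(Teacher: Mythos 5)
Your proof is correct and takes essentially the same route as the paper: both identify $\Ppm^\perp\varphi(z_1)$ with $\Phi^\top u$ for a vector $u$ whose first coordinate is $1$ (hence $\norm{u}_2\ge 1$) and then apply the Rayleigh-quotient bound $\norm{\Phi^\top u}_2^2\ge \evmin{\Phi\Phi^\top}\norm{u}_2^2$; the paper just instantiates the optimal $u$ explicitly rather than phrasing it as a minimization. One small remark: the invertibility of $\Phi\Phi^\top$ is not actually needed for the relaxation-to-the-sphere step (the quadratic scaling argument works regardless, and the bound is trivial when $\evmin{\Phi\Phi^\top}=0$, as the paper notes).
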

\begin{proof}
    If $\evmin{\Phi \Phi^\top} = 0$, the thesis becomes trivial. Otherwise, we have that $\Phi \Phi^\top$, and therefore $\Phi_{-1} \Phi_{-1}^\top$, are invertible. 
    
    Let $u\in \R^N$ be a vector, such that its first entry $u_1 = 1$. We denote with $u_{-1} \in \R^{N-1}$ the vector $u$ without its first component, \emph{i.e.} $u = [1, u_{-1}]$. We have
    \begin{equation}\label{eq:dendiv0}
        \norm{\Phi^\top u}_2^2 \geq \evmin{\Phi \Phi^{\top}} \norm{u}_2^2 \geq \evmin{\Phi \Phi^\top}.
    \end{equation}
    Setting $u_{-1} = - \left( \Phi_{-1}\Phi_{-1}^\top \right)^{-1} \Phi_{-1} \varphi(z_1)$, we get
    \begin{equation}
        \Phi^\top u = \varphi(z_1) + \Phi_{-1}^\top u_{-1} = \varphi(z_1) - \Ppm \varphi(z_1) = \Ppm^\perp \varphi(z_1).
    \end{equation}
    Plugging this in \eqref{eq:dendiv0}, we get the thesis.
\end{proof}

At this point, we are ready to prove Lemma \ref{lemma:proj}.

\begin{proof}[Proof of Lemma \ref{lemma:proj}]
    We indicate with $\Phi_{-1} \in \R^{(N-1) \times p}$ the feature matrix of the training set $\Phi \in \R^{N \times p}$ \emph{without} the first sample $z_1$. In other words, $\Phi_{-1}$ is equivalent to $\Phi$, without the first row. Notice that since $K = \Phi\Phi^\top$ is invertible, also $K_{-1} := \Phi_{-1} \Phi_{-1}^\top$ is.

    We can express the projector over the span of the rows of $\Phi$ in terms of the projector over the span of the rows of $\Phi_{-1}$ as follows
    \begin{equation}
        \Pp = \Ppm + \frac{\Ppm^\perp \varphi(z_1) \varphi(z_1)^\top \Ppm^\perp}{\norm{\Ppm^\perp \varphi(z_1)}_2^2}.
    \end{equation}
    The above expression is a consequence of the Gram-Schmidt formula, and the quantity at the denominator is different from zero because of Lemma \ref{lemma:dendiv0}, as $K$ is invertible.

    We indicate with $\Phi^+ = \Phi^\top K^{-1}$ the Moore-Penrose pseudo-inverse of $\Phi$. Using \eqref{eq:thetastar}, we can define $\theta_{-1}^* :=  \theta_0 + \Phi_{-1}^+ \left( G_{-1} - f(Z_{-1}, \theta_0) \right)$, \emph{i.e.}, the set of parameters the algorithm would have converged to if trained over $(Z_{-1}, G_{-1})$, the original data-set without the first pair sample-label $(z_1, g_1)$.

    Notice that $\Pp \Phi^\top = \Phi^\top$, as a consequence of \eqref{eq:projform}. Thus, again using \eqref{eq:thetastar}, for any $z$ we can write
    \begin{equation}\label{eq:bigprojeq}
    \begin{aligned}
        f(z, \theta^*) - \varphi(z)^\top \theta_0 &= \varphi(z)^\top \Phi^+ \left( G - f(Z, \theta_0) \right) \\
        &= \varphi(z)^\top \Pp  \Phi^+ \left( G - f(Z, \theta_0) \right) \\
        &= \varphi(z)^\top \left( \Ppm + \frac{\Ppm^\perp \varphi(z_1) \varphi(z_1)^\top \Ppm^\perp}{\norm{\Ppm^\perp \varphi(z_1)}_2^2}\right)  \Phi^+ \left( G - f(Z, \theta_0) \right).
    \end{aligned}
    \end{equation}

    Notice that, thanks to \eqref{eq:leaveoneouttrick}, we can manipulate the first term in the bracket as follows
    \begin{equation}\label{eq:withlootrick}
    \begin{aligned}
        \varphi(z)^\top \Ppm \Phi^+ \left( G - f(Z, \theta_0) \right) &= \varphi(z)^\top \Phi_{-1}^+ \left( G_{-1} - f(Z_{-1}, \theta_0) \right) \\
        &= f(z, \theta_{-1}^*) - \varphi(z)^\top \theta_0.
    \end{aligned}
    \end{equation}
    Thus, bringing the result of \eqref{eq:withlootrick} on the LHS, \eqref{eq:bigprojeq} becomes
    \begin{equation}\label{eq:endprojecting}
    \begin{aligned}
        f(z, \theta^*) - f(z, \theta_{-1}^*) &= \frac{\varphi(z)^\top \Ppm^\perp \varphi(z_1)}{\norm{\Ppm^\perp \varphi(z_1)}_2^2}  \varphi(z_1)^\top \Ppm^\perp \Phi^+ \left( G - f(Z, \theta_0) \right) \\
        &= \frac{\varphi(z)^\top \Ppm^\perp \varphi(z_1)}{\norm{\Ppm^\perp \varphi(z_1)}_2^2}  \varphi(z_1)^\top \left( I - \Ppm \right) \Phi^+ \left( G - f(Z, \theta_0) \right) \\
        &= \frac{\varphi(z)^\top \Ppm^\perp \varphi(z_1)}{\norm{\Ppm^\perp \varphi(z_1)}_2^2}  \left( f(z_1, \theta^*) - f(z_1, \theta_{-1}^*) \right),
    \end{aligned}
    \end{equation}
    where in the last step we again used \eqref{eq:thetastar} and \eqref{eq:withlootrick}.  
\end{proof}

\section{Useful lemmas}

\begin{lemma}\label{lemma:functiontoconvex}
    Let $x$ and $y$ be two Lipschitz concentrated, independent random vectors. Let $\zeta(x, y)$ be a Lipschitz function in both arguments, \emph{i.e.}, for every $\delta$,
    \begin{equation}
    \begin{aligned}
        |\zeta(x + \delta, y) - \zeta(x, y)| &\leq L \norm{\delta}_2,  \\
        |\zeta(x, y + \delta) - \zeta(x, y)| &\leq L \norm{\delta}_2,
    \end{aligned}
    \end{equation}
    for all $x$ and $y$.
    Then, $\zeta(x, y)$ is a Lipschitz concentrated random variable, in the joint probability space of $x$ and $y$.
\end{lemma}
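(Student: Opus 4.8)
The plan is to reduce the claim to the standard fact that a Lipschitz function of a Lipschitz-concentrated vector is itself concentrated, after building a single combined vector out of $x$ and $y$. First I would form the concatenated random vector $w := [x, y] \in \R^{d_x + d_y}$. Since $x$ and $y$ are independent and each satisfies the Lipschitz concentration property \eqref{eq:deflipconc}, I would argue that $w$ also satisfies the Lipschitz concentration property (possibly with a constant $c$ smaller by a fixed factor). This is a routine tensorization step: for independent concentrated random vectors, the concatenation concentrates, which can be seen e.g. by conditioning on $y$, applying concentration in $x$ around the conditional mean (a Lipschitz function of $y$), then applying concentration in $y$ to that conditional mean, and combining the two subgaussian tails.

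Next I would check that $\zeta$, viewed as a function of the single variable $w$, is Lipschitz in the Euclidean norm on $\R^{d_x+d_y}$. Given the two partial-Lipschitz hypotheses, for $w = [x,y]$ and $w' = [x', y']$ I would split $|\zeta(x,y) - \zeta(x',y')| \le |\zeta(x,y) - \zeta(x',y)| + |\zeta(x',y) - \zeta(x',y')| \le L\norm{x - x'}_2 + L\norm{y - y'}_2 \le \sqrt{2}\, L \norm{w - w'}_2$, using $\norm{x-x'}_2 + \norm{y-y'}_2 \le \sqrt 2 \sqrt{\norm{x-x'}_2^2 + \norm{y-y'}_2^2}$. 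So $\zeta$ has Lipschitz constant at most $\sqrt 2 L$ as a function of $w$, and in particular $\E|\zeta(w)| < \infty$.

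Finally, applying the Lipschitz concentration property of $w$ to the function $\zeta$ gives, for all $t > 0$,
\begin{equation}
    \PP\left( |\zeta(x,y) - \E[\zeta(x,y)]| > t \right) \le 2 \exp\!\left( - c' t^2 / (2 L^2) \right)
\end{equation}
for an absolute constant $c' > 0$, which is exactly the statement that $\zeta(x,y)$ is Lipschitz concentrated in the joint probability space. I expect the only genuine (though still standard) obstacle to be the first step — verifying that the concatenation of two independent Lipschitz-concentrated vectors is Lipschitz-concentrated — since the rest is just the triangle inequality plus a direct invocation of the definition; I would either cite a high-dimensional probability reference for this tensorization or include the short conditioning argument sketched above.
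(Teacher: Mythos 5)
Your proposal is correct and rests on the same underlying mechanism as the paper's proof, but packages it differently. The paper works directly with an arbitrary $1$-Lipschitz test function $\tau$: it splits $\tau(\zeta(x,y)) - \E_{xy}[\tau(\zeta(x,y))]$ into a fluctuation in $x$ (at fixed $y$) plus a fluctuation of $\chi(y):=\E_x[\tau(\zeta(x,y))]$ in $y$, and applies the one-variable concentration hypotheses to each. What you propose instead is to first abstract this conditioning argument into a standalone tensorization lemma (``the concatenation $w=[x,y]$ of two independent Lipschitz-concentrated vectors is Lipschitz concentrated''), then reduce the claim to a single application of that lemma, after checking $\zeta$ is Lipschitz in $w$ via the triangle inequality and $\norm{x-x'}_2 + \norm{y-y'}_2 \le \sqrt 2\,\norm{w-w'}_2$. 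The two routes are essentially a reorganization of each other: your lemma is proved (or cited) by precisely the conditioning argument the paper runs inline. Yours has the mild advantage of being modular and reusable; the paper's is self-contained and avoids carrying the extra Lipschitz constant through $w$. Both are fine.

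One small imprecision worth fixing: Definition~\eqref{eq:deflipconc} requires a tail bound for $\tau(\zeta(w))$ for \emph{every} Lipschitz $\tau:\R\to\R$, not just for $\zeta(w)$ itself. Your final display establishes only the latter. The repair is immediate: apply the Lipschitz concentration of $w$ to $\tau\circ\zeta$ rather than to $\zeta$, noting that $\tau\circ\zeta$ is $\left(\sqrt{2}\,L\,\norm{\tau}_{\Lip}\right)$-Lipschitz in $w$, which yields the bound with the correct $\norm{\tau}_{\Lip}^2$ in the exponent. With that one-line amendment the proof is complete.
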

\begin{proof}
    To prove the thesis, we need to show that, for every 1-Lipschitz function $\tau$, the following holds
    \begin{equation}\label{eq:liplipconcentr2}
        \P_{xy} \left( \left|\tau \left(\zeta(x,y) \right)  - \E_{xy} \left[ \tau \left(\zeta(x,y) \right)\right] \right|> t\right) < e^{-ct^2},
    \end{equation}
    where $c$ is a universal constant.
    An application of the triangle inequality gives
    \begin{equation}
    \begin{aligned}
    &\left|\tau \left(\zeta(x,y) \right)  - \E_{xy} \left[ \tau \left(\zeta(x,y) \right)\right] \right|   \\
        &\leq \left|\tau \left(\zeta(x,y) \right)  - \E_{x} \left[ \tau \left(\zeta(x,y) \right)\right] \right|  + \left| \E_{x} \left[\tau \left(\zeta(x,y) \right)\right]  - \E_y \E_{x} \left[ \tau \left(\zeta(x,y) \right)\right] \right| =: A + B.
    \end{aligned}
    \end{equation}
    Thus, we can upper bound LHS of \eqref{eq:liplipconcentr2} as follows:
\begin{equation}
        \P_{xy} \left( \left|\tau \left(\zeta(x,y) \right)  - \E_{xy} \left[ \tau \left(\zeta(x,y) \right)\right] \right|> t\right) \leq \P_{xy} \left( A + B > t\right).
    \end{equation}
    If $A$ and $B$ are positive random variables, it holds that $\P(A + B > t)\leq \P(A > t/2) + \P(B > t/2)$. Then, the LHS of \eqref{eq:liplipconcentr2} is also upper bounded by
    \begin{equation}
    \begin{aligned}
        &\P_{xy} \left( \left|\tau \left(\zeta(x,y) \right)  - \E_{x} \left[ \tau \left(\zeta(x,y) \right)\right] \right| > t/2 \right)        + \P_{xy} \left(  \left| \E_{x} \left[\tau \left(\zeta(x,y) \right)\right]  - \E_y \E_{x} \left[ \tau \left(\zeta(x,y) \right)\right] \right|> t/2 \right).
    \end{aligned}
    \end{equation}
    Since $\tau \circ \zeta$ is Lipschitz with respect to $x$ for every $y$, we have
    \begin{equation}
        \P_{xy} \left( \left|\tau \left(\zeta(x,y) \right)  - \E_{x} \left[ \tau \left(\zeta(x,y) \right)\right] \right| > t/2 \right) < e^{-c_1 t^2},
    \end{equation}
    for some absolute constant $c_1$.
  Furthermore,  $\chi(y) := \E_{x} \left[ \tau \left(\zeta(x,y) \right)\right]$ is also Lipschitz, as
    \begin{equation}
    \begin{aligned}
        |\chi(y + \delta) - \chi(y)| = |\E_{x} \left[ \tau \left(\zeta(x,y + \delta) \right) - \tau \left(\zeta(x,y) \right)\right]|  
        \leq \E_{x} \left[| \tau  \left(\zeta(x,y + \delta) \right) - \tau \left(\zeta(x,y) \right) |\right] \leq L \norm{\delta}_2.
    \end{aligned}
    \end{equation}
    Then, we can write
    \begin{equation}
    \begin{aligned}
        \P_{xy} \left( \left| \E_{x} \left[\tau \left(\zeta(x,y) \right)\right]  - \E_y \E_{x} \left[ \tau \left(\zeta(x,y) \right)\right] \right|> t/2 \right)  
        = \P_{y} \left( \left| \chi(y)  - \E_y  \left[ \chi(y) \right] \right| > t/2 \right) < e^{-c_2 t^2},
    \end{aligned}
    \end{equation}
        for some absolute constant $c_2$. Thus,
    \begin{equation}
        \P_{xy} \left( \left|\tau \left(\zeta(x,y) \right)  - \E_{xy} \left[ \tau \left(\zeta(x,y) \right)\right] \right|> t\right) < e^{-c_1 t^2} + e^{-c_2 t^2} \leq e^{-ct^2},
    \end{equation}
    for some absolute constant $c$, which concludes the proof.
\end{proof}

\begin{lemma}\label{lemma:lipcompxy}
    Let $x \sim \mathcal P_X$, $y \sim \mathcal P_Y$ and $z = [x, y] \sim \mathcal P_Z$. Let Assumption \ref{ass:datadist} hold. Then, $z$ is a Lipschitz concentrated random vector.
\end{lemma}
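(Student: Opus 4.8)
The plan is to reduce the claim directly to Lemma~\ref{lemma:functiontoconvex}. By definition, showing that $z=[x,y]$ is Lipschitz concentrated amounts to proving that, for every $1$-Lipschitz function $\tau:\R^d\to\R$, the random variable $\tau(z)$ has finite mean and satisfies the concentration bound \eqref{eq:deflipconc} (the general Lipschitz case then follows by rescaling $\tau$ by $\norm{\tau}_{\Lip}$). So fix such a $\tau$ and define $\zeta(x,y):=\tau([x,y])$, viewed as a function of the two blocks $x\in\R^{d_x}$ and $y\in\R^{d_y}$ separately.

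The first step is to check that $\zeta$ is Lipschitz in each argument with constant $L=1$. For any perturbation $\delta\in\R^{d_x}$ we have $[x+\delta,y]-[x,y]=[\delta,\mathbf 0]$, whose Euclidean norm is exactly $\norm{\delta}_2$; hence $|\zeta(x+\delta,y)-\zeta(x,y)|=|\tau([x+\delta,y])-\tau([x,y])|\le \norm{[\delta,\mathbf 0]}_2=\norm{\delta}_2$, using that $\tau$ is $1$-Lipschitz. The same argument applied to the second block gives $|\zeta(x,y+\delta)-\zeta(x,y)|\le\norm{\delta}_2$.

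The second step is to invoke the hypotheses of Assumption~\ref{ass:datadist}: $x\sim P_X$ and $y\sim P_Y$ are independent, and both $P_X$ and $P_Y$ satisfy the Lipschitz concentration property. Thus $x$ and $y$ are independent Lipschitz concentrated random vectors, and $\zeta$ is Lipschitz in both arguments, so Lemma~\ref{lemma:functiontoconvex} applies and yields that $\zeta(x,y)=\tau(z)$ is Lipschitz concentrated as a random variable in the joint probability space, i.e.\ there is an absolute constant $c>0$ with $\P(|\tau(z)-\E[\tau(z)]|>t)\le 2e^{-ct^2}$ for all $t>0$. Integrability of $\tau(z)$ is automatic, since a random variable with such sub-Gaussian tails has finite first moment. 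Since $\tau$ was an arbitrary $1$-Lipschitz function and the constant $c$ does not depend on $\tau$, the vector $z$ satisfies \eqref{eq:deflipconc}, which is exactly the Lipschitz concentration property.

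There is essentially no hard step here: the lemma is a bookkeeping consequence of Lemma~\ref{lemma:functiontoconvex}. The only point requiring minor care is the norm identity $\norm{[\delta,\mathbf 0]}_2=\norm{\delta}_2$ that certifies the per-block Lipschitz constants are still $1$, and tracking that the concentration constant is independent of the chosen test function so that the definition is met uniformly.
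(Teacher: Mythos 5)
Your proof is correct and follows essentially the same route as the paper's: fix a $1$-Lipschitz $\tau$, observe that $\tau$ is $1$-Lipschitz separately in each block $x$ and $y$ (the paper writes $\norm{z-z'}_2=\norm{x-x'}_2$ for $z'=[x',y]$, you write $\norm{[\delta,\mathbf 0]}_2=\norm{\delta}_2$, which is the same computation), then invoke independence and the per-block Lipschitz concentration from Assumption~\ref{ass:datadist} to apply Lemma~\ref{lemma:functiontoconvex}. Your extra remarks about rescaling to general Lipschitz constants, uniformity of $c$ in $\tau$, and integrability from sub-Gaussian tails are correct bookkeeping that the paper leaves implicit.
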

\begin{proof}
    We want to prove that, for every 1-Lipschitz function $\tau$, the following holds
    \begin{equation}\label{eq:liplipconcentr}
        \P_{z} \left( \left|\tau \left(z \right)  - \E_{z} \left[ \tau \left( z \right)\right] \right|> t\right) < e^{-ct^2},
    \end{equation}
 for some universal constant $c$.   
    As we can write $z = [x, y]$, defining $z' = [x', y]$, we have
    \begin{equation}
         \left|\tau \left(z \right) - \tau \left(z' \right) \right| \leq \norm{z - z'}_2 = \norm{x - x'}_2,
    \end{equation}
    \emph{i.e.}, for every $y$, $\tau$ is 1-Lipschitz with respect to $x$. The same can be shown for $y$, with an equivalent argument.
    Since $x$ and $y$ are independent random vectors, both Lipschitz concentrated, Lemma \ref{lemma:functiontoconvex} gives the thesis.
\end{proof}

\begin{lemma}\label{lemma:justBern}
    Let $\tau$ and $\zeta$ be two Lipschitz functions. Let $z, z' \in \R^d$ be two fixed vectors such that $\norm{z}_2 = \norm{z'}_2 = \sqrt d$. Let $V$ be a $k \times d$ matrix such that $V_{i,j} \distas{}_{\rm i.i.d.}\mathcal{N}(0, 1 / d)$. Then, for any $t > 1$,
    \begin{equation}
        \left| \tau(Vz)^\top \zeta(Vz') - \E_V \left[ \tau(Vz)^\top \zeta(Vz') \right] \right| = \bigO{\sqrt k \log t},
    \end{equation}
    with probability at least $1 - \exp(-c \log^2 t)$ over $V$. Here, $\tau$ and $\zeta$ act component-wise on their arguments. Furthermore, by taking $\tau = \zeta$ and $z = z'$, we have that
    \begin{equation}
        \E_V \left[ \norm{ \tau(Vz)}_2^2 \right] = k \E_\rho \left [\tau^2(\rho) \right],
    \end{equation}
    where $\rho\sim\mathcal N(0, 1)$. This implies that $\norm{\tau(Vz)}_2^2 = \bigO{k}$ with probability at least $1 - \exp(-c k)$ over $V$.
\end{lemma}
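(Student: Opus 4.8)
The plan is to recognise the inner product $\tau(Vz)^\top \zeta(Vz')$ as a sum of $k$ i.i.d.\ sub-exponential random variables and then invoke Bernstein's inequality. Since the rows $V_{1:},\ldots,V_{k:}$ of $V$ are i.i.d., we have $\tau(Vz)^\top \zeta(Vz') = \sum_{i=1}^k \tau(V_{i:}z)\,\zeta(V_{i:}z')$, a sum of i.i.d.\ terms. Moreover, because $\norm{z}_2^2 = \norm{z'}_2^2 = d$ and $V_{i,j}\sim\mathcal N(0,1/d)$, each scalar $V_{i:}z$ and $V_{i:}z'$ is marginally $\mathcal N(0,1)$ (and the pair is jointly Gaussian with correlation $\inner{z,z'}/d$).

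First I would check that each summand is sub-exponential with an $\bigO{1}$ norm. For a standard Gaussian $\rho$, writing $\tau(\rho) = \tau(0) + (\tau(\rho) - \tau(0))$ and bounding the increment by $L\abs{\rho}$ gives $\subGnorm{\tau(\rho)} \le \abs{\tau(0)} + L\,\subGnorm{\rho} = \bigO{1}$, and similarly for $\zeta$; the implicit constants absorb $L$, $\tau(0)$, $\zeta(0)$. Hence $\tau(V_{i:}z)$ and $\zeta(V_{i:}z')$ are sub-Gaussian with $\bigO{1}$ norm, and a product of two sub-Gaussian random variables is sub-exponential with $\psi_1$-norm at most the product of the two $\psi_2$-norms (Cauchy--Schwarz on the moment generating function; cf.\ Lemma~2.7.7 in \cite{vershynin2018high}). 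Thus $\subEnorm{\tau(V_{i:}z)\zeta(V_{i:}z')} = \bigO{1}$, and the same holds after centering.

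Next I would apply Bernstein's inequality for i.i.d.\ centered sub-exponential variables (e.g.\ Theorem~2.8.1 in \cite{vershynin2018high}): with $K := \subEnorm{\tau(V_{1:}z)\zeta(V_{1:}z')} = \bigO{1}$,
\begin{equation}
    \P\!\left( \left| \tau(Vz)^\top\zeta(Vz') - \E_V\!\left[ \tau(Vz)^\top\zeta(Vz') \right] \right| \ge s \right) \le 2\exp\!\left( -c\min\!\left( \frac{s^2}{kK^2},\ \frac{s}{K} \right) \right).
\end{equation}
Taking $s = C\sqrt{k}\,\log t$ makes $s^2/(kK^2) = \Theta(\log^2 t)$; in the regime $\log t = \bigO{\sqrt k}$ --- which is exactly where $\sqrt k\,\log t \gtrsim \log^2 t$, and which holds in every use of this lemma, since there $t$ is polynomially bounded in $N$, $d$, $k$ --- the quadratic branch of the minimum is the binding one, so the probability is at most $2\exp(-c'\log^2 t)$, giving the first claim. (Without this regime restriction one takes $s = C(\sqrt k\,\log t + \log^2 t)$ and the deviation is still $\bigO{\sqrt k\,\log t}$ whenever $\log t = \bigO{\sqrt k}$.)

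For the second assertion, linearity of expectation and $V_{i:}z\sim\mathcal N(0,1)$ give $\E_V[\norm{\tau(Vz)}_2^2] = \sum_{i=1}^k \E_V[\tau(V_{i:}z)^2] = k\,\E_\rho[\tau^2(\rho)]$. To deduce $\norm{\tau(Vz)}_2^2 = \bigO{k}$, note that $\norm{\tau(Vz)}_2^2 = \sum_{i=1}^k \tau(V_{i:}z)^2$ is again a sum of $k$ i.i.d.\ sub-exponential terms with $\bigO{1}$ norm and mean $k\,\E_\rho[\tau^2(\rho)] = \bigO{k}$; Bernstein's inequality with deviation $s = \Theta(k)$ makes both branches of the minimum $\Theta(k)$, yielding $\norm{\tau(Vz)}_2^2 = \bigO{k}$ with probability at least $1 - \exp(-ck)$. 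The argument is essentially routine; the only delicate point is matching the prescribed failure probability $\exp(-c\log^2 t)$ to Bernstein's bound, i.e.\ ensuring that the linear branch $s/K$ is not the bottleneck for $s = C\sqrt k\,\log t$ --- which is precisely the over-parameterised regime $\log t = \bigO{\sqrt k}$ in which the lemma is invoked.
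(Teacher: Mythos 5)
Your proof takes the same route as the paper's: decompose $\tau(Vz)^\top\zeta(Vz')$ as a sum of $k$ i.i.d.\ sub-exponential terms (using that $\tau,\zeta$ are Lipschitz and $V_{i:}z,V_{i:}z'$ are standard Gaussian) and apply Bernstein's inequality, then handle the second claim similarly with $s=\Theta(k)$. You additionally make explicit a genuine subtlety the paper leaves implicit: Bernstein's bound only yields failure probability $\exp(-c\log^2 t)$ at deviation $s=C\sqrt k\log t$ when the quadratic branch dominates, i.e.\ $\log t = \bigO{\sqrt k}$ --- which indeed holds in every invocation of the lemma ($t\in\{N,k\}$ and the over-parameterization assumptions ensure it), but is worth noting.
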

\begin{proof}
    We have
    \begin{equation}
        \tau(Vz)^\top \zeta(Vz') = \sum_{j = 1}^k \tau(v_j^\top z) \zeta(v_j^\top z'),
    \end{equation}
    where we used the shorthand $v_j := V_{j:}$. As $\tau$ and $\zeta$ are Lipschitz, $v_j \sim \mathcal N(0, I/d)$, and $\norm{z}_2 = \norm{z'}_2 = \sqrt d$, we have that $\tau(Vz)^\top \zeta(Vz')$ is the sum of $k$ independent sub-exponential random variables, in the probability space of $V$.
    Thus, by Bernstein inequality (cf. Theorem 2.8.1 in \cite{vershynin2018high}), we have
    \begin{equation}\label{eq:toE}
        \left| \tau(Vz)^\top \zeta(Vz') - \E_V \left[ \tau(Vz)^\top \zeta(Vz') \right] \right| = \bigO{\sqrt k \log t}.
    \end{equation}
    with probability at least $1 - \exp(-c \log^2 t)$, over the probability space of $V$, which gives the thesis. The second statement is again implied by the fact that $v_j \sim \mathcal N(0, I/d)$ and $\norm{z}_2 = \sqrt d$.
\end{proof}

\begin{lemma}\label{lemma:verycorr}
    Let $x, x_1 \sim \mathcal P_X$ and $y_1 \sim \mathcal P_Y$ be independent random variables, with $x, x_1\in \mathbb R^{d_x}$ and $y_1\in\mathbb R^{d_y}$, and let Assumption \ref{ass:datadist} hold.
    Let $d = d_x + d_y$, $V$ be a $k \times d$ matrix, such that $V_{i,j} \distas{}_{\rm i.i.d.}\mathcal{N}(0, 1 / d)$, and let $\tau$ be a Lipschitz function. Let $z_1 := [x_1, y_1]$ and $z_1^s := [x, y_1]$. Let $\alpha = d_y / d \in (0, 1)$ and $\mu_l$ be the $l$-th Hermite coefficient of $\tau$. Then, for any $t > 1$,
    \begin{equation}
        \left| \tau(V z_1^s)^\top \tau(V z_1) - k \sum_{l = 0}^{+\infty} \mu_l^2 \alpha^l \right| = \bigO{\sqrt k \left( \sqrt{\frac{k}{d}} + 1 \right) \log t},
    \end{equation}
    with probability at least $1- \exp(- c \log^2 t) - \exp(- c k)$ over $V$ and $x$, where $c$ is a universal constant.
\end{lemma}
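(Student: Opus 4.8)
The plan is to condition on the target sample $z_1=[x_1,y_1]$ and on the independent draw $x$, so that $z_1^m=[x,y_1]$ becomes a fixed vector with $\norm{z_1^m}_2=\norm{z_1}_2=\sqrt d$ (Assumption \ref{ass:datadist}), and then exploit the randomness of $V$. The starting point is Lemma \ref{lemma:justBern}, applied with $\zeta=\tau$, $z=z_1^m$ and $z'=z_1$: it yields $\bigl|\tau(Vz_1^m)^\top\tau(Vz_1)-\E_V[\tau(Vz_1^m)^\top\tau(Vz_1)]\bigr|=\bigO{\sqrt k\,\log t}$; re-running its Bernstein estimate while keeping both branches of Bernstein's inequality gives this with probability at least $1-\exp(-c\log^2 t)-\exp(-ck)$ (the $\exp(-ck)$ branch being the relevant one precisely when $\log t\gtrsim\sqrt k$). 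To evaluate the $V$-expectation, note that for $v_j:=V_{j:}\sim\mathcal N(0,I/d)$ and $\norm{z_1}_2=\norm{z_1^m}_2=\sqrt d$, the pair $(v_j^\top z_1^m,v_j^\top z_1)$ is jointly standard Gaussian with correlation $\langle z_1^m,z_1\rangle/d$, so the Hermite product formula (Appendix \ref{app:Hermite}) gives
\begin{equation}
\E_V\!\left[\tau(Vz_1^m)^\top\tau(Vz_1)\right]=k\sum_{l=0}^{+\infty}\mu_l^2\,\rho^{\,l},\qquad \rho:=\frac{\langle z_1^m,z_1\rangle}{d}=\alpha+\frac{\langle x,x_1\rangle}{d},
\end{equation}
where the last identity uses $\langle z_1^m,z_1\rangle=\langle x,x_1\rangle+\norm{y_1}_2^2$ and $\norm{y_1}_2^2=d_y$.

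The second ingredient is that $\rho$ is close to $\alpha$, and that this closeness survives the infinite Hermite sum. Treating $x_1$ as fixed, the map $x\mapsto\langle x,x_1\rangle$ is $\sqrt{d_x}$-Lipschitz with mean $\langle\E[x],x_1\rangle=0$ (using $\E[x]=0$), so the Lipschitz concentration of $P_X$ gives $|\langle x,x_1\rangle|=\bigO{\sqrt{d_x}\,\log t}$, hence $|\rho-\alpha|=\bigO{\log t/\sqrt d}$ (using $d_x\le d$), with probability at least $1-\exp(-c\log^2 t)$ over $x$. To transfer this to the series I would use that $|\rho|\le 1$ and $\alpha\le 1$, so $|\rho^{\,l}-\alpha^{\,l}|\le l\,|\rho-\alpha|$ for every $l$; and since $\tau$ is $L$-Lipschitz, the Hermite derivative identity $h_l'=\sqrt{l}\,h_{l-1}$ yields $\sum_{l\ge 1}l\,\mu_l^2=\norm{\tau'}_{L^2(\gamma)}^2\le L^2$. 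Therefore
\begin{equation}
\left|\sum_{l=0}^{+\infty}\mu_l^2\rho^{\,l}-\sum_{l=0}^{+\infty}\mu_l^2\alpha^{\,l}\right|\le |\rho-\alpha|\sum_{l=1}^{+\infty}l\,\mu_l^2\le L^2\,|\rho-\alpha|=\bigO{\frac{\log t}{\sqrt d}}.
\end{equation}

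Combining via the triangle inequality then gives $\bigl|\tau(Vz_1^m)^\top\tau(Vz_1)-k\sum_{l\ge 0}\mu_l^2\alpha^{\,l}\bigr|\le \bigO{\sqrt k\,\log t}+k\cdot\bigO{\log t/\sqrt d}=\bigO{\sqrt k(\sqrt{k/d}+1)\log t}$, which is exactly the claimed bound, and the stated failure probability follows from a union bound over the two events above. The only step that really needs care is the passage from $\rho$ to $\alpha$ inside the infinite Hermite expansion: one must control the tail $\sum_{l\ge L_0}\mu_l^2|\rho^{\,l}-\alpha^{\,l}|$ uniformly, \emph{without} assuming $\alpha$ or $|\rho|$ bounded away from $1$, and this is precisely what the summability $\sum_l l\,\mu_l^2\le L^2$ (a consequence of $\tau$ being Lipschitz) delivers. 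Everything else is a routine combination of Lemma \ref{lemma:justBern}, the Hermite machinery of Appendix \ref{app:Hermite}, and the Lipschitz concentration property.
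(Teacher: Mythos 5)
Your proof is correct, but it takes a genuinely different route from the paper's. The paper replaces $x$ by an auxiliary vector $x'$ of the same norm that is \emph{exactly} orthogonal to $x_1$, so that $z':=[x',y_1]$ has correlation exactly $\alpha$ with $z_1$; it then bounds the replacement error $|\tau(Vz_1^m)^\top\tau(Vz_1)-\tau(Vz')^\top\tau(Vz_1)|$ via a Lipschitz argument that needs an operator-norm bound on $V$ (Theorem~4.4.5 of \cite{vershynin2018high}) and the bound $\norm{\tau(Vz_1)}_2=\bigO{\sqrt k}$ from the last part of Lemma~\ref{lemma:justBern} (this is where $\exp(-ck)$ enters). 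You instead keep the actual correlation $\rho=\alpha+\langle x,x_1\rangle/d$ and transfer $\rho\to\alpha$ \emph{inside} the Hermite series, using $|\rho^l-\alpha^l|\le l\,|\rho-\alpha|$ together with the Sobolev-type bound $\sum_{l\ge1}l\,\mu_l^2=\norm{\tau'}_{L^2(\gamma)}^2\le L^2$ implied by $\tau$ being $L$-Lipschitz. This is a clean analytic alternative to the paper's geometric trick: it avoids the auxiliary $z'$, avoids the $\opnorm{V}$ and $\norm{\tau(Vz_1)}_2$ estimates, and makes explicit that the summability of $l\,\mu_l^2$ (not just $\mu_l^2$) is exactly what lets one control the infinite Hermite tail when $\alpha$ is not bounded away from $1$. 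The paper's version yields the same bound but places the cost in the feature-space replacement; yours places it in the correlation-space perturbation. Either way the end result and the failure probability match, and the argument is complete.
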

\begin{proof}
    Define the vector $x'$ as follows
    \begin{equation}
        x' = \frac{ \sqrt {d_x} \left( I - \frac{x_1 x_1^\top}{d_x} \right) x }{\norm{ \left( I - \frac{x_1 x_1^\top}{d_x} \right) x}_2}.
    \end{equation}

    Note that, by construction, $x_1^\top x' = 0$ and $\norm{x'}_2 = \sqrt {d_x}$. Also, consider a vector $y$ orthogonal to both $x_1$ and $x$. Then, a fast computation returns $y^\top x' = 0$. This means that $x'$ is the vector on the $\sqrt {d_x}$-sphere, lying on the same plane of $x_1$ and $x$, orthogonal to $x_1$. Thus, we can easily compute 
    \begin{equation}
        \frac{\left| x^\top x' \right|}{d_x}= \sqrt{1 - \left( \frac{x^\top x_1}{d_x} \right)^2} \geq 1 - \left( \frac{x^\top x_1}{d_x} \right)^2,
    \end{equation}
where the last inequality derives from $\sqrt{1 - a} \geq 1 - a$ for $a \in [0, 1]$. Then,
    \begin{equation}
        \norm{x - x'}_2^2 = \norm{x}_2^2 + \norm{x'}_2^2 - 2 x^\top x' \leq 2 d_x \left(1 - \left( 1 - \left( \frac{x^\top x_1}{d_x} \right)^2\right)\right) = 2 \frac{\left( x^\top x_1\right)^2}{d_x}.
    \end{equation}

    As $x$ and $x_1$ are both sub-Gaussian, mean-0 vectors, with $\ell_2$ norm equal to $\sqrt {d_x}$, we have that
    \begin{equation}\label{eq:normissubG}
        \P \left( \norm{x - x'}_2 > t \right) \leq \P \left( | x^\top x_1 | > \sqrt {d_x} t / \sqrt 2 \right) < \exp (- c t^2),
    \end{equation}
    where $c$ is an absolute constant. Here the probability is referred to the space of $x$, for a fixed $x_1$. Thus, $\norm{x - x'}_2$ is sub-Gaussian.
    
    We now define $z' := [x', y_1]$. Notice that $z_1^\top z' = \norm{y_1}_2^2 = d_y$ and $\norm{z_1^s - z'}_2 = \norm{x - x'}_2$. We can write
    \begin{equation}\label{eq:verycorr1}
    \begin{aligned}
        \left| \tau(V z_1^s)^\top \tau(V z_1) - \tau(V z')^\top \tau(V z_1) \right| &\leq \norm{\tau(V z_1^s) - \tau(V z')}_2 \norm{\tau(V z_1)}_2 \\
        & \leq C \opnorm{V} \norm{z_1^s - z'}_2 \norm{\tau(V z_1)}_2 \\
        & \leq C_1 \left( \sqrt{\frac{k}{d}} + 1 \right) \norm{x - x'}_2 \sqrt k \\
        & = \bigO{\sqrt k \left( \sqrt{\frac{k}{d}} + 1 \right) \log t}.
    \end{aligned}
    \end{equation}
    Here the second step holds as $\tau$ is Lipschitz; the third step holds with probability at least $1- \exp(- c_1 \log^2 t) - \exp(-c_2 k)$, and it uses rTheorem 4.4.5 of \cite{vershynin2018high} and Lemma \ref{lemma:justBern};
    the fourth step holds with probability at least $1- \exp(- c \log^2 t)$, and it uses \eqref{eq:normissubG}. This probability is intended over $V$ and $x$.
    We further have
    \begin{equation}\label{eq:verycorr2}
        \left| \tau(V z')^\top \tau(V z_1) - \E_V \left[ \tau(V z')^\top \tau(V z_1) \right] \right| = \bigO{\sqrt k \log t},
    \end{equation}
    with probability at least $1- \exp(- c_3 \log^2 t) - \exp(-c_2 k)$ over $V$, because of Lemma \ref{lemma:justBern}.

    We have
    \begin{equation}\label{eq:verycorr3}
        \E_V \left[ \tau(V z')^\top \tau(V z_1) \right] = k \E_{\rho_1 \rho_2} \left[ \tau(\rho_1) \tau(\rho_2) \right],
    \end{equation}
    where we indicate with $\rho_1$ and $\rho_2$ two standard Gaussian random variables, with correlation
    \begin{equation}
        \textup{corr}(\rho_1, \rho_2) = \frac{z_1^\top z}{\norm{z_1}_2 \norm{z'}_2} = \frac{d_y}{d} = \alpha.
    \end{equation}

    Then, exploiting the Hermite expansion of $\tau$, we have
    \begin{equation}\label{eq:verycorr4}
        \E_{\rho_1 \rho_2} \left[ \tau(\rho_1) \tau(\rho_2) \right] = \sum_{l = 0}^{+\infty} \mu_l^2 \alpha^l.
    \end{equation}

    Putting together \eqref{eq:verycorr1}, \eqref{eq:verycorr2}, \eqref{eq:verycorr3}, and \eqref{eq:verycorr4} gives the thesis. 
\end{proof}

\section{Proofs for random features}\label{app:rf}

In this section, we indicate with $Z \in \R^{N \times d}$ the data matrix, such that its rows are sampled independently from $\mathcal P_Z$ (see Assumption \ref{ass:datadist}). We denote by $V \in \R^{k \times d}$ the random features matrix, such that $V_{ij} \distas{}_{\rm i.i.d.}\mathcal{N}(0, 1/d)$. Thus, the feature map is given by 
(see \eqref{eq:featmaprf})
\begin{equation}
    \varphi(z) := \phi(Vz) \in \R^k,
\end{equation}
where $\phi$ is the activation function, applied component-wise to the pre-activations $Vz$. We use the shorthands $\Phi := \phi(ZV^\top) \in \R^{N \times k}$ and $K := \Phi \Phi^\top \in \R^{N \times N}$, we indicate with $\Phi_{-1} \in \R^{(N - 1) \times k}$ the matrix $\Phi$ without the first row, and we define $K_{-1} := \Phi_{-1} \Phi_{-1}^\top$. We call $\Pp$ the projector over the span of the rows of $\Phi$, and $\Ppm$ the projector over the span of the rows of $\Phi_{-1}$. We use the notations $\tilde \varphi(z) := \varphi(z) - \E_V [\varphi(z)]$ and $\tilde \Phi_{-1} := \Phi_{-1} - \E_V [\Phi_{-1}]$ to indicate the centered feature map and matrix respectively, where the centering is with respect to $V$. We indicate with $\mu_l$ the $l$-th Hermite coefficient of $\phi$. We use the notation $z_1^s = [x, y_1]$, where $x\sim \mathcal P_X$ is sampled independently from $V$ and $Z$. We denote by $V_x$ ($V_y$) the first $d_x$ (last $d_y$) columns of $V$, \emph{i.e.}, $V=[V_x, V_y]$. We define $\alpha = d_y/d$.
Throughout this section, for compactness, we drop the subscripts \enquote{RF} from these quantities, as we will only treat the proofs related to Section \ref{sec:rf}. Again for the sake of compactness, we will not re-introduce such quantities in the statements or the proofs of the following lemmas.

The content of this section can be summarized as follows:
\begin{itemize}
    \item In Lemma \ref{lemma:evminRF} we prove a lower bound on the smallest eigenvalue of $K$, adapting to our settings Lemma C.5 of \cite{bombari2023universal}. As our assumptions are less restrictive than those in \cite{bombari2023universal}, we will crucially exploit Lemma \ref{lemma:mgeq2}.
    \item In Lemma \ref{lemma:PpRFon1}, we treat separately a term that derives from $\E_V \left[ \phi(V z) \right] = \mu_0 \mathbf{1}_k$, showing that we can \emph{center} the activation function, without changing our final statement in Theorem \ref{thm:RF}. This step is necessary only if $\mu_0 \neq 0$.
    \item In Lemma \ref{lemma:Pseeslin}, we show that the non-linear component of the features $\tilde \varphi(z_1) - \mu_1 V z_1$ and $\tilde \varphi(z_1^s) - \mu_1 V z_1^s$ have a negligible component in the space spanned by the rows of $\Phi_{-1}$.
    \item In Lemma \ref{lemma:newlemma}, we provide concentration results for $\varphi(z_1^s)^\top \Ppm^\perp \varphi(z_1)$, and we lower bound this same term in Lemma \ref{lemma:tempfinRF}, exploiting also the intermediate result provided in Lemma \ref{lemma:RFlast}.
    \item Finally, we prove Theorem \ref{thm:RF}.
\end{itemize}

\begin{lemma}\label{lemma:mgeq2}
    Let  $A := \left( Z^{*m} \right) \in \R^{N \times d^m}$, for some natural $m \geq 2$, where $*$ refers to the Khatri-Rao product, defined in Appendix \ref{app:notation}. We have
    \begin{equation}
        \evmin{AA^\top} = \Omega(d^m),
    \end{equation}
    with probability at least $1 - \exp (-c \log^2 N)$ over $Z$, where $c$ is an absolute constant.
\end{lemma}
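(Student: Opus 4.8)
The plan is to reduce the statement to a bound on the Hadamard (entrywise) power of the Gram matrix of $Z$, peel off its diagonal, and control the operator norm of the remainder. Iterating $(A\ast B)(A\ast B)^\top = (AA^\top)\circ(BB^\top)$ gives $AA^\top = (Z^{\ast m})(Z^{\ast m})^\top = (ZZ^\top)^{\circ m}$, the $m$-th entrywise power. Write $ZZ^\top = dI_N + E$ with $E$ zero on the diagonal; since $\norm{z_i}_2^2 = d$ \emph{exactly} (Assumption \ref{ass:datadist}), the diagonal of $(ZZ^\top)^{\circ m}$ is exactly $d^m$, so $(ZZ^\top)^{\circ m} = d^m I_N + E^{\circ m}$, where $(E^{\circ m})_{ij} = (z_i^\top z_j)^m$ for $i\ne j$ and $0$ on the diagonal. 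Hence $\evmin{AA^\top} \ge d^m - \opnorm{E^{\circ m}}$, and it suffices to show $\opnorm{E^{\circ m}} \le d^m/2$ with probability $1-\exp(-c\log^2 N)$.

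First, a pointwise estimate: conditionally on $z_j$, the inner product $z_i^\top z_j$ is a centered $C\sqrt d$-Lipschitz function of $z_i$, hence sub-Gaussian with $\psi_2$-norm $O(\sqrt d)$; a union bound over the $\le N^2$ pairs $i\ne j$ gives $\max_{i\ne j}|z_i^\top z_j| \le C\sqrt d\log N$ with probability $1-\exp(-c\log^2 N)$, so $|(E^{\circ m})_{ij}| \le (C\sqrt d\log N)^m$. If $m\ge 4$ this already suffices via the crude bound $\opnorm{E^{\circ m}} \le \Fnorm{E^{\circ m}} \le N(C\sqrt d\log N)^m = d^m(C\log N)^m N/d^{m/2}$, which is $o(d^m)$ since $m/2\ge 2$ and $N = o(d^2/\log^7 d)$ (a consequence of Assumption \ref{ass:overparam}, using $\log N \lesssim \log d$). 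The cases $m\in\{2,3\}$ are the crux: there the Frobenius bound is too lossy, because the typical size of $\Fnorm{E^{\circ m}}$ is $\Theta(Nd^{m/2})$, which need not be $o(d^m)$.

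For $m\in\{2,3\}$ we exploit the tensor structure and \emph{center}. Take $m=2$: with $\Sigma := \E[zz^\top]$ and $\hat M_i := z_iz_i^\top - \Sigma$, the matrix $(ZZ^\top)^{\circ 2}$ is the Gram matrix of the vectors $\mathrm{vec}(z_iz_i^\top) = \mathrm{vec}(\hat M_i) + \mathrm{vec}(\Sigma)$, and expanding the Gram matrix gives
\begin{equation*}
  (ZZ^\top)^{\circ 2} = \hat G + b\,\ones^\top + \ones\, b^\top + \Fnorm{\Sigma}^2\,\ones\ones^\top,
\end{equation*}
where $\hat G$ is the Gram matrix of the \emph{centered} features $\{\mathrm{vec}(\hat M_i)\}$ and $b_i := z_i^\top\Sigma z_i - \Fnorm{\Sigma}^2$. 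Completing the square yields $(ZZ^\top)^{\circ 2} \succeq \hat G - \Fnorm{\Sigma}^{-2}bb^\top$, hence $\evmin{(ZZ^\top)^{\circ 2}} \ge \evmin{\hat G} - \norm{b}_2^2/\Fnorm{\Sigma}^2$. Since $\Fnorm{\Sigma}^2 = \tr(\Sigma^2) \ge (\tr\Sigma)^2/d = d$, and $|b_i| = O(\opnorm{\Sigma}\sqrt d\log N)$ with high probability (Lipschitz concentration of $z\mapsto z^\top\Sigma z$, with $\opnorm{\Sigma}=O(1)$ following from the centering of the data and Lipschitz concentration of $P_Z$), the correction is $O(N\log^2 N) = o(d^2)$. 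It remains to show $\evmin{\hat G} = \Omega(d^2)$: the vectors $\mathrm{vec}(\hat M_i)$ are i.i.d., mean-zero, sub-Gaussian (Lipschitz images of the Lipschitz-concentrated $z_i$), with $\E\norm{\hat M_i}_F^2 = d^2-\Fnorm{\Sigma}^2 = \Theta(d^2)$, and they live in dimension $d^2\gg N$; an $\varepsilon$-net over $S^{N-1}$ combined with Hanson–Wright for each fixed direction gives $\opnorm{\hat G - (\E\norm{\hat M_1}_F^2)I_N} = o(d^2)$, the $e^{O(N)}$ net cost being absorbed using that the covariance operator $C$ of $\mathrm{vec}(\hat M_i)$ satisfies $\opnorm{C}=O(1)$, $\Fnorm{C}=O(d)$, together with $N = o(d^2)$. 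The case $m=3$ is analogous with $z_i^{\otimes 3}$ in place of $z_iz_i^\top$: one subtracts the mean tensor and the lower-order partial-contraction pieces, and the remaining genuinely-cubic part lives in dimension $\gtrsim d^3\ge d^2\gg N$, so the same scheme applies.

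The main obstacle is this last step for $m\in\{2,3\}$: a sharp operator-norm bound on the centered Gram matrix $\hat G$ in the regime $d\ll N\ll d^2$, where Frobenius bounds fail. This needs a careful net/matrix-concentration argument and, crucially, control of the covariance operator of the centered degree-$m$ tensor features — in particular $\opnorm{C}=O(1)$, which is where the Lipschitz-concentration hypothesis on $P_Z$ and the exact norm constraint $\norm{z}_2^2=d$ enter essentially. A more mechanical alternative to the net argument is the trace-moment method: bound $\E\,\tr((E^{\circ m})^{2p})$ by a combinatorial sum over closed walks on $[N]$ and optimize over $p$, which avoids estimating $C$ directly at the cost of a longer computation.
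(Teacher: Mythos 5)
Your argument has a genuine gap exactly where you flag it: the operator-norm bound $\evmin{\hat G}=\Omega(d^2)$ for the centered Gram matrix (and its $m=3$ analogue) is the entire content of the lemma, and your net/Hanson--Wright sketch does not close. With $\subGnorm{\mathrm{vec}(\hat M_i)}=\Theta(\sqrt d)$ and quadratic-form matrix $uu^\top\otimes I_{d^2}$ (operator norm $1$, Frobenius norm $d$), a deviation of $\varepsilon d^2$ in a fixed direction $u\in S^{N-1}$ has Hanson--Wright exponent $\min\bigl(\varepsilon^2 d^4/(d^2\cdot d^2),\,\varepsilon d^2/d\bigr)=\Theta(\varepsilon^2)$, a constant --- nowhere near enough to pay the $e^{O(N)}$ cost of the net when $N\to\infty$. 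To go further one must use the covariance structure far more sharply than the off-the-shelf inequality delivers; your claim $\opnorm{C}=O(1)$, which you do not prove (it rests essentially on the exact norm constraint and on the centering), is only the opening move. The $m=3$ variant is vaguer still, and would require repeating the whole scheme for centered degree-$3$ tensor features.

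The paper avoids all of this with the Schur product inequality. Writing $AA^\top = \bigl(Z^{*2}(Z^{*2})^\top\bigr)\circ\bigl(Z^{*(m-2)}(Z^{*(m-2)})^\top\bigr)$ and noting $\bigl(Z^{*(m-2)}(Z^{*(m-2)})^\top\bigr)_{ii}=\norm{z_i}_2^{2(m-2)}=d^{m-2}$ exactly by Assumption~\ref{ass:datadist}, the bound $\evmin{B\circ D}\geq\evmin{B}\min_i D_{ii}$ (cited from \cite{schur1911}) reduces every $m\geq 2$ to $m=2$ in one line; the $m=2$ bound $\evmin{Z^{*2}(Z^{*2})^\top}=\Omega(d^2)$ is then imported from Lemma~C.3 of \cite{bombari2023universal}. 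In particular, your case split ($m\geq 4$ via a Frobenius bound, $m\in\{2,3\}$ by hand) is unnecessary: had you established $m=2$, the same Schur step would give every $m\geq 2$ immediately, so no separate treatment of $m=3$ --- nor of $m\geq 4$ --- is needed.
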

\begin{proof}
    As $m \geq 2$, we can write $A =  \left( Z^{*2} \right) \ast \left( Z^{*(m-2)} \right) =: A_2 \ast A_m$ (where $\left( Z^{*0} \right)$ is defined to be the vector full of ones $\mathbf{1}_{N} \in \R^N$). We can provide a lower bound on the smallest eigenvalue of such product through the following inequality \cite{schur1911}:
    \begin{equation}\label{eq:readytoschur}
        \evmin{AA^\top} = \evmin{A_2 A_2^\top \circ A_m A_m^\top} \geq \evmin{A_2 A_2^\top} \min_i \norm{(A_m)_{i:}}_2^2.
    \end{equation}

Note that the rows of $Z$ are mean-0 and Lipschitz concentrated by Lemma \ref{lemma:lipcompxy}. Then, by following the argument of Lemma C.3 in \cite{bombari2023universal}, we have
    \begin{equation}
        \evmin{A_2 A_2^\top} = \Omega(d^2),
    \end{equation}
    with probability at least $1 - \exp (-c \log^2 N)$ over $Z$. We remark that, for the argument of Lemma C.3 in \cite{bombari2023universal} to go through, it suffices that $N = o(d^2 / \log^4 d)$ and $N \log^4 N = o(d^2)$ (see Equations (C.23) and (C.26) in \cite{bombari2023universal}), which is implied by Assumption 
    \ref{ass:overparam}, despite it being milder than Assumption 4 in \cite{bombari2023universal}.

    For the second term of \eqref{eq:readytoschur}, we have
    \begin{equation}
        \norm{(A_m)_{i:}}_2^2 = \norm{z_i}_2^{2 (m-2)} = d ^{m-2},
    \end{equation}
    due to Assumption \ref{ass:datadist}. Thus, the thesis readily follows. 
\end{proof}

\begin{lemma}\label{lemma:evminRF}
We have that
\begin{equation}
    \evmin{K} = \Omega(k),
\end{equation}
with probability at least $1 - \exp \left( -c \log^2 N \right)$ over $V$ and $Z$, where $c$ is an absolute constant. This implies that $\evmin{K_{-1}} = \Omega(k)$.
\end{lemma}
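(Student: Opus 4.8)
The plan is to condition on the data matrix $Z$, first lower bound $\evmin{\E_V[K]}$ by $\Omega(k)$, and then transfer this to $\evmin{K}$ via a matrix concentration argument for the sum of the $k$ independent rank-one summands $\phi(Z v_j)\phi(Z v_j)^\top$. For the first part: conditioning on $Z$ and using $\norm{z_i}_2=\sqrt d$, the pre-activations $v^\top z_i$, $v^\top z_j$ (with $v\sim\mathcal N(0,I_d/d)$) are standard Gaussians with correlation $z_i^\top z_j/d$, so the Hermite product formula of Appendix \ref{app:Hermite} (applicable since the Lipschitz $\phi$ lies in $L^2(\gamma)$) gives
\begin{equation}
    \E_V[K]\;=\;k\sum_{l=0}^{+\infty}\frac{\mu_l^2}{d^l}\bigl(Z^{*l}\bigr)\bigl(Z^{*l}\bigr)^\top,
\end{equation}
where $Z^{*l}\in\R^{N\times d^l}$ has $z_i^{\otimes l}$ on its $i$-th row, so that $\bigl(Z^{*l}\bigr)\bigl(Z^{*l}\bigr)^\top$ has $(i,j)$-th entry $(z_i^\top z_j)^l$. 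Every summand is p.s.d., hence for any fixed $L$ we have $\E_V[K]\succeq k\,\mu_L^2\,d^{-L}\,(Z^{*L})(Z^{*L})^\top$. Since $\phi$ is non-linear (Assumption \ref{ass:activationfunc}), there is an $L\ge 2$ with $\mu_L\neq 0$; fixing one such $L$ and applying Lemma \ref{lemma:mgeq2} yields $\evmin{(Z^{*L})(Z^{*L})^\top}=\Omega(d^L)$ with probability at least $1-\exp(-c\log^2 N)$ over $Z$, and therefore $\evmin{\E_V[K]}=\Omega(k)$ on this event. This is precisely where the milder Assumption \ref{ass:overparam} enters: only the $m\ge 2$ Khatri--Rao estimate of Lemma \ref{lemma:mgeq2} is used, rather than the stronger eigenvalue bounds invoked in \cite{bombari2023universal}.

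For the second part, work on the event above (intersected with the high-probability event $\opnorm{Z}=\bigO{\sqrt{\max(N,d)}}$, which follows from the concentration of the $z_i^\top z_j$) and write $K=\sum_{j=1}^k u_j u_j^\top$ with $u_j:=\phi(Z v_j)$, a sum of i.i.d.\ rank-one p.s.d.\ matrices over $V$. Gaussian Lipschitz concentration (as in Lemma \ref{lemma:justBern}, with the roles of the fixed and the random matrix swapped) shows that $\norm{u_1}_2^2$ concentrates around its mean $\Theta(N)$ with sub-exponential fluctuations, so the truncated variables $u_j':=u_j\,\mathbf 1\{\norm{u_j}_2^2\le R\}$ with $R:=C_0 N\log N$ satisfy $u_j' u_j'^\top\preceq R\,I_N$ almost surely, while $\opnorm{\E_V[u_1' u_1'^\top]-\E_V[u_1 u_1^\top]}=o(1)$ for $C_0$ large enough. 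Hence $\evmin{\sum_j\E_V[u_j' u_j'^\top]}=k\bigl(\evmin{\tfrac1k\E_V[K]}-o(1)\bigr)=\Omega(k)$ by the first part, and the matrix Chernoff lower tail gives
\begin{equation}
    \evmin{K}\;\ge\;\evmin{\textstyle\sum_{j=1}^k u_j' u_j'^\top}\;\ge\;\tfrac12\,\Omega(k)\;=\;\Omega(k)
\end{equation}
with probability at least $1-N\exp\!\bigl(-ck/(N\log N)\bigr)$ over $V$, which is at least $1-\exp(-c\log^2 N)$ since $N\log^3 N=o(k)$ by Assumption \ref{ass:overparam}. A union bound over all the events above gives $\evmin{K}=\Omega(k)$ with probability at least $1-\exp(-c\log^2 N)$, and since $K_{-1}=\Phi_{-1}\Phi_{-1}^\top$ is a principal submatrix of $K$, Cauchy interlacing yields $\evmin{K_{-1}}\ge\evmin{K}=\Omega(k)$ on the same event.

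The main obstacle is the concentration step: one must control the \emph{lower} spectral deviation of $K$ from $\E_V[K]$ at scale $o(k)$. Bounding $\opnorm{K-\E_V[K]}$ directly — e.g.\ through a Frobenius bound built from the entrywise estimates of Lemma \ref{lemma:justBern}, or through matrix Bernstein — loses an extra factor of $N$ (the top eigenvalue of $\E_V[K]$ is as large as $\Theta(kN)$ when $\mu_0\neq 0$) and would require $N^2=o(k)$, which Assumption \ref{ass:overparam} does not supply. Applying matrix Chernoff to the sum of $k$ independent rank-one p.s.d.\ matrices, whose rate is governed by $\max_j\norm{u_j}_2^2=\bigO{N\log N}$ rather than by the largest eigenvalue of $\E_V[K]$, together with the truncation of $\norm{u_j}_2^2$, is what makes the argument go through under the weaker requirement $N\log^3 N=o(k)$.
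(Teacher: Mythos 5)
Your proposal is correct and follows essentially the same path as the paper's proof (which itself defers to Lemma C.5 of \cite{bombari2023universal}): lower-bound $\evmin{\E_V[K]}$ via the Hermite expansion together with Lemma \ref{lemma:mgeq2}, then transfer to $\evmin{K}$ by a column-wise truncation of $\phi(Zv_j)$ followed by the matrix Chernoff lower tail, and finally conclude $\evmin{K_{-1}}\ge\evmin{K}$ by interlacing. The only cosmetic deviation is the truncation level ($R=C_0 N\log N$ rather than the paper's $R=k/\log^2 N$); both choices yield the claimed $1-\exp(-c\log^2 N)$ probability under Assumption \ref{ass:overparam}.
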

\begin{proof}
    The proof follows the same path as Lemma C.5 of \cite{bombari2023universal}.    
    In particular, we define a truncated version of $\Phi$ as follows
    \begin{equation}
    \bar \Phi_{:j} = \phi(Z v_j) \chi \left(\norm{\phi(Z v_j)}_2^2 \leq R \right),
    \end{equation}
    where $\chi$ is the indicator function and we introduce the shorthand $v_i := V_{i:}$. In this case, $\chi = 1$ if $\norm{\phi(Z v_j)}^2_2 \leq R$, and $\chi = 0$ otherwise. As this is a column-wise truncation, it's easy to verify that $\Phi \Phi^\top \succeq \bar \Phi \bar \Phi^\top$.
    Over such truncated matrix, we can use Matrix Chernoff inequality (see Theorem 1.1 of \cite{Tropp2011}), which gives that $\evmin{\bar \Phi \bar \Phi^\top} = \Omega(\evmin{\bar G})$, where $\bar G := \E_V \left[\bar \Phi \bar \Phi^\top \right]$. Finally, we prove closeness between $\bar G$ and $G$, which is analogously defined as $G := \E_V \left[\Phi \Phi^\top \right]$.
    
    To be more specific, setting $R = k / \log^2 N$, we have
    \begin{equation}
        \evmin{K} \geq \evmin{\bar \Phi \bar \Phi^\top} \geq \evmin{\bar G} / 2 \geq \evmin{G} / 2 - o(k),
    \end{equation}
    where the second inequality holds with probability at least $1 - \exp(c_1 \log^2 N)$ over $V$, if $\evmin{G} = \Omega(k)$ (see Equation (C.47) of \cite{bombari2023universal}), and the third comes from Equation (C.45) in \cite{bombari2023universal}. To perform these steps, our Assumptions \ref{ass:overparam} and \ref{ass:activationfunc} are enough, despite the second one being milder than Assumption 2 in \cite{bombari2023universal}.

    To conclude the proof, we are left to prove that $\evmin{G} = \Omega(k)$ with probability at least $1 - \exp(-c_2 \log^2 N)$ over $V$ and $Z$.
    
    We have that
    \begin{equation}
        G = \E_V \left[K\right] = \E_V \left[ \sum_{i = 1}^k \phi(ZV_{i:}^\top) \phi(ZV_{i:}^\top)^\top \right] = k \E_v \left[\phi(Zv)  \phi(Zv)^\top \right] := kM,
    \end{equation}
    where we use the shorthand $v$ to indicate a random variable distributed as $V_{1:}$. We also indicate with $z_i$ the $i$-th row of $Z$.
    Exploiting the Hermite expansion of $\phi$, we can write
    \begin{equation}
        M_{ij} = \E_v \left[\phi(z_i^\top v) \phi(z_j^\top v)\right] = \sum_{l = 0}^{+ \infty} \mu_l^2 \frac{\left(z_i^\top z_j\right)^l}{d^l} = \sum_{l = 0}^{+ \infty} \mu_l^2 \frac{ \left[ \left( Z^{*l} \right)  \left( Z^{*l} \right)^\top \right]_{ij}}{d^l},
    \end{equation}
    where $\mu_l$ is the $l$-th Hermite coefficient of $\phi$. Note that the previous expansion was possible since $\norm{z_i} = \sqrt{d}$ for all $i \in [N]$. As $\phi$ is non-linear, there exists $m \geq 2$ such that $\mu_m ^2 > 0$. In particular, we have $M \succeq \frac{\mu_m^2}{d^m} AA^\top$ in a PSD sense, where we define
    \begin{equation}
        A :=  \left( Z^{*m} \right).
    \end{equation}
    By Lemma \ref{lemma:mgeq2}, the desired result readily follows.
\end{proof}

\begin{lemma}\label{lemma:PpRFon1}
Let $\mu_0\neq 0$. Then, 
    \begin{equation}
        \norm{\Ppm^\perp \mathbf{1}_k}_2 = o(\sqrt k),
    \end{equation}
with probability at least $1 - e^{-cd} - e^{-cN}$ over $V$ and $Z$, where $c$ is an absolute constant.
\end{lemma}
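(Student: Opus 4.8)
The plan is to first turn $\Ppm^\perp\mathbf 1_k$ into the orthogonal-to-$\Phi_{-1}$ part of a sum of \emph{centered} feature vectors, and then bound that sum. Recall that the rows of $\Phi_{-1}$ are $\varphi(z_2),\dots,\varphi(z_N)$, so $\Ppm^\perp\varphi(z_i)=0$ for every $i\in\{2,\dots,N\}$; on the other hand, since $\norm{z_i}_2=\sqrt d$, the pre-activation $v^\top z_i$ is a standard Gaussian, hence $\E_V[\varphi(z_i)]=\mu_0\mathbf 1_k$. As $\mu_0\neq 0$, these two facts give, for each $i\geq 2$,
\begin{equation*}
\Ppm^\perp\mathbf 1_k=\tfrac{1}{\mu_0}\,\Ppm^\perp\!\big(\E_V[\varphi(z_i)]-\varphi(z_i)\big)=-\tfrac{1}{\mu_0}\,\Ppm^\perp\tilde\varphi(z_i),
\end{equation*}
and averaging over $i=2,\dots,N$ yields $\Ppm^\perp\mathbf 1_k=-\tfrac{1}{\mu_0(N-1)}\Ppm^\perp\sum_{i=2}^N\tilde\varphi(z_i)$. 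Since $\Ppm^\perp$ is a contraction, it thus suffices to show $\big\|\sum_{i=2}^N\tilde\varphi(z_i)\big\|_2^2=o(\mu_0^2 N^2 k)$ with the stated probability.

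Next I would estimate this sum coordinate-wise. Writing $v_j:=V_{j:}$, the $j$-th coordinate of $\sum_{i=2}^N\tilde\varphi(z_i)$ is $S_j:=\mathbf 1_{N-1}^\top\phi(Z_{-1}v_j)-\E_{v_j}[\,\cdot\,]$, and, conditionally on $Z$, the $S_j$ are i.i.d.\ mean-zero random variables (one per row of $V$), so $\big\|\sum_{i=2}^N\tilde\varphi(z_i)\big\|_2^2=\sum_{j=1}^k S_j^2$. The map $v\mapsto\mathbf 1_{N-1}^\top\phi(Z_{-1}v)$ is $L\sqrt{N-1}\,\opnorm{Z_{-1}}$-Lipschitz (Cauchy--Schwarz on the $\mathbf 1_{N-1}^\top$ contraction, then $L$-Lipschitzness of $\phi$), so, rescaling $v_j$ to a standard Gaussian, $S_j$ is sub-Gaussian with $\subGnorm{S_j}^2\leq C L^2 N\,\opnorm{Z_{-1}}^2/d=:\mathcal B$. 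Hence $S_j^2$ is sub-exponential with $\psi_1$-norm at most $\mathcal B$ and $\E[S_j^2]\leq C\mathcal B$, and Bernstein's inequality gives $\sum_{j=1}^k S_j^2\leq C'k\mathcal B$ with probability at least $1-e^{-cN}$.

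Finally I would control $\opnorm{Z_{-1}}$. The rows $z_i$ of $Z_{-1}$ are i.i.d., mean zero and $O(1)$-sub-Gaussian (from the Lipschitz concentration property of Assumption \ref{ass:datadist}), and for any unit vector $u$ the $1$-Lipschitz map $z\mapsto u^\top z$ has variance $O(1)$, so $\opnorm{\E[z_iz_i^\top]}=O(1)$; the standard operator-norm bound for a random matrix with independent sub-Gaussian rows then yields $\opnorm{Z_{-1}}^2=O(N+d)$ with probability at least $1-e^{-cd}$. Substituting this into $\mathcal B$ and into the previous displays,
\begin{equation*}
\big\|\Ppm^\perp\mathbf 1_k\big\|_2^2\;\leq\;\frac{C'k\mathcal B}{\mu_0^2(N-1)^2}\;\leq\;\frac{C''L^2 k(N+d)}{\mu_0^2\,Nd}\;=\;\frac{C''L^2 k}{\mu_0^2 d}+\frac{C''L^2 k}{\mu_0^2 N}\;=\;o(k),
\end{equation*}
since $d,N\to\infty$ and $L,\mu_0$ are fixed; a union bound then gives the claimed probability $1-e^{-cd}-e^{-cN}$.

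The main obstacle, and the only genuinely delicate point, is the bookkeeping of scalings: summing $N$ \emph{correlated} feature vectors makes the Lipschitz constant of $S_j$ grow like $\sqrt N\,\opnorm{Z_{-1}}$, and one must verify that after dividing by $\mu_0^2(N-1)^2$ the resulting bound is truly $o(k)$ rather than merely $O(k)$ — this is exactly where the $1/d$ and $1/N$ gains, together with the crude but sufficient estimate $\opnorm{Z_{-1}}^2=O(N+d)$, enter. The remaining pieces are routine applications of Gaussian Lipschitz concentration and Bernstein's inequality.
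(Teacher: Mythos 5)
Your proof is correct and is essentially the same argument as the paper's, merely repackaged. Both proofs hinge on the same core identity: since $\E_V[\varphi(z_i)]=\mu_0\mathbf 1_k$ and $\Ppm^\perp\varphi(z_i)=0$ for $i\geq 2$, one has $\Ppm^\perp\mathbf 1_k = -\tfrac{1}{\mu_0(N-1)}\Ppm^\perp\tilde\Phi_{-1}^\top\mathbf 1_{N-1}$, so the task reduces to bounding $\norm{\tilde\Phi_{-1}^\top\mathbf 1_{N-1}}_2$. The paper does this by bounding $\opnorm{\tilde\Phi_{-1}^\top}$ (via the i.i.d.\ sub-Gaussian columns $\tilde\Phi_{:j}$ over $V$, Lemma B.7 of \cite{bombari2022memorization}) and multiplying by $\norm{\mathbf 1_{N-1}}_2=\sqrt{N-1}$; you instead bound the specific vector $\tilde\Phi_{-1}^\top\mathbf 1_{N-1}$ coordinate-wise, noting $\sum_j S_j^2$ is a sum of $k$ i.i.d.\ sub-exponential terms and invoking Bernstein. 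Both routes rely on the same two ingredients — Gaussian Lipschitz concentration over $V$ conditionally on $Z$, and the operator-norm bound $\opnorm{Z_{-1}}=\bigO{\sqrt N+\sqrt d}$ from the sub-Gaussian rows of $Z_{-1}$ — and both land on the same order $\norm{\Ppm^\perp\mathbf 1_k}_2^2 = \bigO{k/d + k/N} = o(k)$. One small note: your Bernstein step naturally gives a probability $1-e^{-ck}$; you quote $1-e^{-cN}$, which is fine since $k\gg N$ under Assumption \ref{ass:overparam}, but it would be cleaner to state that.
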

\begin{proof}
Note that $\Phi_{-1}^\top = \mu_0 \mathbf 1_k \mathbf 1_{N-1}^\top + \tilde \Phi_{-1}^\top$. Here, $\tilde \Phi_{-1}^\top$ is a $k \times (N-1)$ matrix with i.i.d.\ and mean-0 rows, whose sub-Gaussian norm (in the probability space of $V$) can be bounded as     \begin{equation}\label{eq:iidrows1}
        \subGnorm{\tilde \Phi_{:i}} = \subGnorm{\phi(Z V_{i:}) - \E_V [\phi(Z V_{i:})]} \leq L \frac{\opnorm{Z}}{\sqrt d} = \bigO{\sqrt{N/d} + 1},
    \end{equation}
    where first inequality holds since $\phi$ is $L$-Lipschitz and $V_{i:}$ is a Gaussian (and hence, Lipschitz concentrated) vector with covariance $I / d$. The last step holds with probability at least $1 - e^{-cd}$ over $Z$, because of Lemma B.7 in \cite{bombari2022memorization}.
    
    Thus, another application of Lemma B.7 in \cite{bombari2022memorization} gives
    \begin{equation}\label{eq:iidrows2}
        \opnorm{\tilde \Phi_{-1}^\top} = \bigO{\left(\sqrt{k} + \sqrt N \right)\left(\sqrt{N/d} + 1 \right)} = \bigO{\sqrt{k}\left(\sqrt{N/d} + 1 \right)},
    \end{equation}
    where the first equality holds with probability at least $1 - e^{-cN}$ over $V$, and the second is a direct consequence of Assumption \ref{ass:overparam}.

    We can write
    \begin{equation}
        \Phi_{-1}^\top \frac{\mathbf{1}_{N-1}}{\mu_0 (N-1)}= \left( \mu_0 \mathbf 1_k \mathbf 1_{N-1}^\top + \tilde \Phi_{-1}^\top \right) \frac{\mathbf{1}_{N-1}}{\mu_0 (N-1)} =
        \mathbf 1_k + \tilde \Phi_{-1}^\top \frac{\mathbf{1}_{N-1}}{\mu_0 (N-1)} =: \mathbf 1_k + v,
    \end{equation}
    where
    \begin{equation}
        \norm{v}_2 \leq \frac{1}{\mu_0 (N-1)} \opnorm{\tilde\Phi_{-1}^\top} \norm{\mathbf{1}_{N-1}}_2 = \bigO{\sqrt{\frac{k}{N}} \left(\sqrt{N/d} + 1 \right)} = o(\sqrt k).
    \end{equation}
    
    Thus, we can conclude
    \begin{equation}
    \begin{aligned}
        \norm{\Ppm^\perp \mathbf{1}_k}_2 &= \norm{\Ppm^\perp \left( \Phi_{-1}^\top \frac{\mathbf{1}_{N-1}}{\mu_0 (N-1)} - v \right)}_2 \\
        &\leq \norm{\Ppm^\perp \Ppm \Phi_{-1}^\top \frac{\mathbf{1}_{N-1}}{\mu_0 (N-1)}}_2 + \norm{v}_2 = o(\sqrt k),
    \end{aligned}
    \end{equation}
    where in the second step we use the triangle inequality, $\Phi_{-1}^\top = \Ppm \Phi_{-1}^\top$, and $\norm{\Ppm^\perp v}_2 \leq \norm{v}_2$.
\end{proof}

\begin{lemma}\label{lemma:Pseeslin}
    Let $z \sim \mathcal P_Z$, sampled independently from $Z_{-1}$, and denote $\tilde\phi(x):= \phi(x)-\mu_0$. Then,
    \begin{equation}
        \norm{\Ppm \left(\tilde \phi(Vz) - \mu_1 Vz \right)}_2 = o(\sqrt k),
    \end{equation}
    with probability at least $1 - \exp \left( -c \log^2 N \right)$ over $V$, $Z_{-1}$ and $z$, where $c$ is an absolute constant.
\end{lemma}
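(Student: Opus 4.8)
The plan is to bound $\norm{\Ppm w}_2$, where I abbreviate $w := \tilde\phi(Vz) - \mu_1 Vz = \phi(Vz) - \mu_0\mathbf 1_k - \mu_1 Vz$, i.e.\ the part of the centered feature vector carrying Hermite degrees $\ge 2$, by going through the explicit form of the projector. By \eqref{eq:projform} we have $\Ppm = \Phi_{-1}^\top K_{-1}^{-1}\Phi_{-1}$, hence $\norm{\Ppm w}_2^2 = w^\top\Phi_{-1}^\top K_{-1}^{-1}\Phi_{-1}w \le \norm{\Phi_{-1}w}_2^2/\evmin{K_{-1}}$. Since Lemma \ref{lemma:evminRF} gives $\evmin{K_{-1}} = \Omega(k)$ with probability $1 - e^{-c\log^2 N}$, it suffices to prove $\norm{\Phi_{-1}w}_2^2 = o(k^2)$. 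Writing $\norm{\Phi_{-1}w}_2^2 = \sum_{i=2}^N(\phi(Vz_i)^\top w)^2 \le N\max_{2\le i\le N}(\phi(Vz_i)^\top w)^2$, the task reduces to showing $|\phi(Vz_i)^\top w| = o(k/\sqrt N)$ uniformly over $i$.

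The key step is that, for each fixed $i$, the inner product $\phi(Vz_i)^\top w = \phi(Vz_i)^\top\phi(Vz) - \mu_0\,\phi(Vz_i)^\top\mathbf 1_k - \mu_1\,\phi(Vz_i)^\top Vz$ is a sum of $k$ i.i.d.\ contributions over the rows $v_j$ of $V$. Taking $\E_V$ and using the Hermite expansion identity from Appendix \ref{app:Hermite} (with $\rho_1 = v_j^\top z_i$, $\rho_2 = v_j^\top z$ standard Gaussians of correlation $c_i := z_i^\top z/d$, which is legitimate since $\norm{z_i}_2 = \norm{z}_2 = \sqrt d$), the degree-$0$ and degree-$1$ parts cancel and
\[
\E_V\!\big[\phi(Vz_i)^\top w\big] = k\sum_{l\ge 2}\mu_l^2 c_i^l, \qquad \big|\E_V[\phi(Vz_i)^\top w]\big| \le k\,c_i^2\!\sum_{l\ge 2}\mu_l^2 = \bigO{k\,c_i^2},
\]
using $\sum_l\mu_l^2 = \norm{\phi}_{L^2(\gamma)}^2 < \infty$ (finite since $\phi$ is Lipschitz) and $|c_i|\le 1$. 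Because $z$ is independent of $Z_{-1}$, centered, and both have norm $\sqrt d$, the inner product $z_i^\top z$ is $\sqrt d$-Lipschitz concentrated around $0$ (Lemma \ref{lemma:lipcompxy}), so $|c_i| = \bigO{\log N/\sqrt d}$ uniformly over $i$ with probability $1 - Ne^{-c\log^2 N}$, which gives $|\E_V[\phi(Vz_i)^\top w]| = \bigO{k\log^2 N/d}$.

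For the fluctuations, each of the three terms is a sum of $k$ i.i.d.\ sub-exponential variables in the probability space of $V$ (products or linear images of Lipschitz functions of Gaussians), so Lemma \ref{lemma:justBern} — applied with $\tau = \phi$ and $\zeta$ equal to $\phi$, then to a constant, then to the identity (taking $t = N$) — shows that each concentrates around its $V$-expectation up to $\bigO{\sqrt k\log N}$ with probability $1 - e^{-c\log^2 N}$. Since $\log N = o(\sqrt k)$ by Assumption \ref{ass:overparam}, a union bound over $i\in\{2,\dots,N\}$ yields $|\phi(Vz_i)^\top w| = \bigO{k\log^2 N/d + \sqrt k\log N}$ for all $i$. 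Plugging back,
\[
\norm{\Ppm w}_2^2 = \bigO{\tfrac{N(k\log^2 N/d + \sqrt k\log N)^2}{k}} = \bigO{\tfrac{Nk\log^4 N}{d^2} + N\log^2 N} = o(k),
\]
where the last step uses $N\log^2 N = o(k)$ (first condition of Assumption \ref{ass:overparam}) and $N\log^4 N = o(d^2)$ (obtained by chaining its first and third conditions). Taking square roots concludes, the total failure probability being $\bigO{N e^{-c\log^2 N}}$, which is $e^{-c'\log^2 N}$ for $N$ large.

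I expect the main obstacle to be establishing that the degree-$\ge 2$ remainder $w$ is \emph{approximately orthogonal} to every training feature vector $\phi(Vz_i)$: this is precisely what the Hermite cancellation (removal of the degree-$0$ and degree-$1$ parts) combined with the smallness of the cross-correlations $c_i$ provides, and it is exactly what makes $\norm{\Phi_{-1}w}_2^2$ small enough to survive division by $\evmin{K_{-1}} = \Theta(k)$. The accompanying hazard is the bookkeeping of the logarithmic factors from the two independent sources — the $\bigO{\log N/\sqrt d}$ bound on $c_i$ and the $\bigO{\sqrt k\log N}$ Bernstein fluctuation — so that the final quantity is genuinely $o(k)$; this is where Assumption \ref{ass:overparam}, in particular its consequence $N\log^4 N = o(d^2)$, is used in an essential way.
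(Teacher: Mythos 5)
Your proof is correct and follows essentially the same route as the paper's: you reduce to bounding $\norm{\Phi_{-1}w}_2^2/\evmin{K_{-1}}$ (the paper uses the equivalent $\opnorm{\Phi_{-1}^+}\,\norm{\Phi_{-1}w}_2$), you bound each entry $\phi(Vz_i)^\top w$ by its $V$-expectation (controlled via the Hermite cancellation of degrees $0,1$ together with sub-Gaussian concentration of $z_i^\top z$) plus its fluctuation (controlled via Lemma \ref{lemma:justBern}), and you close with a union bound and Assumption \ref{ass:overparam}. The only differences are cosmetic bookkeeping (per-entry max-norm bound versus the paper's triangle-inequality split of $\norm{u}_2$), and both yield the same order.
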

\begin{proof}
As 
$\Ppm = \Phi_{-1}^+ \Phi_{-1}$, we have
    \begin{equation}\label{eq:Plin1}
    \begin{aligned}
        \norm{\Ppm \left( \tilde \phi(Vz) - \mu_1 Vz \right)}_2 &\leq \opnorm{\Phi_{-1}^+} \norm{\Phi_{-1} \left( \tilde \phi(Vz) - \mu_1 Vz \right)}_2 \\
        &= \bigO{\frac{\norm{\Phi_{-1} \left( \tilde \phi(Vz) - \mu_1 Vz \right)}_2}{\sqrt k}},
    \end{aligned}
    \end{equation}
    where the last equality holds with probability at least $1 - \exp \left( -c \log^2 N \right)$ over $V$ and $Z_{-1}$, because of Lemma \ref{lemma:evminRF}. 
    
    An application of Lemma \ref{lemma:justBern} with $t = N$ gives
    \begin{equation}
        \left| u_i - \E_V[u_i] \right| = \bigO{\sqrt k \log N},
    \end{equation}
    where $u_i$ is the $i$-th entry of the vector $u := \Phi_{-1} \left( \tilde \phi(Vz) - \mu_1 Vz \right)$. This can be done since both $\phi$ and $\tilde \phi \equiv \phi - \mu_0$ are Lipschitz, $v_j \sim \mathcal N(0, I/d)$, and $\norm{z}_2 = \norm{z_{i+1}}_2 = \sqrt d$.
    Performing a union bound over all entries of $u$, we can guarantee that the previous equation holds for every $1 \leq i \leq N-1$, with probability at least $1 - (N-1) \exp(-c \log^2 N) \geq 1 - \exp(-c_1 \log^2 N)$. Thus, we have
    \begin{equation}\label{eq:Plinfluct}
        \norm{u - \E_V [u]}_2 = \bigO{\sqrt k \sqrt N \log N} = o(k),
    \end{equation}
    where the last equality holds because of Assumption \ref{ass:overparam}.
    
    Note that the function $f(x):=\tilde \phi(x) - \mu_1 x$ has the first 2 Hermite coefficients equal to 0. Hence, as $v_i^\top z$ and $v_i^\top z_i$ are standard Gaussian random variables with correlation $\frac{z^\top z_i}{\norm{z}_2\norm{z_i}_2}$, we have
    \begin{equation}
    \begin{aligned}
        \left| \E_{V} \left[ u_i \right] \right| \leq &\, k \sum_{l = 2}^{+ \infty} \mu_l^2 \left( \frac{ \left| z^\top z_i \right|}{\norm{z}_2\norm{z_i}_2} \right) ^l \\
        \leq & \,k \max_l \mu_l^2 \, \sum_{l = 2}^{+ \infty} \left( \frac{ \left| z^\top z_i \right| }{\norm{z}_2\norm{z_i}_2} \right)^l \\
        = & \,k \max_l \mu_l^2   \left( \frac{z^\top z_i}{\norm{z}_2\norm{z_i}_2} \right)^2 \frac{1}{1 - \frac{\left| z^\top z_i \right|}{\norm{z}_2\norm{z_i}_2}} \\
        \leq & \, 2 k \max_l \mu_l^2   \left( \frac{z^\top z_i}{\norm{z}_2\norm{z_i}_2} \right)^2 = \bigO{\frac{k \log^2 N}{d}},
    \end{aligned}
    \end{equation}
    where the last inequality holds with probability at least $1 - \exp \left( -c \log^2 N \right)$ over $z$ and $z_i$, as they are two independent, mean-0, sub-Gaussian random vectors. Again, performing a union bound over all entries of $\E_{V} \left[ u \right] $, we can guarantee that the previous equation holds for every $1 \leq i \leq N-1$, with probability at least $1 - (N-1) \exp(-c \log^2 N) \geq 1 - \exp(-c_1 \log^2 N)$. Then, we have
    \begin{equation}\label{eq:PlinE}
        \norm{\E_V [u]}_2 = \bigO{\sqrt N \, \frac{k \log^2 N}{d}} = o(k),
    \end{equation}
    where the last equality is a consequence of Assumption \ref{ass:overparam}.
    
    Finally, \eqref{eq:Plinfluct} and \eqref{eq:PlinE} give
    \begin{equation}
        \norm{\Phi_{-1} \left( \tilde \phi(Vz) - \mu_1 Vz \right)}_2 \leq \norm{\E_V [u]}_2 + \norm{u - \E_V [u]}_2 = o(k),
    \end{equation}
    which plugged in \eqref{eq:Plin1} readily provides the thesis.
\end{proof}

\begin{lemma}\label{lemma:RFlast}
We have 
    \begin{equation}
        \left| \left(V z_1^s \right)^\top \Ppm^\perp V z_1 - \norm{\Ppm^\perp V_y y_1}_2^2 \right| = o(k),
    \end{equation}
    with probability at least $1 - \exp(-c \log^2 N)$ over $x$, $z_1$ and $V$, where $c$ is an absolute constant.
\end{lemma}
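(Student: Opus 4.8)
The plan is to expand the bilinear form along the block structure $V=[V_x,V_y]$, isolate the target term, and kill the rest by one-dimensional concentration. Writing $Vz_1=V_x x_1+V_y y_1$ and $Vz_1^m=V_x x+V_y y_1$, we have
\begin{align*}
\left(Vz_1^m\right)^\top \Ppm^\perp Vz_1 &= \norm{\Ppm^\perp V_y y_1}_2^2 + (V_x x)^\top \Ppm^\perp V_y y_1 \\
&\quad + (V_y y_1)^\top \Ppm^\perp V_x x_1 + (V_x x)^\top \Ppm^\perp V_x x_1 .
\end{align*}
So it suffices to show that each of the three cross terms is $o(k)$ with probability at least $1-\exp(-c\log^2 N)$.

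The structural point is that $\Ppm$, the projector onto $\Span\{\Rows(\Phi_{-1})\}$, is a function of $V$ and $Z_{-1}$ only, so the pair $(V,\Ppm)$ is independent of $x$ and of $z_1=[x_1,y_1]$, and $x_1\perp y_1$. Each cross term thus has the form $a^\top M b$ with $\{a,b\}\subseteq\{x,x_1,y_1\}$ two independent, centered, Lipschitz-concentrated vectors of norm at most $\sqrt d$ (Assumption \ref{ass:datadist}), and $M$ built from $V$ and $\Ppm^\perp$ and independent of at least one of $a,b$. For $(V_x x)^\top \Ppm^\perp V_x x_1 = x^\top\!\big(V_x^\top \Ppm^\perp V_x x_1\big)$ we condition on $(V,Z_{-1},x_1)$: the map $x\mapsto x^\top\!\big(V_x^\top \Ppm^\perp V_x x_1\big)$ is linear with Lipschitz constant $\norm{V_x^\top \Ppm^\perp V_x x_1}_2\le\opnorm{V_x}^2\sqrt d$ and mean zero (since $\E[x]=0$), so the Lipschitz concentration property of $P_X$ gives $\big|x^\top V_x^\top \Ppm^\perp V_x x_1\big|=\bigO{\opnorm{V_x}^2\sqrt d\,\log N}$ with probability at least $1-\exp(-c\log^2 N)$. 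The same argument, conditioning on $(V,Z_{-1},x_1)$ and using the randomness of $y_1$, controls $(V_y y_1)^\top \Ppm^\perp V_x x_1=y_1^\top\!\big(V_y^\top \Ppm^\perp V_x x_1\big)$ by $\bigO{\opnorm{V_x}\opnorm{V_y}\sqrt d\,\log N}$; and conditioning on $(V,Z_{-1},y_1)$ and using the randomness of $x$ controls $(V_x x)^\top \Ppm^\perp V_y y_1$ by the same bound.

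To conclude, Theorem 4.4.5 of \cite{vershynin2018high} gives $\opnorm{V_x},\opnorm{V_y}\le\opnorm{V}=\bigO{\sqrt{k/d}+1}$ with probability at least $1-\exp(-cd)$, which is absorbed into $1-\exp(-c\log^2 N)$ since Assumption \ref{ass:overparam} forces $\log^2 N=o(d)$. Hence each cross term is $\bigO{(k/d+1)\sqrt d\,\log N}=\bigO{k\log N/\sqrt d+\sqrt d\,\log N}$. Merging the first and third conditions in \eqref{eq:overp} gives $N=o(d^2)$, so $\log N=\bigO{\log d}$ and $k\log N/\sqrt d=o(k)$; the second condition $\sqrt d\log d=o(k)$ then also yields $\sqrt d\log N=o(k)$. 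A union bound over the three cross terms and the event for $\opnorm{V}$ finishes the proof. The one delicate point is the bookkeeping of the conditioning: in each cross term one must single out the right vector among $x,x_1,y_1$ as the fresh randomness so that the remaining factor is deterministic and the one-dimensional concentration applies with mean exactly zero; the operator-norm bounds and the scaling arithmetic are then routine.
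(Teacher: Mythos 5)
Your proof is correct and follows essentially the same route as the paper's: expand $Vz_1^m$ and $Vz_1$ over the block structure $V=[V_x,V_y]$, isolate $\norm{\Ppm^\perp V_y y_1}_2^2$, and kill the remaining cross terms by treating one of $x,x_1,y_1$ as fresh sub-Gaussian/Lipschitz-concentrated randomness paired with operator-norm bounds on $V_x,V_y,\Ppm^\perp$ and the scaling in Assumption~\ref{ass:overparam}. The only cosmetic difference is that you expand fully into three cross terms whereas the paper groups two of them as $(V_x x)^\top\Ppm^\perp Vz_1$ and treats $x$ as the fresh vector; the concentration mechanism and arithmetic are identical.
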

\begin{proof}
    We have
    \begin{equation}
        V z_1^s = V_x x + V_y y_1, \qquad V z_1 = V_x x_1 + V_y y_1.
    \end{equation}
    Thus, we can write
    \begin{equation}\label{eq:RFlast}
    \begin{aligned}
        \left| \left(V z_1^s \right)^\top \Ppm^\perp V z_1 - \norm{\Ppm^\perp V_y y_1}_2^2 \right| &= \left| \left(V_x x \right)^\top \Ppm^\perp V z_1 + \left(V_y y_1 \right)^\top \Ppm^\perp V_x x_1 \right| \\
        & \leq \left|  x^\top  V_x^\top \Ppm^\perp V z_1 \right| + \left| y_1^\top V_y^\top \Ppm^\perp V_x x_1 \right|.
    \end{aligned}
    \end{equation}
    Let's look at the first term of the RHS of the previous equation. Notice that $\opnorm{V} = \bigO{\sqrt{k/d} + 1}$ with probability at least $1 - 2e^{-cd}$, because of Theorem 4.4.5 of \cite{vershynin2018high}. We condition on such event until the end of the proof, which also implies having the same bound on $\opnorm{V_x}$ and $\opnorm{V_y}$. Since $x$ is a mean-0 sub-Gaussian vector, independent from $V_x^\top \Ppm^\perp V z_1$, we have
    \begin{equation}\label{eq:RFlast1}
    \begin{aligned}
         \left| x^\top  V_x^\top \Ppm^\perp V z_1 \right| &\leq \log N \norm{V_x^\top \Ppm^\perp V z_1}_2 \\
         &\leq \log N \opnorm{V_x} \opnorm{\Ppm^\perp} \opnorm{V} \norm{z_1}\\
         &= \bigO{\log N \left(\frac{k}{d} + 1 \right) \sqrt{d}} = o(k),
    \end{aligned}
    \end{equation}
    where the first inequality holds with probability at least $1 - \exp(-c \log^2 N)$ over $x$, and the last line holds because $\opnorm{\Ppm^\perp} \leq 1$, $\norm{z_1} = \sqrt d$, and because of Assumption \ref{ass:overparam}.

    Similarly, exploiting the independence between $x_1$ and $y_1$, we can prove that $\left| y_1^\top V_y^\top \Ppm^\perp V_x x_1 \right|  = o(k)$, with probability at least $1 - \exp(-c \log^2 N)$ over $y_1$. Plugging this and \eqref{eq:RFlast1} in \eqref{eq:RFlast} readily gives the thesis.
\end{proof}

\begin{lemma}\label{lemma:tempfinRF}
    We have
    \begin{equation}
        \left|  \varphi(z_1^s)^\top \Ppm^\perp \varphi(z_1) - \left( k \left( \sum_{l = 2}^{+\infty} \mu_l^2 \alpha^l \right) + \mu_1^2 \norm{\Ppm^\perp V_y y_1}_2^2 \right)\right| = o(k),
    \end{equation}
    with probability at least $1 - \exp(-c \log^2 N)$ over $V$ and $Z$, where $c$ is an absolute constant.
\end{lemma}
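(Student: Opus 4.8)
The plan is to reduce $\varphi(z_1^m)^\top \Ppm^\perp \varphi(z_1)$, through a chain of controlled approximations, to quantities that are either evaluated in closed form by Lemma \ref{lemma:verycorr} or appear verbatim in the target expression by Lemma \ref{lemma:RFlast}. Two decompositions organize everything: first $\varphi(z) = \mu_0\mathbf 1_k + \tilde\varphi(z)$ with $\tilde\varphi(z) = \tilde\phi(Vz)$ and $\tilde\phi := \phi - \mu_0$, to pass to the centered feature map; and second $\tilde\varphi(z) = \mu_1 Vz + r(z)$ with $r(z) := \tilde\phi(Vz) - \mu_1 Vz$, to isolate its linear part. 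I will freely use that $\norm{\tilde\varphi(z_1)}_2, \norm{\tilde\varphi(z_1^m)}_2, \norm{Vz_1}_2, \norm{Vz_1^m}_2, \norm{r(z_1)}_2, \norm{r(z_1^m)}_2$ are all $\bigO{\sqrt k}$ with high probability — by the second part of Lemma \ref{lemma:justBern} applied to the Lipschitz maps $\tilde\phi$, $\rho\mapsto\rho$, $\rho\mapsto\tilde\phi(\rho)-\mu_1\rho$, together with $\norm{z_1}_2 = \norm{z_1^m}_2 = \sqrt d$ — and that $z_1$ and $z_1^m = [x,y_1]$ are both distributed as $P_Z$ and independent of $Z_{-1}$, so that Lemma \ref{lemma:Pseeslin} applies to each of them; the bound $\evmin{K_{-1}} = \Omega(k)$, ensuring $\Ppm$ is well-defined, comes from Lemma \ref{lemma:evminRF}.

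\textbf{Centering.} Expanding $\varphi(z_1^m)^\top \Ppm^\perp \varphi(z_1)$ with $\varphi = \mu_0\mathbf 1_k + \tilde\varphi$ yields $\tilde\varphi(z_1^m)^\top \Ppm^\perp \tilde\varphi(z_1)$ plus three cross terms, each carrying a factor $\Ppm^\perp\mathbf 1_k$; since $\norm{\Ppm^\perp\mathbf 1_k}_2 = o(\sqrt k)$ by Lemma \ref{lemma:PpRFon1} (and these terms vanish identically when $\mu_0 = 0$), Cauchy--Schwarz with the $\bigO{\sqrt k}$ bounds makes them $o(k)$. So $\varphi(z_1^m)^\top \Ppm^\perp \varphi(z_1) = \tilde\varphi(z_1^m)^\top \Ppm^\perp \tilde\varphi(z_1) + o(k)$.

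\textbf{Linearizing the projected part.} Split $\tilde\varphi(z_1^m)^\top \Ppm^\perp \tilde\varphi(z_1) = \tilde\varphi(z_1^m)^\top\tilde\varphi(z_1) - \tilde\varphi(z_1^m)^\top \Ppm\tilde\varphi(z_1)$. In the second piece, substituting $\tilde\varphi(z) = \mu_1 Vz + r(z)$ on both factors produces $\mu_1^2 (Vz_1^m)^\top \Ppm (Vz_1)$ plus three terms carrying at least one $r(\cdot)$; inserting $\Ppm = \Ppm^2$ and using Cauchy--Schwarz, each is bounded by $\norm{\Ppm r(z)}_2 = o(\sqrt k)$ (Lemma \ref{lemma:Pseeslin}) times an $\bigO{\sqrt k}$ factor, hence $o(k)$. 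Writing $\Ppm = I - \Ppm^\perp$ once more and invoking Lemma \ref{lemma:verycorr} with $\tau = \mathrm{id}$ (Hermite coefficients $\mu_1 = 1$, all others $0$) gives $(Vz_1^m)^\top(Vz_1) = k\alpha + o(k)$, while Lemma \ref{lemma:RFlast} gives $(Vz_1^m)^\top\Ppm^\perp(Vz_1) = \norm{\Ppm^\perp V_y y_1}_2^2 + o(k)$; thus $\mu_1^2(Vz_1^m)^\top\Ppm(Vz_1) = \mu_1^2 k\alpha - \mu_1^2\norm{\Ppm^\perp V_y y_1}_2^2 + o(k)$. For the first piece, $\tilde\varphi(z_1^m)^\top\tilde\varphi(z_1) = \tilde\phi(Vz_1^m)^\top\tilde\phi(Vz_1) = k\sum_{l=1}^{+\infty}\mu_l^2\alpha^l + o(k)$ by Lemma \ref{lemma:verycorr} with $\tau = \tilde\phi$ (Hermite coefficients $\mu_l$ for $l \ge 1$ and $0$ for $l = 0$). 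Combining, and using $\sum_{l\ge1}\mu_l^2\alpha^l - \mu_1^2\alpha = \sum_{l\ge2}\mu_l^2\alpha^l$, gives $\tilde\varphi(z_1^m)^\top\Ppm^\perp\tilde\varphi(z_1) = k\sum_{l\ge2}\mu_l^2\alpha^l + \mu_1^2\norm{\Ppm^\perp V_y y_1}_2^2 + o(k)$, which together with the centering step is the thesis; a union bound over the finitely many events above keeps the failure probability $\exp(-c\log^2 N)$.

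\textbf{Main obstacle.} All the genuinely hard estimates are already packaged into Lemmas \ref{lemma:evminRF}, \ref{lemma:Pseeslin}, \ref{lemma:RFlast}, and \ref{lemma:verycorr}; the remaining work is bookkeeping — verifying that each cross term spawned by the two decompositions is truly $o(k)$. The most delicate point is the linearization step, where one pairs $\norm{\Ppm r(z)}_2 = o(\sqrt k)$ against $\bigO{\sqrt k}$ norm factors — this is exactly where Lemma \ref{lemma:Pseeslin}, and behind it the scalings $N = o(k)$ and $k = o(d^2)$ from Assumption \ref{ass:overparam}, are indispensable — and where one must also check that the error $\bigO{\sqrt k(\sqrt{k/d}+1)\log N}$ in Lemma \ref{lemma:verycorr} is $o(k)$, which again uses $k\log^4 k = o(d^2)$.
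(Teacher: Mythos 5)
Your proposal is correct and follows essentially the same route as the paper's proof: center the features via Lemma \ref{lemma:PpRFon1}, split off $\Ppm$ and linearize the projected part via Lemma \ref{lemma:Pseeslin}, then write $\Ppm = I - \Ppm^\perp$ and evaluate the resulting terms with Lemma \ref{lemma:verycorr} (for $\tilde\phi$ and the identity) and Lemma \ref{lemma:RFlast}. The paper's argument is exactly this chain, collected in its equations \eqref{eq:RFfinal1}, \eqref{eq:RFfinal2}, \eqref{eq:laststepnumeratorrf1}--\eqref{eq:laststepnumeratorrf3}; your decomposition $\tilde\varphi = \mu_1 Vz + r(z)$ and the cross-term bookkeeping via $\Ppm = \Ppm^2$ simply spells out how Lemma \ref{lemma:Pseeslin} delivers the paper's \eqref{eq:RFfinal2}.
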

\begin{proof}
    An application of Lemma \ref{lemma:justBern} and Assumption \ref{ass:overparam} gives
    \begin{equation}\label{eq:upboundsnorm}
    \begin{aligned}
        \norm{\varphi(z_1)}_2 &= \bigO{\sqrt k}, \qquad \norm{\varphi(z_1^s)}_2 = \bigO{\sqrt k},\\
        \norm{V z_1}_2 &= \bigO{\sqrt k}, \qquad \norm{V z_1^s}_2 = \bigO{\sqrt k},
    \end{aligned}
    \end{equation}
    with probability at least $1 - \exp(-c_1 \log^2 N)$ over $V$, where $c_1$ is an absolute constant. We condition on such high probability event until the end of the proof.

    Let's suppose $\mu_0 \neq 0$. Then, we have
    \begin{equation}\label{eq:RFfinal1}
        \left| \varphi(z_1^s)^\top \Ppm^\perp \varphi(z_1) - \tilde \phi(V z_1^s)^\top \Ppm^\perp \tilde \phi(V z_1) \right| = o(k),
    \end{equation}
    with probability at least $1 - \exp(c_2 \log^2 N)$ over $V$ and $Z$, because of \eqref{eq:upboundsnorm} and Lemma \ref{lemma:PpRFon1}. Note that \eqref{eq:RFfinal1} trivially holds even when $\mu_0 = 0$, as $\phi \equiv \tilde \phi$. Thus, \eqref{eq:RFfinal1} is true in any case with probability at least $1 - \exp(c_2 \log^2 N)$ over $V$ and $Z$.
    
    Furthermore, because of \eqref{eq:upboundsnorm} and Lemma \ref{lemma:Pseeslin}, we have
    \begin{equation}\label{eq:RFfinal2}
        \left| \tilde \phi(V z_1^s)^\top \Ppm \tilde \phi(V z_1) - \mu_1^2 (V z_1^s)^\top \Ppm (V z_1)\right| = o(k),
    \end{equation}
    with probability at least $1 - \exp(-c_3 \log^2 N)$ over $V$ and $Z$.

    Thus, putting \eqref{eq:RFfinal1} and \eqref{eq:RFfinal2} together, and using Lemma \ref{lemma:RFlast}, we get
    \begin{equation}\label{eq:laststepnumeratorrf1}
        \begin{aligned}
            &\left| \varphi(z_1^s)^\top \Ppm^\perp \varphi(z_1) - \left( \tilde \phi(V z_1^s)^\top \tilde \phi(V z_1) - \mu_1 ^2 (V z_1^s)^\top (V z_1) + \mu_1^2 \norm{\Ppm^\perp V_y y_1}_2^2 \right) \right| \\
            &\qquad\leq \left| \varphi(z_1^s)^\top \Ppm^\perp \varphi(z_1) - \tilde \phi(V z_1^s)^\top \Ppm^\perp \tilde \phi(V z_1) \right| \\
            &\qquad\quad + \left| - \tilde \phi(V z_1^s)^\top \Ppm \tilde \phi(V z_1) + \mu_1^2 (V z_1^s)^\top \Ppm (V z_1) \right| \\
            &\qquad\quad + \left|\mu_1^2 (V z_1^s)^\top \Ppm^\perp (V z_1) - \mu_1^2 \norm{\Ppm^\perp V_y y_1}_2^2 \right| 
             = o(k),
        \end{aligned}
    \end{equation}
    with probability at least $1 - \exp(-c_4 \log^2 N)$ over $V$ and $X$ and $x$.
    To conclude we apply Lemma \ref{lemma:verycorr} setting $t = N$, together with Assumption \ref{ass:overparam}
    , to get
    \begin{equation}\label{eq:laststepnumeratorrf2}
        \left| \tilde \phi(V z_1^s)^\top \tilde \phi(V z_1) - k \left( \sum_{l = 1}^{+\infty} \mu_l^2 \alpha^l \right) \right| = \bigO{\sqrt k \left( \sqrt{\frac{k}{d}} + 1 \right) \log N} = o(k),
    \end{equation}
    and
    \begin{equation}\label{eq:laststepnumeratorrf3}
        \left| \mu_1 ^2 (V z_1^s)^\top (V z_1) - k \mu_1^2 \alpha \right| = \bigO{\sqrt k \left( \sqrt{\frac{k}{d}} + 1 \right) \log N} = o(k),
    \end{equation}
    which jointly hold with probability at least  $1- \exp(- c_5 \log^2 N)$ over $V$ and $x$.
    
    Applying the triangle inequality to \eqref{eq:laststepnumeratorrf1}, \eqref{eq:laststepnumeratorrf2}, and \eqref{eq:laststepnumeratorrf3}, we get the thesis.
\end{proof}

\begin{lemma}\label{lemma:newlemma}
    We have that
    \begin{equation}
        \left|  \norm{\Ppm^\perp \varphi(z_1)}_2^2 - \E_{z_1} \left[ \norm{\Ppm^\perp \varphi(z_1)}_2^2 \right]  \right| = o(k),
    \end{equation}
    \begin{equation}
        \left| \varphi(z_1^s)^\top \Ppm^\perp \varphi(z_1) - \E_{z_1, z_1^s} \left[ \varphi(z_1^s)^\top \Ppm^\perp \varphi(z_1) \right] \right| = o(k),
    \end{equation}
    jointly hold with probability at least $1 - \exp(-c \log^2 N)$ over $z_1$, $V$ and $x$, where $c$ is an absolute constant.
\end{lemma}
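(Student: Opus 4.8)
The plan is to read both displays as quadratic-form concentration statements and to invoke the Hanson--Wright inequality for random vectors enjoying the convex concentration property, in the general form of \cite{HWconvex}. The point is that, although $\varphi(z_1)=\phi(Vz_1)$ is a nonlinear and non-centered function of $z_1$, it still inherits a concentration property strong enough to run Hanson--Wright, because Lipschitz concentration is preserved under composition with Lipschitz maps and is strictly stronger than convex concentration.

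\emph{Setup.} First I would condition on a high-probability event over $V$ and $Z_{-1}$ on which all the quantities we need are under control: by Theorem~4.4.5 of \cite{vershynin2018high}, $\opnorm{V}=\bigO{\sqrt{k/d}+1}$ with probability $\ge 1-2e^{-cd}$, and by Lemma~\ref{lemma:evminRF}, $\evmin{K_{-1}}=\Omega(k)$ with probability $\ge 1-\exp(-c\log^2 N)$, so that $\Ppm$ is a well-defined orthogonal projector of rank $N-1$; in particular $\opnorm{\Ppm^\perp}=1$ and $\Fnorm{\Ppm^\perp}^2=k-(N-1)\le k$. From here on $V$ and $Z_{-1}$ (hence $\Ppm^\perp$) are frozen and all probabilities are over $z_1=[x_1,y_1]$ and the independent fresh sample $x$.

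\emph{First estimate.} Write $\norm{\Ppm^\perp\varphi(z_1)}_2^2=\varphi(z_1)^\top\Ppm^\perp\varphi(z_1)$. Since $z_1\sim P_Z$ is Lipschitz concentrated by Lemma~\ref{lemma:lipcompxy} and $w\mapsto\phi(Vw)$ is $L\opnorm{V}$-Lipschitz ($\phi$ being $L$-Lipschitz), the vector $\varphi(z_1)\in\R^k$ is Lipschitz concentrated with constant $K_V:=CL\opnorm{V}=\bigO{\sqrt{k/d}+1}$, hence has the convex concentration property with the same constant. Applying the Hanson--Wright inequality of \cite{HWconvex} with the symmetric matrix $A$ (here $A=\Ppm^\perp$), I would get, for all $t>0$,
\begin{equation}\label{eq:newlemma-hw}
\PP\!\left(\left|\varphi(z_1)^\top A\,\varphi(z_1)-\E\!\left[\varphi(z_1)^\top A\,\varphi(z_1)\right]\right|>t\right)\le 2\exp\!\left(-c\,\min\!\left(\frac{t^2}{K_V^4\,\Fnorm{A}^2},\ \frac{t}{K_V^2\,\opnorm{A}}\right)\right),
\end{equation}
and then set $t=k/\log N=o(k)$. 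With $\Fnorm{A}^2\le k$ and $\opnorm{A}=1$, the two exponents become $\asymp d^2/(k\log^2 N)$ and $\asymp d\log^{-1}N$ in the regime $k>d$ (and $\asymp k/\log^2 N$, $\asymp k/\log N$ when $k\le d$); using Assumption~\ref{ass:overparam} — in particular $k\log^4 k=o(d^2)$, $N\log^3 N=o(k)$, $\sqrt d\log d=o(k)$, together with $\log k\gtrsim\log N$ — each of these tends to infinity faster than $\log^2 N$. This gives the first bound with probability $\ge 1-\exp(-c\log^2 N)$.

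\emph{Second estimate.} Let $w:=(x_1,y_1,x)$, which is Lipschitz concentrated on $\R^{2d_x+d_y}$ as a product of independent Lipschitz concentrated blocks (cf.\ Lemma~\ref{lemma:functiontoconvex}). The stacked map $w\mapsto\big(\varphi(z_1^m),\varphi(z_1)\big)=\big(\phi(V[x,y_1]),\phi(V[x_1,y_1])\big)\in\R^{2k}$ is again $\bigO{\opnorm{V}}$-Lipschitz, so this $2k$-dimensional vector is Lipschitz (hence convex) concentrated with constant $\bigO{K_V}$. Define the symmetric $2k\times 2k$ matrix $A'$ that vanishes except for its two off-diagonal $k\times k$ blocks, both equal to $\tfrac12\Ppm^\perp$; then $\varphi(z_1^m)^\top\Ppm^\perp\varphi(z_1)=\big(\varphi(z_1^m),\varphi(z_1)\big)^\top A'\big(\varphi(z_1^m),\varphi(z_1)\big)$, with $\opnorm{A'}=\tfrac12$ and $\Fnorm{A'}^2=\tfrac12\Fnorm{\Ppm^\perp}^2\le k/2$. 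Plugging these into \eqref{eq:newlemma-hw} (now for the stacked vector) and keeping $t=k/\log N$ gives the second estimate; a union bound over the underlying events then yields the joint statement.

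\emph{Main obstacle.} The only delicate point is the verification that the Hanson--Wright exponents in \eqref{eq:newlemma-hw} dominate $\log^2 N$ even though $\opnorm{V}$ — and hence $K_V$ — may grow like $\sqrt{k/d}$. This is exactly where the precise scalings of Assumption~\ref{ass:overparam} are needed: $k\log^4 k=o(d^2)$ forces $d^2/k$ to beat every poly-logarithmic factor (controlling the Frobenius term when $k\gg d$), while $N\log^3 N=o(k)$ and $\sqrt d\log d=o(k)$ handle the remaining regimes and the operator-norm term. A secondary point, worth stating explicitly in the write-up, is the passage from Lipschitz concentration of $z_1$ to the convex concentration property of $\varphi(z_1)$, which is what allows \cite{HWconvex} to be applied directly, with no centering of the feature vector.
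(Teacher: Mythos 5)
Your approach is the paper's approach — a Hanson--Wright argument via the convex concentration property, with the same matrix $\Ppm^\perp$ (and the same $2k\times 2k$ block matrix for the bilinear estimate), the same Lipschitz constant $K_V=\bigO{\sqrt{k/d}+1}$, and the same choice $t=k/\log N$ with essentially the same exponent check — but it has a genuine gap at the exact point you flag as unproblematic. Theorem 2.3 of \cite{HWconvex} requires the random vector to be \emph{mean zero}, and $\varphi(z_1)=\phi(Vz_1)$ is not mean zero in the probability space of $z_1$ (with $V$ frozen): its mean $\nu:=\E_{z_1}[\varphi(z_1)]$ satisfies $\norm{\nu}_2=\Theta(\sqrt k)$, which is comparable to $\Fnorm{\Ppm^\perp}$. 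This is not a cosmetic omission. Writing $\varphi(z_1)=\nu+\hat\varphi(z_1)$ with $\hat\varphi$ mean zero, the deviation $\varphi(z_1)^\top\Ppm^\perp\varphi(z_1)-\E[\cdot]$ decomposes into the centered quadratic form $\hat\varphi^\top\Ppm^\perp\hat\varphi-\E[\cdot]$ \emph{plus} a linear term $2\nu^\top\Ppm^\perp\hat\varphi(z_1)$. That linear term is sub-Gaussian with parameter $\bigO{K_V\norm{\nu}_2}=\bigO{K_V\sqrt k}$, and its tail is not of Hanson--Wright form; if the mean were any larger (relative to $K_V\Fnorm{\Ppm^\perp}$) the inequality you wrote would simply be false, so one cannot ``apply \cite{HWconvex} directly, with no centering of the feature vector.''

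The fix is short and is exactly what the paper does: (i) prove $\norm{\nu}_2=\bigO{\sqrt k}$ (this itself takes work — concentration of $\norm{\varphi(z_1)}_2$ around its mean plus Jensen, using that $z_1$ is Lipschitz concentrated and $\varphi$ is $\bigO{K_V}$-Lipschitz, together with Lemma~\ref{lemma:justBern}); (ii) bound $\bigl|\nu^\top\Ppm^\perp\hat\varphi(z_1)\bigr|=o(k)$ with probability $1-\exp(-c\log^2 N)$ as a sub-Gaussian linear form, using Assumption~\ref{ass:overparam}; (iii) apply Hanson--Wright only to the centered vector $\hat\varphi(z_1)$ (or the centered stacked vector for the bilinear term), where the matrix is $\Ppm^\perp$ (resp.\ the $2k\times2k$ off-diagonal block matrix) and the parameter check you carried out is then valid as written. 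With that repair your argument matches the paper's proof; without it, the invocation of Theorem 2.3 is outside its hypotheses.

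Two smaller remarks. Conditioning on $\evmin{K_{-1}}=\Omega(k)$ is harmless but not needed here: $\Ppm^\perp$ is a rank-$(k-N+1)$ orthogonal projector for any $\Phi_{-1}$ of full row rank, and the Lemma's probability is stated only over $z_1$, $V$, $x$ (not $Z_{-1}$). Also, once you center, note that the cross-term $\nu^\top\Ppm^\perp\nu$ cancels in the deviation from the expectation, so only the single linear term $2\nu^\top\Ppm^\perp\hat\varphi(z_1)$ survives alongside the centered quadratic form.
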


\begin{proof}
    Let's condition until the end of the proof on both $\opnorm{V_x}$ and $\opnorm{V_y}$ to be $\bigO{\sqrt{k / d} + 1}$, which happens with probability at least $1 - e^{-c_1 d}$ by Theorem 4.4.5 of \cite{vershynin2018high}. This also implies that $\opnorm{V} = \bigO{\sqrt{k / d} + 1}$.

    We indicate with $\nu := \E_{z_1} \left[ \varphi (z_1) \right] =  \E_{z_1^s} \left[ \varphi (z_1^s) \right] \in \R^k$, and with $\hat \varphi(z) := \varphi (z) - \nu$. Note that, as $\varphi$ is a $C \left( \sqrt{k / d} + 1 \right)$-Lipschitz function, for some constant $C$, and as $z_1$ is Lipschitz concentrated, by Assumption
    \ref{ass:overparam}, we have
    \begin{equation}\label{eq:newlemmaconcx}
        \left| \norm{\varphi(z_1)}_2 - \E_{z_1} \left[ \norm{\varphi (z_1)}_2 \right] \right| = o \left(\sqrt{k}\right),
    \end{equation}
    with probability at least $1 - \exp(-c_2 \log^2 N)$ over $z_1$ and $V$. In addition, by the last statement of Lemma \ref{lemma:justBern} and Assumption \ref{ass:overparam}, we have that $\norm{\varphi(z_1)}_2 = \bigO{\sqrt k}$ with probability $1 - \exp(-c_3 \log^2 N)$ over $V$. Thus, taking the intersection between these two events, we have 
    \begin{equation}\label{eq:newlemmaconcexpx}
        \E_{z_1} \left[ \norm{\varphi (z_1)}_2 \right] = \bigO{\sqrt{k}},
    \end{equation}
    with probability at least $1 - \exp(-c_4 \log^2 N)$ over $z_1$ and $V$. As this statement is independent of $z_1$, it holds with the same probability just over the probability space of $V$. Then, by Jensen inequality, we have
    \begin{equation}\label{eq:normnu}
        \norm{\nu}_2 = \norm{\E_{z_1} \left[ \varphi (z_1) \right]}_2 \leq \E_{z_1} \left[ \norm{\varphi (z_1)}_2 \right] = \bigO{\sqrt k}.
    \end{equation}
    
    We can now rewrite the LHS of the first statement as
    \begin{equation}\label{eq:firsttermnewlemma}
    \begin{aligned}
    &    \left|  \norm{\Ppm^\perp \varphi(z_1)}_2^2 - \E_{z_1} \left[ \norm{\Ppm^\perp \varphi(z_1)}_2^2 \right]  \right|  \\
     &=   \left|  \norm{\Ppm^\perp \left( \hat \varphi(z_1) + \nu \right)}_2^2 - \E_{z_1} \left[ \norm{\Ppm^\perp \left( \hat \varphi(z_1) + \nu \right)}_2^2 \right]  \right|  \\
&=        \left|  \hat \varphi(z_1)^\top \Ppm^\perp \hat \varphi(z_1) +  2 \nu^\top \Ppm^\perp \hat \varphi(z_1) - \E_{z_1} \left[ \hat \varphi(z_1)^\top \Ppm^\perp \hat \varphi(z_1) \right] \right|  \\
  & \leq      \left|  \hat \varphi(z_1)^\top \Ppm^\perp \hat \varphi(z_1) - \E_{z_1} \left[ \hat \varphi(z_1)^\top \Ppm^\perp \hat \varphi(z_1) \right] \right| + 2 \left| \nu^\top \Ppm^\perp \hat \varphi(z_1)  \right|.
    \end{aligned}
    \end{equation}
    The second term is the inner product between $\hat \varphi(z_1)$, a mean-0 sub-Gaussian vector (in the probability space of $z_1$) such that $\subGnorm{\hat \varphi(z_1)} = \bigO{\sqrt{k / d} + 1}$, and the independent vector $\Ppm^\perp \nu$, such that $\norm{\Ppm^\perp \nu}_2 \leq \norm{\nu}_2 = \bigO{\sqrt k}$, because of \eqref{eq:normnu}. Thus, by Assumption
    \ref{ass:overparam}, we have that
    \begin{equation}\label{eq:linearpartnewlemma}
        \left| \nu^\top \Ppm^\perp \hat \varphi(z_1)  \right| = o(k),
    \end{equation}
    with probability at least $1 - \exp(-c_5 \log^2 N)$ over $z_1$ and $V$.
    Then, as $\left( \sqrt{k / d} + 1 \right)^{-1} \hat \varphi(z_1)$ is a mean-0, Lipschitz concentrated random vector (in the probability space of $z_1$), by the general version of the Hanson-Wright inequality given by Theorem 2.3 in \cite{HWconvex}, we can write
    \begin{equation}
    \begin{aligned}
        \P & \left( \left|  \norm{\Ppm^\perp \hat \varphi(z_1)}_2^2 - \E_{z_1} \left[ \norm{\Ppm^\perp \hat \varphi(z_1)}_2^2 \right]  \right| \geq k / \log N\right) \\
        &\leq 2 \exp \left( -c_6 \min \left( \frac{k^2}{\log^2 N \left( (k / d)^2 + 1 \right) \norm{\Ppm^\perp}^2_F} , \frac{k}{\log N \left(k / d + 1 \right) \opnorm{\Ppm^\perp}} \right) \right)  \\
        &\leq 2 \exp \left( -c_6 \min \left( \frac{k}{\log^2 N \left( (k / d)^2 + 1 \right)} , \frac{k}{\log N \left(k / d + 1 \right)} \right) \right) \\
        &\leq \exp \left( -c_7 \log^2 N \right),
    \end{aligned}
    \end{equation}
    where the last inequality comes from Assumption \ref{ass:overparam}.
    This, together with \eqref{eq:firsttermnewlemma} and \eqref{eq:linearpartnewlemma}, proves the first part of the statement.

    For the second part of the statement, we have
    \begin{equation}\label{eq:secondtermnewlemma}
        \begin{aligned}
          &      \left| \varphi(z_1^s)^\top \Ppm^\perp \varphi(z_1) - \E_{z_1, z_1^s} \left[ \varphi(z_1^s)^\top \Ppm^\perp \varphi(z_1) \right] \right|  \\
            & \leq    \left|  \hat \varphi(z_1^s)^\top \Ppm^\perp \hat \varphi(z_1) - \E_{z_1, z_1^s} \left[ \hat \varphi(z_1^s)^\top \Ppm^\perp \hat \varphi(z_1) \right] \right| + \left| \nu^\top \Ppm^\perp \hat \varphi(z_1)  \right| + \left| \nu^\top \Ppm^\perp \hat \varphi(z_1^s)  \right|.
        \end{aligned}
    \end{equation}
    Following the same argument that led to \eqref{eq:linearpartnewlemma}, we obtain
    \begin{equation}\label{eq:linearpartnewlemma2}
        \left| \nu^\top \Ppm^\perp \hat \varphi(z_1^s)  \right| = o(k),
    \end{equation}
    with probability at least $1 - \exp(-c_8 \log^2 N)$ over $z_1^s$ and $V$.
    Let us set
    \begin{equation}
        P_2 := \frac{1}{2}
        \left(\begin{array}{@{}c|c@{}}
            0 & \Ppm^\perp \\
            \hline
            \Ppm^\perp & 0
        \end{array}\right), \qquad
        V_2 :=
        \left(\begin{array}{@{}c|c|c@{}}
            V_x & V_y & 0 \\
            \hline
            0 & V_y & V_x
        \end{array}\right),
    \end{equation}
    and
    \begin{equation}
        \hat \varphi_2 := \phi \left( V_2 [x_1, y_1, x]^\top \right) - \E_{x_1, y_1, x} \left[ \phi \left( V_2 [x_1, y_1, x]^\top \right)\right] \equiv [\hat \varphi(z_1), \hat \varphi(z_1^s)]^\top.
    \end{equation}
    We have that $\opnorm{P_2} \leq 1$, $\norm{P_2}_F^2 \leq k$, $\opnorm{V_2} \leq 2\opnorm{V_x} +2\opnorm{V_y} = \bigO{\sqrt{k/d} + 1}$, and that $[x_1, y_1, x]^\top$ is a Lipschitz concentrated random vector in the joint probability space of $z_1$ and $z_1^s$, which follows from applying Lemma \ref{lemma:lipcompxy} twice. Also, we have
    \begin{equation}\label{eq:symnewlemma}
\hat         \varphi(z_1^s)^\top \Ppm^\perp \hat\varphi(z_1) = \hat \varphi_2 ^\top P_2 \hat \varphi_2.
    \end{equation}

    Thus, as $\left( \sqrt{k / d} + 1 \right)^{-1} \hat \varphi_2$ is a mean-0, Lipschitz concentrated random vector (in the probability space of $z_1$ and $z_1^s$), again by the general version of the Hanson-Wright inequality given by Theorem 2.3 in \cite{HWconvex}, we can write
    \begin{equation}
    \begin{aligned}
        \P & \left( \left|  \hat \varphi_2 ^\top P_2 \hat \varphi_2 - \E_{z_1, z_1^s} \left[ \hat \varphi_2 ^\top P_2 \hat \varphi_2 \right]  \right| \geq k / \log N\right) \\
        &\leq 2 \exp \left( -c_9 \min \left( \frac{k^2}{\log^2 N \left( (k / d)^2 + 1 \right) \norm{P_2}^2_F} , \frac{k}{\log N \left(k / d + 1 \right) \opnorm{P_2}} \right) \right)  \\
        &\leq 2 \exp \left( -c_9 \min \left( \frac{k}{\log^2 N \left( (k / d)^2 + 1 \right) } , \frac{k}{\log N \left(k / d + 1 \right)} \right) \right) \\
        &\leq \exp \left( -c_{10} \log^2 N \right),
    \end{aligned}
    \end{equation}
    where the last inequality comes from Assumption \ref{ass:overparam}. 
    This, together with \eqref{eq:secondtermnewlemma}, \eqref{eq:linearpartnewlemma}, \eqref{eq:linearpartnewlemma2}, and \eqref{eq:symnewlemma}, proves the second part of the statement, and therefore the desired result.

\end{proof}

Finally, we are ready to give the proof of Theorem \ref{thm:RF}.

\begin{proof}[Proof of Theorem \ref{thm:RF}]
    We will prove the statement for the following definition of $\gamma_{\textup{RF}}$, independent from $z_1$ and $z_1^s$,
    \begin{equation}
        \gamma_{\textup{RF}} := \frac{\E_{z_1, z_1^s} \left[ \varphi(z_1^s)^\top \Ppm^\perp \varphi(z_1) \right]}{\E_{z_1} \left[ \norm{\Ppm^\perp \varphi(z_1)}_2^2 \right]}.
    \end{equation}
    
    By Lemma \ref{lemma:dendiv0} and \ref{lemma:evminRF}, we have
    \begin{equation}\label{eq:expectrf2}
        \norm{\Ppm^\perp \varphi(z)}_2^2 = \Omega(k)
    \end{equation}
    with probability at least $1 - \exp(-c_1 \log^2 N)$ over $V$, $Z_{-1}$ and $z$. This, together with Lemma \ref{lemma:newlemma}, gives
    \begin{equation}\label{eq:firststatementrf}
        \left| \frac{\varphi(z_1^s)^\top \Ppm^\perp \varphi(z_1)}{\norm{\Ppm^\perp \varphi(z_1)}_2^2} - \frac{\E_{z_1, z_1^s} \left[ \varphi(z_1^s)^\top \Ppm^\perp \varphi(z_1) \right]}{\E_{z_1} \left[ \norm{\Ppm^\perp \varphi(z_1)}_2^2 \right]}\right| = o(1),
    \end{equation}
    with probability at least $1 - \exp(-c_2 \log^2 N)$ over $V$, $Z$ and $x$, which proves the first part of the statement.

    The upper-bound on $\gamma_{\textup{RF}}$ can be obtained applying Cauchy-Schwarz twice
    \begin{equation}\label{eq:firstubrf}
    \begin{aligned}
        \frac{\E_{z_1, z_1^s} \left[ \varphi(z_1^s)^\top \Ppm^\perp \varphi(z_1) \right]}{\E_{z_1} \left[ \norm{\Ppm^\perp \varphi(z_1)}_2^2\right]} &\leq \frac{\E_{z_1, z_1^s} \left[\norm{\Ppm^\perp \varphi(z_1^s)}_2 \norm{\Ppm^\perp \varphi(z_1)}_2 \right]}{\E_{z_1} \left[ \norm{\Ppm^\perp \varphi(z_1)}_2^2 \right]}\\
        &\leq \frac{\sqrt{\E_{z_1^s} \left[\norm{\Ppm^\perp \varphi(z_1^s)}_2^2 \right]} \sqrt{\E_{z_1} \left[ \norm{\Ppm^\perp \varphi(z_1)}^2_2 \right]}}{\E_{z_1} \left[ \norm{\Ppm^\perp \varphi(z_1)}_2^2 \right]} = 1.
    \end{aligned}
    \end{equation}

    Let's now focus on the lower bound. By Assumption
    \ref{ass:overparam} and Lemma \ref{lemma:verycorr} (in which we consider the degenerate case $\alpha = 1$ and set $t=N$), we have
    \begin{equation}\label{eq:hermitelemmarf1}
        \left| \norm{\tilde \phi \left( V z_1 \right)}_2^2 - k \sum_{l = 1}^{+\infty} {\mu^2_l} \right| = o(k),
    \end{equation}
    with probability at least $1 - \exp(-c_3 \log^2 N)$ over $V$ and $z_1$. Then, a few applications of the triangle inequality give
    \begin{equation}
    \begin{aligned}
        \frac{\E_{z_1, z_1^s} \left[ \varphi(z_1^s)^\top \Ppm^\perp \varphi(z_1) \right]}{\E_{z_1} \left[ \norm{\Ppm^\perp \varphi(z_1)}_2^2 \right]} &\geq \frac{\varphi(z_1^s)^\top \Ppm^\perp \varphi(z_1)}{\norm{\Ppm^\perp \varphi(z_1)}_2^2}-o(1) \\
        &\geq \frac{\varphi(z_1^s)^\top \Ppm^\perp \varphi(z_1)}{\norm{\Ppm^\perp \tilde \varphi(z_1)}_2^2} - o(1) \\
        & \geq \frac{k \left( \sum_{l = 2}^{+\infty} \mu_l^2 \alpha^l \right) + \mu_1^2\norm{\Ppm^\perp V_y y_1}_2^2 }{\norm{\tilde \varphi(z_1)}_2^2} - o(1) \\
        &\geq \frac{k \left( \sum_{l = 2}^{+\infty} \mu_l^2 \alpha^l \right) + \mu_1^2\norm{\Ppm^\perp V_y y_1}_2^2 }{k \sum_{l = 1}^{+\infty} \mu_l^2} - o(1) \\
        & \geq \frac{\sum_{l = 2}^{+\infty} \mu_l^2 \alpha^l}{\sum_{l = 1}^{+\infty} \mu_l^2} - o(1),
    \end{aligned}
    \end{equation}
    where the first inequality is a consequence of \eqref{eq:firststatementrf}, the second of Lemma \ref{lemma:PpRFon1} and \eqref{eq:expectrf2}, the third of Lemma \ref{lemma:tempfinRF} and again \eqref{eq:expectrf2}, and the fourth of \eqref{eq:hermitelemmarf1}, and they jointly hold with probability $1 - \exp(-c_4 \log^2 N)$ over $V$, $Z_{-1}$ and $z_1$. Again, as the statement does not depend on $z_1$, we can conclude that it holds with the same probability only over the probability spaces of $V$ and $Z_{-1}$, and the thesis readily follows.
\end{proof}

\section{Proofs for NTK features}\label{app:NTK}

In this section, we will indicate with $Z \in \R^{N \times d}$ the data matrix, such that its rows are sampled independently from $\mathcal P_Z$ (see Assumption \ref{ass:datadist}). We denote by $W \in \R^{k \times d}$ the weight matrix at initialization, such that $W_{ij} \distas{}_{\rm i.i.d.}\mathcal{N}(0, 1/d)$. Thus, the feature map is given by (see \eqref{eq:NTKmodel})
\begin{equation}
    \varphi(z) := z \otimes \phi'(W z) \in \R^{dk},
\end{equation}
where $\phi'$ is the derivative of the activation function $\phi$, applied component-wise to the vector $Wz$. We use the shorthands $\Phi := Z * \phi'(ZW^\top) \in \R^{N \times p}$ and $K := \Phi \Phi^\top \in \R^{N \times N}$, where $*$ denotes the Khatri-Rao product, defined in Appendix \ref{app:notation}. We indicate with $\Phi_{-1} \in \R^{(N - 1) \times k}$ the matrix $\Phi$ without the first row, and we define $K_{-1} := \Phi_{-1} \Phi_{-1}^\top$. We call $\Pp$ the projector over the span of the rows of $\Phi$, and $\Ppm$ the projector over the span of the rows of $\Phi_{-1}$. We use the notations $\tilde \varphi(z) := \varphi(z) - \E_W [\varphi(z)]$ and $\tilde \Phi_{-1} := \Phi_{-1} - \E_W [\Phi_{-1}]$ to indicate the centered feature map and matrix respectively, where the centering is with respect to $W$. We indicate with $\mu'_l$ the $l$-th Hermite coefficient of $\phi'$. We use the notation $z_1^s = [x, y_1]$, where $x \sim \mathcal P_X$ is sampled independently from $V$ and $Z$.  We define $\alpha = d_y/d$.
Throughout this section, for compactness, we drop the subscripts \enquote{NTK} from these quantities, as we will only treat the proofs related to Section \ref{sec:ntk}. Again for the sake of compactness, we will not re-introduce such quantities in the statements or the proofs of the following lemmas.

The content of this section can be summarized as follows:
\begin{itemize}
    \item In Lemma \ref{lemma:evminntk}, we prove the lower bound on the smallest eigenvalue of $K$, adapting to our settings the main result of \cite{bombari2022memorization}.
    \item In Lemma \ref{lemma:Ppntkon1}, we treat separately a term that derives from $\E_W \left[ \phi'(W z) \right] = \mu'_0 \mathbf{1}_k$, showing that we can \emph{center} the derivative of the activation function (Lemma \ref{lemma:ntkthirdlaststep}), without changing our final statement in Theorem \ref{thm:mainntk}. This step is necessary only if $\mu'_0 \neq 0$. Our proof tackles the problem proving the thesis on a set of ``perturbed'' inputs $\bar Z_{-1}(\delta)$ (Lemma \ref{lemma:Ppntkon1pert}), critically exploiting the non degenerate behaviour of their rows (Lemma \ref{lemma:invertible}), and transfers the result on the original term, using continuity arguments with respect to the perturbation (Lemma \ref{lemma:continuous}).
    \item In Lemma \ref{lemma:ntkpenultimatestep}, we show that the centered features $\tilde \varphi(z_1)$ and $\tilde \varphi(z_1^s)$ have a negligible component in the space spanned by the rows of $\Phi_{-1}$. To achieve this, we exploit the bound proved in Lemma \ref{lemma:Pntksmall}.
    \item To conclude, we prove Theorem \ref{thm:mainntk}, exploiting also the concentration result provided in Lemma \ref{lemma:ntklaststep}.
\end{itemize}

\begin{lemma}\label{lemma:evminntk}
We have that
\begin{equation}
    \evmin{K} = \Omega(kd),
\end{equation}
with probability at least $1 - N e^{-c \log^2 k} - e^{-c \log^2 N}$ over $Z$ and $W$, where $c$ is an absolute constant.
\end{lemma}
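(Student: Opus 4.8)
The plan is to adapt, to the present data model $z=[x,y]$, the minimal-over-parameterization argument of \cite{bombari2022memorization}: I will show that \emph{(i)} the expected kernel $\E_W[K]$ has $\evmin{\E_W[K]}=\Omega(kd)$, and \emph{(ii)} $K$ concentrates around $\E_W[K]$ in operator norm up to a $o(kd)$ error over the randomness of $W$; Weyl's inequality then yields $\evmin{K}=\Omega(kd)$. Recall that the rows of $Z$ are mean-$0$ and Lipschitz concentrated by Lemma \ref{lemma:lipcompxy}, so linear functionals of the rows are sub-Gaussian with $\bigO{1}$ norm and, by standard concentration for matrices with independent sub-Gaussian rows (see \cite{vershynin2018high}), $\opnorm{Z}^2=\bigO{N}$ with probability at least $1-e^{-cN}$; we condition on this below.

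\emph{Step (i): lower bound on $\evmin{\E_W[K]}$.} Using $\Phi=Z\ast\phi'(ZW^\top)$ and the identity $(A\ast B)(A\ast B)^\top=AA^\top\circ BB^\top$, we get $K=(ZZ^\top)\circ\big(\phi'(ZW^\top)\phi'(ZW^\top)^\top\big)$. Since $W_{a:}z_i$ is standard Gaussian (as $\norm{z_i}_2=\sqrt d$ and $W_{a:}\sim\mathcal N(0,I/d)$), with $\textup{corr}(W_{a:}z_i,W_{a:}z_j)=z_i^\top z_j/d$, the Hermite expansion of $\phi'$ yields
\[
    \E_W[K]=kd\sum_{l\ge 0}(\mu'_l)^2\,\Big(\tfrac1d ZZ^\top\Big)^{\circ(l+1)},\qquad \Big(\tfrac1d ZZ^\top\Big)^{\circ m}=\tfrac1{d^m}\big(Z^{\ast m}\big)\big(Z^{\ast m}\big)^\top\succeq 0.
\]
Because $\phi$ is non-linear (Assumption \ref{ass:activationfuncntk}), $\phi'$ is non-constant, so $\mu'_m\ne 0$ for some $m\ge 1$; dropping the remaining non-negative terms gives $\E_W[K]\succeq \tfrac{k(\mu'_m)^2}{d^m}\big(Z^{\ast(m+1)}\big)\big(Z^{\ast(m+1)}\big)^\top$. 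Now apply Lemma \ref{lemma:mgeq2} with exponent $m+1\ge 2$: its hypotheses $N\log^4 N=o(d^2)$ and $N=o(d^2/\log^4 d)$ are implied by Assumption \ref{ass:overparamntk}, because $k=\bigO{d}$ and $N\log^8 N=o(kd)$ force $N\log^8 N=o(d^2)$. This gives $\evmin{\big(Z^{\ast(m+1)}\big)\big(Z^{\ast(m+1)}\big)^\top}=\Omega(d^{m+1})$ with probability at least $1-e^{-c\log^2 N}$ over $Z$, hence $\evmin{\E_W[K]}=\Omega(kd)$.

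\emph{Step (ii): concentration of $K$ around $\E_W[K]$.} Write $K=\sum_{a=1}^k M_a$ with $M_a:=\diag(\phi'(ZW_{a:}^\top))\,ZZ^\top\,\diag(\phi'(ZW_{a:}^\top))$, which are i.i.d.\ p.s.d.\ over the rows of $W$. Since the entries of $M_a$ involve the unbounded quantities $\phi'(W_{a:}z_i)$, one first passes to a truncated version $\bar M_a$ that discards the rare event that some $|\phi'(W_{a:}z_i)|$ is atypically large; then $K\succeq\sum_a\bar M_a$, $\opnorm{\bar M_a}=\bigO{N\log N}$, and by Step (i) $\evmin{\E_W[\sum_a\bar M_a]}=\evmin{\E_W[K]}-o(kd)=\Omega(kd)$ (the truncation perturbs the mean by only $o(kd)$ in operator norm). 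The matrix Chernoff inequality (Theorem 1.1 of \cite{Tropp2011}) then gives $\evmin{K}\ge\evmin{\sum_a\bar M_a}=\Omega(kd)$ with failure probability at most $N\exp(-c\,kd/(N\log N))\le e^{-c\log^2 N}$, where the last bound uses $N\log^8 N=o(kd)$. Book-keeping the truncation — a Bernstein estimate over $W$ for quantities such as $\norm{\phi'(Wz_i)}_2^2\approx k\,\E_\rho[\phi'(\rho)^2]$, union bounded over $i\in[N]$ at the $e^{-c\log^2 k}$ level — together with the $Z$-dependent events of Step (i) produces the stated probability $1-Ne^{-c\log^2 k}-e^{-c\log^2 N}$, exactly as in \cite{bombari2022memorization}.

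\emph{Main obstacle.} The delicate point is the matrix concentration of Step (ii) in the minimal regime $N\log^8 N=o(kd)$: the feature vectors $\varphi(z_i)=z_i\otimes\phi'(Wz_i)\in\R^{kd}$ are high-dimensional with coordinates that are products of sub-Gaussian quantities, so a naive matrix Chernoff/Bernstein bound fails and truncation plus careful constant tracking are unavoidable, which is the technical heart of \cite{bombari2022memorization}. Adapting that argument requires (a) verifying that the data assumptions used there follow from Assumption \ref{ass:datadist} via Lemma \ref{lemma:lipcompxy}, and (b) noting that $k=\bigO{d}$ enters precisely to keep $N=o(d^2)$ up to $\log$ factors, which is what makes the Khatri--Rao / Hadamard-power eigenvalue bound of Lemma \ref{lemma:mgeq2} applicable to $\E_W[K]$.
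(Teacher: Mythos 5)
The paper's own ``proof'' of Lemma~\ref{lemma:evminntk} is a one-paragraph citation of Theorem~3.1 of \cite{bombari2022memorization}: it notes that Assumption~\ref{ass:datadist} is stronger than the data assumption there, that $k=\bigO{d}$ verifies the loose pyramidal topology condition, that the last-layer initialization and the relaxed Lipschitz requirement on $\phi$ are immaterial for a two-layer NTK, and then invokes the theorem. Your proposal instead reconstructs the argument behind that citation, and the reconstruction is sound in outline. Step (i) is fully correct and is in fact a nice unification: you compute $\E_W[K]=kd\sum_{l\ge 0}(\mu'_l)^2\,(ZZ^\top/d)^{\circ(l+1)}$, pick $m\ge 1$ with $\mu'_m\ne 0$ (which exists since $\phi'$ is non-constant under Assumption~\ref{ass:activationfuncntk}), and invoke the paper's own Lemma~\ref{lemma:mgeq2} from the RF appendix with exponent $m+1\ge 2$; your check that $N\log^8 N=o(kd)$ together with $k=\bigO{d}$ and $N>d$ implies the scaling hypotheses used in Lemma~\ref{lemma:mgeq2} ($N\log^4 N=o(d^2)$ and $N=o(d^2/\log^4 d)$) is correct. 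What you buy here is that the expected-kernel lower bound is derived from a lemma already present in the paper rather than imported wholesale from \cite{bombari2022memorization}, which makes the dependency structure of the appendix more transparent.

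Step (ii), however, remains a sketch that essentially points back to \cite{bombari2022memorization} for the heavy lifting, exactly as the paper does — so at that level you have not produced an independent proof but an annotated version of the citation. Two points in that sketch deserve care if one were to flesh it out: (a) the definition of $\bar M_a$ must be a whole-summand truncation (i.e., $\bar M_a = M_a\,\chi(\text{good event about }W_{a:})$, not an entrywise clipping) so that $M_a\succeq\bar M_a$ and hence $K\succeq\sum_a\bar M_a$ — your phrasing is ambiguous on this; and (b) the claim that ``the truncation perturbs the mean by only $o(kd)$ in operator norm'' is asserted, not proved, and is precisely where the constant tracking from \cite{bombari2022memorization} is needed. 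Neither of these is a wrong idea, but they are the non-trivial part, and the paper avoids them entirely by citing the external theorem. Net assessment: your proposal is correct and compatible with the paper's route, more explicit on the structure and notably on the expected-kernel step, but it ultimately relies on \cite{bombari2022memorization} for the matrix-concentration core just as the paper does.
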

\begin{proof}
The result follows from Theorem 3.1 of \cite{bombari2022memorization}. Notice that our assumptions on the data distribution $\mathcal P_Z$ are stronger, and that our initialization of the very last layer (which differs from the Gaussian initialization in \cite{bombari2022memorization}) does not change the result. Assumption 
\ref{ass:overparamntk}, \emph{i.e.}, $k = \bigO{d}$, satisfies the \textit{loose pyramidal topology} condition (cf. Assumption 2.4 in \cite{bombari2022memorization}), and Assumption \ref{ass:overparamntk} is the same as Assumption 2.5 in \cite{bombari2022memorization}. An important difference is that we do not assume the activation function $\phi$ to be Lipschitz anymore. This, however, stops being a necessary assumption since we are working with a 2-layer neural network, and $\phi$ doesn't appear in the expression of NTK. 
\end{proof}

\begin{lemma}\label{lemma:continuous}
    Let $A \in \R^{(N-1) \times d}$ be a generic matrix, and let $\bar Z_{-1}(\delta)$ and $\bar \Phi_{-1} (\delta)$ be defined as
    \begin{equation}
        \bar Z_{-1}(\delta) := Z_{-1} + \delta A,
    \end{equation}
    \begin{equation}
        \bar \Phi_{-1} (\delta) := \bar Z_{-1}(\delta) * \phi' \left(Z_{-1} W^\top \right).
    \end{equation}
    Let $\bar P_{\Phi_{-1}}(\delta) \in \R^{dk \times dk}$ be the projector over the $\Span$ of the rows of $\bar \Phi_{-1} (\delta)$.
    Then, we have that $\bar P^\perp_{\Phi_{-1}}(\delta)$ is continuous in $\delta = 0$ with probability at least $1 - N e^{-c \log^2 k} - e^{-c \log^2 N}$ over $Z$ and $W$, where $c$ is an absolute constant and where the continuity is with respect to $\opnorm{\cdot}$.
\end{lemma}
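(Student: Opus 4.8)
The plan is to write the orthogonal projector onto the row span of $\bar\Phi_{-1}(\delta)$ in closed form and then invoke the continuity of matrix inversion on the open set of invertible matrices. First I would note that, since the Khatri--Rao product is linear in its left factor,
\[
    \bar\Phi_{-1}(\delta) = \bar Z_{-1}(\delta) * \phi'\!\left(Z_{-1} W^\top\right) = \Phi_{-1} + \delta\, B, \qquad B := A * \phi'\!\left(Z_{-1} W^\top\right) \in \R^{(N-1)\times dk},
\]
so that $\delta \mapsto \bar\Phi_{-1}(\delta)$ is an affine (hence continuous) matrix-valued map, and consequently the Gram matrix $G(\delta) := \bar\Phi_{-1}(\delta)\,\bar\Phi_{-1}(\delta)^\top \in \R^{(N-1)\times(N-1)}$ is a quadratic, hence continuous, function of $\delta$, with $G(0) = K_{-1} = \Phi_{-1}\Phi_{-1}^\top$.

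Next I would condition on the high-probability event of Lemma \ref{lemma:evminntk}, namely $\evmin{K} = \Omega(kd)$, which holds with probability at least $1 - N e^{-c\log^2 k} - e^{-c\log^2 N}$ over $Z$ and $W$; since this forces $\Phi$, and therefore $\Phi_{-1}$, to have linearly independent rows, $G(0) = K_{-1}$ is invertible. By continuity of $\delta \mapsto G(\delta)$ (and of $\det$, or of the smallest eigenvalue), there is $\delta_0 > 0$ such that $G(\delta)$ remains invertible for all $|\delta| < \delta_0$, and on this neighbourhood $\delta \mapsto G(\delta)^{-1}$ is continuous — this is the standard continuity of inversion, which one can see either from Cramer's rule ($G^{-1} = \mathrm{adj}(G)/\det G$ with $\det G \neq 0$) or from the perturbation estimate $\opnorm{G(\delta)^{-1} - G(0)^{-1}} \le \opnorm{G(\delta)^{-1}}\,\opnorm{G(0)^{-1}}\,\opnorm{G(\delta) - G(0)}$.

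Finally, using the standard expression for the projector onto a row span (see \eqref{eq:projform}),
\[
    \bar P_{\Phi_{-1}}(\delta) = \bar\Phi_{-1}(\delta)^\top\, G(\delta)^{-1}\, \bar\Phi_{-1}(\delta),
\]
which is a product of three matrix-valued maps each continuous at $\delta = 0$ with respect to $\opnorm{\cdot}$; hence $\bar P_{\Phi_{-1}}(\delta)$ is continuous at $\delta = 0$, and therefore so is $\bar P^\perp_{\Phi_{-1}}(\delta) = I - \bar P_{\Phi_{-1}}(\delta)$. I do not anticipate a genuine obstacle: the argument is purely local, so the (possibly large) norm of $B = A*\phi'(Z_{-1}W^\top)$ is immaterial, since we only need $\opnorm{\delta B} \to 0$ as $\delta \to 0$; the one point that must be in place is the invertibility of $G(0)$, which makes both the projector formula and the continuity of inversion applicable, and this is exactly what Lemma \ref{lemma:evminntk} supplies on the stated event.
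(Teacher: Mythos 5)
Your proof is correct and follows essentially the same route as the paper: condition on the event of Lemma \ref{lemma:evminntk} to make $K_{-1}$ invertible, observe that $\delta \mapsto \bar\Phi_{-1}(\delta)$ (and hence the Gram matrix) is continuous, invoke continuity of matrix inversion near an invertible point (the paper via the adjugate/determinant formula, you via the same or the perturbation bound), and conclude through the explicit projector formula $\bar\Phi_{-1}^\top G^{-1}\bar\Phi_{-1}$. Your observation that $\bar\Phi_{-1}(\delta)$ is affine in $\delta$ by linearity of the Khatri--Rao product in its left factor is a slightly cleaner way of stating the continuity that the paper establishes via the operator-norm bound $\opnorm{\delta A * \phi'(Z_{-1}W^\top)} \le \delta\opnorm{A}\max_i\norm{\phi'(Wz_i)}_2$, but the substance is the same.
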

\begin{proof}
    In this proof, when we say that a matrix is continuous with respect to $\delta$, we always intend with respect to the operator norm $\opnorm{\cdot}$. Then, $\bar \Phi_{-1} (\delta)$ is continuous in $0$, as
    \begin{equation}
        \opnorm{\bar \Phi_{-1} (\delta) - \bar \Phi_{-1} (0)} = \opnorm{ \delta A * \phi' \left(Z_{-1} W^\top \right)} \leq \delta \opnorm{A} \max_{2 \leq i \leq N} \norm{\phi' \left(W z_i \right)}_2,
    \end{equation}
    where the second step follows from Equation (3.7.13) in \cite{johnson1990matrix}.

    By Weyl's inequality, this also implies that $\evmin{\bar \Phi_{-1} (\delta) \bar \Phi_{-1} (\delta)^\top}$ is continuous in $\delta = 0$. Recall that, by Lemma \ref{lemma:evminntk}, $\det \left( \bar \Phi_{-1} (0) \bar \Phi_{-1} (0)^\top \right) \equiv \det \left(\Phi_{-1} \Phi_{-1} ^\top \right) \neq 0$ with probability at least $1 - N e^{-c \log^2 k} - e^{-c \log^2 N}$ over $Z$ and $W$. This implies that $\left( \bar \Phi_{-1} (\delta) \bar \Phi_{-1} (\delta) ^\top \right)^{-1}$ is also continuous, as for every invertible matrix $M$ we have $M^{-1} = \textup{Adj}(M) / \det(M)$ (where $\textup{Adj}(M)$ denotes the Adjugate of the matrix $M$), and both $\textup{Adj}(\cdot)$ and $\det (\cdot)$ are continuous mappings.    Thus, as $\bar P_{\Phi_{-1}}(0) = \bar \Phi_{-1} (0) ^\top \left( \bar \Phi_{-1} (0) \bar \Phi_{-1} (0) ^\top \right)^{-1} \bar \Phi_{-1} (0) $ (see \eqref{eq:projform}), we also have the continuity of $\bar P_{\Phi_{-1}}(\delta)$ in $\delta = 0$, which gives the thesis.
\end{proof}

\begin{lemma}\label{lemma:invertible}
    Let $A \in \R^{(N-1) \times d}$ be a matrix with entries sampled independently (between each other and from everything else) from a standard Gaussian distribution. Then, for every $\delta > 0$, with probability 1 over $A$, the rows of $\bar Z_{-1} := Z_{-1} + \delta A$ span $\R ^d$.
\end{lemma}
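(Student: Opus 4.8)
The plan is to reduce the claim to the non-vanishing of a single polynomial and to invoke the standard fact that the zero set of a non-trivial polynomial has Lebesgue measure zero. Since Assumption \ref{ass:overparamntk} gives $N > d$, the matrix $\bar Z_{-1} = Z_{-1} + \delta A$ has $N-1 \ge d$ rows, so it suffices to exhibit $d$ of its rows that are linearly independent almost surely. I would take the $d \times d$ submatrix $M(A)$ formed by the first $d$ rows of $\bar Z_{-1}$. Fixing the realization of $Z_{-1}$ and any $\delta > 0$, the map $p(A) := \det M(A)$ is a polynomial in the $d^2$ entries of the first $d$ rows of $A$.

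First I would check that $p$ is not identically zero. Choosing the first $d$ rows of $A$ to equal $\delta^{-1}\bigl(I_d - Z_{-1}^{(1:d)}\bigr)$, where $Z_{-1}^{(1:d)}$ denotes the top $d \times d$ block of $Z_{-1}$, makes $M(A) = I_d$, so $p$ evaluates to $1 \ne 0$ at this point. Hence $p$ is a non-trivial polynomial on $\R^{d^2}$.

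Next I would invoke the fact that the zero set $\{a : p(a) = 0\}$ of a non-trivial polynomial is a proper algebraic subvariety of $\R^{d^2}$ and therefore has Lebesgue measure zero. Since the entries of $A$ are i.i.d.\ standard Gaussians, the law of the first $d$ rows of $A$ is absolutely continuous with respect to Lebesgue measure on $\R^{d^2}$; consequently $\P\bigl(p(A) = 0\bigr) = 0$. Thus $M(A)$ is invertible with probability one, its $d$ rows are linearly independent, and a fortiori the $N-1$ rows of $\bar Z_{-1}$ span $\R^d$ almost surely. The argument is uniform in the fixed realization of $Z_{-1}$ (and of $W$, which does not appear), so no exceptional null set for the data needs to be excluded.

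There is essentially no hard step here: the only things to track are that $N-1 \ge d$ (guaranteed by Assumption \ref{ass:overparamntk}, giving enough rows to form the square submatrix) and that the exhibited choice of $A$ makes the determinant nonzero, which is immediate. The subtlety, such as it is, lies only in being careful that ``with probability $1$ over $A$'' means $Z_{-1}$ is held fixed, so the polynomial non-vanishing argument is applied conditionally on $Z_{-1}$.
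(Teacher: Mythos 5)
Your proof is correct. It takes a slightly different route from the paper's: you fix the top $d\times d$ submatrix $M(A)$ of $\bar Z_{-1}$ (using $N-1\ge d$), observe $\det M(A)$ is a polynomial in the $d^2$ relevant entries of $A$, exhibit a point where it equals $1$, and conclude via the ``zero set of a nontrivial polynomial has Lebesgue measure zero'' fact together with absolute continuity of the Gaussian law. The paper instead argues by contraposition with a row-at-a-time conditioning: if the rows of $\bar Z_{-1}$ fail to span $\R^d$, then since $N-1\ge d$ they are linearly dependent, so some row $\bar z_j$ lies in the (at most $(d-1)$-dimensional) affine subspace determined by the other rows; conditioning on $\{A_{i:}\}_{i\ne j}$, the Gaussian vector $A_{j:}$ hits that fixed proper affine subspace with probability zero, and a union over $j$ finishes. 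Your version proves the slightly stronger claim that the first $d$ rows of $\bar Z_{-1}$ are already almost surely a basis, and replaces the ``Gaussian measure of a proper affine subspace is zero'' step by the algebraic-variety version; the paper's version avoids invoking polynomial non-vanishing and works directly with hyperplanes. Both are standard genericity arguments and both are valid; the difference is cosmetic.
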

\begin{proof}
    As $N-1 \geq d$, by Assumption \ref{ass:overparamntk},
    negating the thesis would imply that the rows of $\bar Z_{-1}$ are linearly dependent, and that they belong to a subspace with dimension at most $d - 1$. This would imply that there exists a row of $\bar Z_{-1}$, call it $\bar z_j$, such that $\bar z_j$ belongs to the space spanned by all the other rows of $\bar Z_{-1}$, with dimension at most $d - 1$. This means that $A_{j:}$ has to belong to an affine space with the same dimension, which we can consider fixed, as it's not a function of the random vector $A_{j:}$, but only of $Z_{-1}$ and $\{ A_{i:} \}_{i \neq j}$. As the entries of $A_{j:}$ are sampled independently from a standard Gaussian distribution, this happens with probability 0.
\end{proof}

\begin{lemma}\label{lemma:Ppntkon1pert}
    Let $A \in \R^{(N-1) \times d}$ be a matrix with entries sampled independently (between each other and from everything else) from a standard Gaussian distribution. Let $\bar Z_{-1}(\delta) := Z_{-1} + \delta A$ and $\bar \Phi_{-1} (\delta) := \bar Z_{-1}(\delta) * \phi' \left(Z_{-1} W^\top \right)$. Let $\bar P_{\Phi_{-1}}(\delta) \in \R^{dk \times dk}$ be the projector over the $\Span$ of the rows of $\bar \Phi_{-1} (\delta)$. Let $\mu'_0 \neq 0$. Then, for $z \sim \mathcal P_Z$, and for any $\delta > 0$, we have
    \begin{equation}
        \norm{\bar P^\perp_{\Phi_{-1}}(\delta) \left(z \otimes \mathbf{1}_k\right)}_2 = o(\sqrt {dk}),
    \end{equation}
    with probability at least $1 - \exp(- c \log^2 N)$ over $Z$, $W$, and $A$, where $c$ is an absolute constant.
\end{lemma}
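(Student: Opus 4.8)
The plan is to adapt the centering step used for the random-features model (Lemma~\ref{lemma:PpRFon1}) to the Kronecker structure of the NTK features. Write $\phi'(Wz)=\mu'_0\mathbf 1_k+\psi(z)$ with $\psi(z):=\phi'(Wz)-\mu'_0\mathbf 1_k$, so that $\E_W[\psi(z)]=0$ (each coordinate $W_{j:}z$ is standard Gaussian since $\norm z_2=\sqrt d$), and hence the $i$-th row of $\E_W[\bar\Phi_{-1}(\delta)]$ equals $\mu'_0\,(\bar z_i\otimes\mathbf 1_k)$, where $\bar z_i$ denotes the $i$-th row of $\bar Z_{-1}(\delta)$. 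By Lemma~\ref{lemma:invertible} the rows $\bar z_2,\dots,\bar z_N$ span $\R^d$, so there is $c\in\R^{N-1}$ with $\bar Z_{-1}(\delta)^\top c=z$; choosing $c$ of minimal norm gives $\norm c_2^2=z^\top(\bar Z_{-1}(\delta)^\top\bar Z_{-1}(\delta))^{-1}z$ and $z\otimes\mathbf 1_k=\tfrac1{\mu'_0}\E_W[\bar\Phi_{-1}(\delta)]^\top c$. Since $\bar\Phi_{-1}(\delta)^\top c=\mu'_0\,(z\otimes\mathbf 1_k)+\sum_i c_i\,(\bar z_i\otimes\psi(z_i))$ lies in the row span of $\bar\Phi_{-1}(\delta)$, applying $\bar P^\perp_{\Phi_{-1}}(\delta)$ and rearranging yields the reduction
\[
\bigl\|\bar P^\perp_{\Phi_{-1}}(\delta)\,(z\otimes\mathbf 1_k)\bigr\|_2\;\le\;\frac1{|\mu'_0|}\Bigl\|\sum_{i} c_i\,(\bar z_i\otimes\psi(z_i))\Bigr\|_2 ,
\]
so it suffices to show the right-hand side is $o(\sqrt{dk})$.

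To do so I would condition on $Z_{-1}$ and $A$ (so that $c$ and the $\bar z_i$ are fixed and the sum becomes a function of $W$ only) and treat $\bigl\|\sum_i c_i(\bar z_i\otimes\psi(z_i))\bigr\|_2^2=\sum_{i,i'} c_i c_{i'}\langle\bar z_i,\bar z_{i'}\rangle\langle\psi(z_i),\psi(z_{i'})\rangle$ as a quadratic form in the centered, sub-Gaussian (as $\phi'$ is $L$-Lipschitz) vectors $\psi(z_i)$. Its $W$-expectation has a diagonal part $k\bigl(\sum_{l\ge1}{\mu'_l}^2\bigr)\sum_i c_i^2\norm{\bar z_i}_2^2=(1+\delta^2)k\bigl(\sum_{l\ge1}{\mu'_l}^2\bigr)d\,\norm c_2^2\,(1+o(1))$, using $\norm{z_i}_2=\sqrt d$ and concentration of $\norm{A_{i:}}_2^2$ and $\langle z_i,A_{i:}\rangle$; the off-diagonal part is controlled via $\E_W\langle\psi(z_i),\psi(z_{i'})\rangle=k\sum_{l\ge1}{\mu'_l}^2(\langle z_i,z_{i'}\rangle/d)^l$, the near-orthogonality $|\langle z_i,z_{i'}\rangle|=\bigO{\sqrt d\log N}$ for $i\neq i'$, the centering $\E[z]=0$ (which keeps the weighted sums $\sum_i c_i(z_i)_j\bar z_i$ from aligning with the dominant direction of the data Gram matrix), and the Khatri--Rao eigenvalue estimate of Lemma~\ref{lemma:mgeq2}. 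Concentration around this expectation follows from the general Hanson--Wright inequality (Theorem~2.3 of~\cite{HWconvex}), exactly as in Lemma~\ref{lemma:newlemma}, combined with entrywise bounds as in Lemma~\ref{lemma:justBern} and a union bound over the $N-1$ events $|\mathbf 1_k^\top\phi'(Wz_i)-k\mu'_0|=\bigO{\sqrt k\log N}$; on the high-probability events $\opnorm{A}=\bigO{\sqrt N}$, $\opnorm{W}=\bigO{\sqrt{k/d}+1}$ and $\evmin{\bar\Phi_{-1}(\delta)\bar\Phi_{-1}(\delta)^\top}=\Omega(kd)$ (a perturbed version of Lemma~\ref{lemma:evminntk}), all these hold with probability at least $1-\exp(-c\log^2 N)$ up to the $N$-fold union. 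This leaves $\bigl\|\sum_i c_i(\bar z_i\otimes\psi(z_i))\bigr\|_2^2=\bigO{dk\,\norm c_2^2+(\text{lower order})}$, which is $o(dk)$ once $\norm c_2^2=o(1)$.

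The main obstacle is precisely this last step: showing $\norm c_2^2=z^\top(\bar Z_{-1}(\delta)^\top\bar Z_{-1}(\delta))^{-1}z=o(1)$, i.e., that the perturbed data matrix admits a vanishing-norm solution of $\bar Z_{-1}(\delta)^\top c=z$ for the random, independent direction $z$. The delicate point is that the unperturbed $Z_{-1}$ may be poorly conditioned or even rank-deficient, so the conditioning must be supplied by the Gaussian perturbation $\delta A$ together with the $N-1\ge d$ rows; one uses $\E_A[\bar Z_{-1}(\delta)^\top\bar Z_{-1}(\delta)]=Z_{-1}^\top Z_{-1}+\delta^2(N-1)I_d$ and standard Gaussian deviation bounds for the cross and quadratic terms to lower bound $\evmin{\bar Z_{-1}(\delta)^\top\bar Z_{-1}(\delta)}$, and then $\norm z_2=\sqrt d$ together with the covariance structure of $P_Z$ to conclude $\norm c_2^2=o(1)$. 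A secondary point worth checking is that the Hanson--Wright application is legitimate even though $c$ depends on $A$: conditioning on $(Z_{-1},A)$ before taking concentration over $W$ makes the quadratic form a bona fide function of $W$, which is all that is needed. Finally, this lemma is stated for each fixed $\delta>0$; the limit $\delta\to0$ required for the unperturbed projector $\Ppm^\perp$ is taken afterwards via the continuity of Lemma~\ref{lemma:continuous}.
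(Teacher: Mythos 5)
Your reduction is rigorous and correct: writing $\phi'(Wz_i)=\mu'_0\mathbf 1_k+\psi(z_i)$ gives, for any $c$ with $\bar Z_{-1}(\delta)^\top c=z$,
\begin{equation*}
\bar\Phi_{-1}(\delta)^\top c=\mu'_0\,(z\otimes\mathbf 1_k)+\sum_{i} c_i\,(\bar z_i\otimes\psi(z_i)),\qquad\text{hence}\qquad \norm{\bar P^\perp_{\Phi_{-1}}(\delta)(z\otimes\mathbf 1_k)}_2\le\frac{1}{|\mu'_0|}\norm{\sum_{i} c_i\,(\bar z_i\otimes\psi(z_i))}_2 .
\end{equation*}
This is not how the paper closes the argument, though. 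Via \eqref{eq:tensoridentity} the paper produces a residual of the special form $z\otimes v$ with $v=\tilde B_{-1}^\top\mathbf 1_{N-1}/(\mu'_0(N-1))$ — an average of the $N-1$ centered rows — and the operator-norm control of $\tilde B_{-1}$ then gives $\norm{v}_2=o(\sqrt k)$ and $\norm{z\otimes v}_2=o(\sqrt{dk})$ \emph{independently of the size of the coefficient vector $\zeta$}. (As an aside, \eqref{eq:tensoridentity} deserves a second look: $(\bar Z_{-1}(\delta)\ast B_{-1})^\top\zeta=\sum_i\zeta_i(\bar z_i\otimes (B_{-1})_{i:})$ does not in general factor as $(\bar Z_{-1}(\delta)^\top\zeta)\otimes(B_{-1}^\top\mathbf 1_{N-1})$ when $B_{-1}$ has distinct rows.) Your residual $\sum_i c_i(\bar z_i\otimes\psi(z_i))$ is of a different, larger shape, and its control genuinely requires $\norm c_2$ to vanish.

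That is exactly where the gap opens, and it is not merely the ``delicate point'' you flag — it fails in a regime the assumptions allow. For the minimal-norm $c$ you have $\norm c_2^2=z^\top(\bar Z_{-1}(\delta)^\top\bar Z_{-1}(\delta))^{-1}z$, and your plan (lower-bounding $\evmin{\bar Z_{-1}(\delta)^\top\bar Z_{-1}(\delta)}\gtrsim\delta^2 N$ from $\E_A[\bar Z_{-1}^\top\bar Z_{-1}]=Z_{-1}^\top Z_{-1}+\delta^2(N-1)I_d$) gives at best $\norm c_2^2=\bigO{d/(\delta^2 N)}$ since $\norm z_2=\sqrt d$. But Assumption \ref{ass:overparamntk} only requires $N>d$, $k=\bigO{d}$, and $N\log^8 N=o(kd)$, all compatible with, say, $N=2d$ and $k=d$, whence $d/N=1/2$ and $\norm c_2^2=\Theta(1)$. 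Then the diagonal part of $\norm{\sum_i c_i(\bar z_i\otimes\psi(z_i))}_2^2=\sum_{i,j}c_ic_j\langle\bar z_i,\bar z_j\rangle\langle\psi(z_i),\psi(z_j)\rangle$, which you correctly evaluate as $\approx(1+\delta^2)\bigl(\sum_{l\ge1}{\mu'_l}^2\bigr)dk\norm c_2^2$, is $\Theta(dk)$ rather than $o(dk)$; the Hanson--Wright concentration you invoke would then pin the quantity near a value of the wrong order, not a vanishing one. To repair this you need a bound on the residual that does not scale with $\norm c_2$ — i.e.\ a $\zeta$-independent estimate in the spirit of the paper's $\norm{z\otimes v}_2$ — and note also that applying $\bar P^\perp_{\Phi_{-1}}(\delta)$ to your residual before estimating is circular here, since $\bar P^\perp_{\Phi_{-1}}(\delta)\sum_i c_i(\bar z_i\otimes\psi(z_i))=-\mu'_0\bar P^\perp_{\Phi_{-1}}(\delta)(z\otimes\mathbf 1_k)$ just reproduces the quantity you are trying to bound.
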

\begin{proof}
    Let $B_{-1} := \phi'(Z_{-1} W^\top) \in \R^{(N-1) \times k}$. Notice that, for any $\zeta \in \R^{N - 1}$, the following identity holds
    \begin{equation}\label{eq:tensoridentity}
        \bar \Phi_{-1}^\top(\delta) \zeta = \left( \bar Z_{-1} (\delta) \ast B_{-1} \right)^\top \zeta = \left(\bar Z_{-1}^\top (\delta) \zeta \right) \otimes \left(B_{-1}^\top \mathbf 1_{N-1} \right).
    \end{equation}
    Note that $B_{-1}^\top = \mu'_0 \mathbf 1_k \mathbf 1_{N-1}^\top + \tilde B_{-1}^\top$, where $\tilde B_{-1}^\top = \phi'(W Z_{-1}^\top) - \E_W \left[ \phi'(W Z_{-1}^\top) \right]$ is a $k \times (N - 1)$ matrix with i.i.d. and mean-0 rows. For an argument equivalent to the one used for \eqref{eq:iidrows1} and \eqref{eq:iidrows2}, we have
    \begin{equation}
        \opnorm{\tilde B_{-1}^\top} = \bigO{\left( \sqrt k + \sqrt N \right) \left( \sqrt{N / d} + 1 \right)},
    \end{equation}
    with probability at least $1 - \exp(- c \log^2 N)$ over $Z_{-1}$ and $W$.
    Thus, we can write
    \begin{equation}\label{eq:splitBntk}
        B_{-1}^\top \frac{\mathbf{1}_{N-1}}{\mu'_0 (N-1)}= \left( \mu'_0 \mathbf 1_k \mathbf 1_{N-1}^\top + \tilde B_{-1}^\top \right) \frac{\mathbf{1}_{N-1}}{\mu'_0 (N-1)} =
        \mathbf 1_k + \tilde B_{-1}^\top \frac{\mathbf{1}_{N-1}}{\mu'_0 (N-1)} =: \mathbf 1_k + v,
    \end{equation}
    where we have
    \begin{equation}\label{eq:vissmallntk}
        \norm{v}_2 \leq \opnorm{\tilde B_{-1}^\top} \norm{\frac{\mathbf{1}_{N-1}}{\mu'_0 (N-1)}}_2 = \bigO{\left( \sqrt{k /N} + 1 \right) \left( \sqrt{N / d} + 1 \right)} = o(\sqrt k),
    \end{equation}
    where the last step is a consequence of Assumption \ref{ass:overparamntk}.
    Plugging \eqref{eq:splitBntk} in \eqref{eq:tensoridentity} we get
    \begin{equation}
        \frac{1}{\mu'_0 (N-1)} \bar \Phi_{-1}^\top(\delta) \zeta = \frac{1}{\mu'_0 (N-1)} \left(\bar Z_{-1}(\delta) \ast B_{-1} \right)^\top \zeta = \left(\bar Z_{-1}^\top(\delta) \zeta \right) \otimes \left( \mathbf 1_k + v \right) .
    \end{equation}
    
    By Lemma \ref{lemma:invertible}, we have that the rows of $\bar Z_{-1}(\delta)$ span $\R^d$, with probability 1 over $A$. Thus, conditioning on this event, we can set $\zeta$ to be a vector such that $z = \bar Z_{-1}^\top(\delta) \zeta$. We can therefore rewrite the previous equation as
    \begin{equation}
        \frac{1}{\mu'_0 (N-1)}  \bar \Phi_{-1}^\top(\delta)  \zeta = z \otimes \mathbf 1_k + z \otimes v.
    \end{equation}
    
    Thus, we can conclude
    \begin{equation}
    \begin{aligned}
        \norm{\bar P^\perp_{\Phi_{-1}}(\delta) \left(z \otimes \mathbf{1}_k\right)}_2 &= \norm{\Ppm^\perp \left( \frac{\Phi_{-1}^\top(\delta) \zeta}{\mu'_0 (N-1)}  - z \otimes v \right)}_2 \\
        &\leq \norm{\bar P^\perp_{\Phi_{-1}}(\delta) \Phi_{-1}^\top(\delta) \frac{\zeta}{\mu'_0 (N - 1)}}_2 + \norm{z \otimes v}_2 \\
        &= \norm{z}_2 \norm{v}_2
        = o(\sqrt{dk}),   
    \end{aligned}
    \end{equation}
    where in the second step we use the triangle inequality, in the third step we use that $\Phi_{-1}^\top(\delta) = \bar P_{\Phi_{-1}}(\delta) \Phi_{-1}^\top(\delta)$, and in the last step we use \eqref{eq:vissmallntk}. The desired result readily follows.
\end{proof}

\begin{lemma}\label{lemma:Ppntkon1}
    Let $\mu'_0 \neq 0$. Then, for any $z \in \R^d$, we have
    \begin{equation}
        \norm{\Ppm^\perp \left(z \otimes \mathbf{1}_k\right)}_2 = o(\sqrt {dk}),
    \end{equation}
    with probability at least $1 - N e^{-c \log^2 k} - e^{-c \log^2 N}$ over $Z$ and $W$, where $c$ is an absolute constant.
\end{lemma}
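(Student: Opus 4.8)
The plan is to recover this statement as the $\delta\downarrow 0$ limit of Lemma \ref{lemma:Ppntkon1pert}, using the continuity established in Lemma \ref{lemma:continuous}. Recall the objects introduced there: let $A\in\R^{(N-1)\times d}$ have i.i.d.\ standard Gaussian entries, independent of everything else, set $\bar Z_{-1}(\delta):=Z_{-1}+\delta A$ and $\bar\Phi_{-1}(\delta):=\bar Z_{-1}(\delta)*\phi'(Z_{-1}W^\top)$, and let $\bar P_{\Phi_{-1}}(\delta)$ be the projector onto $\Span\{\Rows(\bar\Phi_{-1}(\delta))\}$. The elementary but crucial observation is that $\bar\Phi_{-1}(0)=Z_{-1}*\phi'(Z_{-1}W^\top)=\Phi_{-1}$, so $\bar P^\perp_{\Phi_{-1}}(0)=\Ppm^\perp$; hence it suffices to pass to the limit $\delta\downarrow 0$ in the bound $\norm{\bar P^\perp_{\Phi_{-1}}(\delta)(z\otimes\mathbf 1_k)}_2=o(\sqrt{dk})$ of Lemma \ref{lemma:Ppntkon1pert}.

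Concretely, I would first fix $(Z,W)$ in the intersection of the good events of Lemmas \ref{lemma:continuous} and \ref{lemma:Ppntkon1pert}; this intersection has probability at least $1-Ne^{-c\log^2 k}-e^{-c\log^2 N}$ over $Z$ and $W$. Inspecting the proof of Lemma \ref{lemma:Ppntkon1pert}, the upper bound produced there is $\norm{z}_2\,\norm{v}_2$, with $v:=\tilde B_{-1}^\top\mathbf 1_{N-1}/(\mu'_0(N-1))$ a vector depending on $Z_{-1}$ and $W$ but \emph{neither on $\delta$ nor on $A$}; call this deterministic quantity $R=R(Z,W)$, so that $R=o(\sqrt{dk})$ on the good event. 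Next, Lemma \ref{lemma:invertible} says that for each fixed $\delta>0$ the rows of $\bar Z_{-1}(\delta)$ span $\R^d$ with probability $1$ over $A$; intersecting over $\delta\in\{1/n:n\ge1\}$, I may fix one realisation of $A$ for which $\bar Z_{-1}(1/n)$ has full row rank for every $n$. For such $A$, the computation in the proof of Lemma \ref{lemma:Ppntkon1pert} gives $\norm{\bar P^\perp_{\Phi_{-1}}(1/n)(z\otimes\mathbf 1_k)}_2\le R$ for all $n$. Finally, Lemma \ref{lemma:continuous} gives $\opnorm{\bar P^\perp_{\Phi_{-1}}(1/n)-\Ppm^\perp}\to0$ as $n\to\infty$ on the fixed event, whence
\[
\norm{\Ppm^\perp(z\otimes\mathbf 1_k)}_2=\lim_{n\to\infty}\norm{\bar P^\perp_{\Phi_{-1}}(1/n)(z\otimes\mathbf 1_k)}_2\le R=o(\sqrt{dk}),
\]
which is the claim.

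The only genuinely delicate point — and the reason this short argument is not completely trivial — is that one may \emph{not} simply say ``for each $\delta>0$ the bound holds with high probability, hence it holds in the limit'', since a union bound over a sequence $\delta_n\to0$ of the high-probability events of Lemma \ref{lemma:Ppntkon1pert} is not summable. The fix, which I would make explicit, is precisely the observation above that Lemma \ref{lemma:Ppntkon1pert} furnishes a \emph{single} deterministic bound $R$ (given $Z,W$) valid simultaneously for all $\delta>0$ and all admissible $A$; a single continuity statement at $\delta=0$ then transfers it to $\Ppm^\perp$. Beyond this, the proof is a mechanical combination of Lemmas \ref{lemma:continuous}, \ref{lemma:invertible} and \ref{lemma:Ppntkon1pert}, with no new estimates required.
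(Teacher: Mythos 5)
Your proof is correct and follows essentially the same route as the paper's: combine Lemma \ref{lemma:Ppntkon1pert} with the continuity of Lemma \ref{lemma:continuous} to transfer the bound from $\bar P^\perp_{\Phi_{-1}}(\delta)$ to $\Ppm^\perp = \bar P^\perp_{\Phi_{-1}}(0)$. The only cosmetic difference is that the paper picks a single $\delta^*>0$ small enough that $\opnorm{\Ppm^\perp - \bar P^\perp_{\Phi_{-1}}(\delta^*)}<1/N$ and then applies Lemma \ref{lemma:Ppntkon1pert} at that one $\delta^*$ via a triangle inequality, whereas you pass to the limit $\delta\to 0$ along $\delta=1/n$ after observing that the bound $\norm{z}_2\norm{v}_2$ from Lemma \ref{lemma:Ppntkon1pert} is uniform in $\delta$ and $A$; both devices equally sidestep the union-bound issue you correctly flag, and your explicit remark about the uniformity of $R(Z,W)$ makes the logic slightly more transparent than in the paper.
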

\begin{proof}
    Let $A \in \R^{(N-1) \times d}$ be a matrix with entries sampled independently (between each other and from everything else) from a standard Gaussian distribution. Let $\bar Z_{-1}(\delta) := Z_{-1} + \delta A$ and $\bar \Phi_{-1} (\delta) := \bar Z_{-1}(\delta) * \phi' \left(Z_{-1} W^\top \right)$. Let $\bar P_{\Phi_{-1}}(\delta) \in \R^{dk \times dk}$ be the projector over the $\Span$ of the rows of $\bar \Phi_{-1} (\delta)$.
    
    By triangle inequality, we can write
    \begin{equation}\label{eq:usingcontinuity}
        \norm{\Ppm^\perp \left(z \otimes \mathbf{1}_k\right)}_2 \leq \opnorm{\Ppm^\perp - \bar P_{\Phi_{-1}}^\perp (\delta)} \norm{z \otimes \mathbf{1}_k}_2 + \norm{\bar P_{\Phi_{-1}}^\perp (\delta) \left(z \otimes \mathbf{1}_k\right)}_2.
    \end{equation}
    Because of Lemma \ref{lemma:continuous}, with probability at least $1 - N e^{-c \log^2 k} - e^{-c \log^2 N}$ over $Z$ and $W$, $\bar P^\perp_{\Phi_{-1}}(\delta)$ is continuous in $\delta = 0$, with respect to $\opnorm{\cdot}$. Thus, there exists $\delta^* > 0$ such that, for every $\delta \in [0, \delta^*]$,
    \begin{equation}
        \opnorm{\Ppm^\perp - \bar P_{\Phi_{-1}}^\perp (\delta)} \equiv \opnorm{\bar P_{\Phi_{-1}}^\perp (0) - \bar P_{\Phi_{-1}}^\perp (\delta)} < \frac{1}{N}.
    \end{equation}
    Hence, setting $\delta = \delta^*$ in \eqref{eq:usingcontinuity}, we get
    \begin{equation}
    \begin{aligned}
        \norm{\Ppm^\perp \left(z \otimes \mathbf{1}_k\right)}_2 &\leq \opnorm{\Ppm^\perp - \bar P_{\Phi_{-1}}^\perp (\delta^*)} \norm{z \otimes \mathbf{1}_k}_2 + \norm{\bar P_{\Phi_{-1}}^\perp (\delta^*) \left(z \otimes \mathbf{1}_k\right)}_2 \\
        &\leq \norm{z}_2 \norm{\mathbf{1}_k}_2 / N + \norm{\bar P_{\Phi_{-1}}^\perp (\delta^*) \left(z \otimes \mathbf{1}_k\right)}_2 \\
        & = o(\sqrt{dk}),
    \end{aligned}
    \end{equation}
    where the last step is a consequence of Lemma \ref{lemma:Ppntkon1pert}, and it holds with probability at least $1 - \exp(- c \log^2 N)$ over $Z$, $W$, and $A$. As the LHS of the previous equation doesn't depend on $A$, the statements holds with the same probability, just over the probability spaces of $Z$ and $W$, which gives the desired result.
\end{proof}

\begin{lemma}\label{lemma:ntklaststep}
    We have
    \begin{equation}\label{eq:1st}
        \left| \frac{\tilde \varphi(z_1^s)^\top  \tilde \varphi(z_1)}{\norm{\tilde \varphi(z_1)}_2^2} - \alpha \, \frac{ \sum_{l = 1}^{+\infty} {\mu'_l}^2 \alpha^i }{\sum_{l = 1}^{+\infty} {\mu'_l}^2} \right| = o(1),
    \end{equation}
    with probability at least $1 - \exp(-c \log^2 N) - \exp(-c \log^2 k)$ over $W$ and $z_1$, where $c$ is an absolute constant. With the same probability, we also have
    \begin{equation}
        \tilde \varphi(z_1^s)^\top  \tilde \varphi(z_1) = \Theta(dk), \qquad \norm{\tilde \varphi(z_1)}_2^2 = \Theta(dk).
    \end{equation}
\end{lemma}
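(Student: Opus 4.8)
The plan is to exploit the Kronecker structure of the NTK feature map to split each quantity into a ``data'' factor and a ``random-feature'' factor, and then concentrate each factor separately. Since $\|z\|_2^2 = \|x\|_2^2 + \|y\|_2^2 = d$ for every $z=[x,y]\sim P_Z$ by Assumption \ref{ass:datadist}, each pre-activation $W_{i:}^\top z$ is $\mathcal N(0,1)$, hence $\E_W[\phi'(Wz)] = \mu'_0\mathbf 1_k$ and $\tilde\varphi(z) = z\otimes\tilde\phi'(Wz)$ with $\tilde\phi':=\phi'-\mu'_0$, which is $L$-Lipschitz and has Hermite coefficients $(0,\mu'_1,\mu'_2,\dots)$. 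Using $(a\otimes b)^\top(c\otimes d) = (a^\top c)(b^\top d)$, I get
\[
\tilde\varphi(z_1^m)^\top\tilde\varphi(z_1) = \big((z_1^m)^\top z_1\big)\big(\tilde\phi'(Wz_1^m)^\top\tilde\phi'(Wz_1)\big), \qquad \|\tilde\varphi(z_1)\|_2^2 = d\,\|\tilde\phi'(Wz_1)\|_2^2 .
\]

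Next I concentrate the scalar factors. For the data factor, $(z_1^m)^\top z_1 = \|y_1\|_2^2 + x^\top x_1 = \alpha d + x^\top x_1$, and since $x,x_1$ are independent, mean-zero, sub-Gaussian with norm $\sqrt{d_x}$, a sub-Gaussian tail bound gives $|x^\top x_1| = O(\sqrt{d_x}\,\log N) = o(d)$ with probability $1-\exp(-c\log^2 N)$ (here $\log^2 N = o(d)$ follows from $N\log^8 N = o(kd)$ and $k=O(d)$). For the random-feature factor of the numerator, I apply Lemma \ref{lemma:verycorr} with $\tau=\tilde\phi'$, $V=W$, $t=N$, which yields $\tilde\phi'(Wz_1^m)^\top\tilde\phi'(Wz_1) = k\sum_{l\ge 1}{\mu'_l}^2\alpha^l + O\!\big(\sqrt k(\sqrt{k/d}+1)\log N\big)$, using that the Hermite coefficients of $\tilde\phi'$ are $(0,\mu'_1,\mu'_2,\dots)$. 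For the denominator, conditioning on $z_1$ and applying Lemma \ref{lemma:justBern} with $\tau=\zeta=\tilde\phi'$, $z=z'=z_1$, together with $\E_W[\|\tilde\phi'(Wz_1)\|_2^2]=k\sum_{l\ge1}{\mu'_l}^2$, gives $\|\tilde\phi'(Wz_1)\|_2^2 = k\sum_{l\ge1}{\mu'_l}^2 + O(\sqrt k\,\log N)$. Both error terms are $o(k)$ because $k=O(d)$ (so $\sqrt{k/d}=O(1)$) while $N>d$ together with $N\log^8 N=o(kd)$ force $\log^8 N=o(k)$.

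Finally, non-linearity of $\phi$ means $\phi'$ is non-constant, so ${\mu'_m}^2>0$ for some $m\ge1$; hence $\sum_{l\ge1}{\mu'_l}^2>0$ and, since $\alpha\in(0,1)$, $\sum_{l\ge1}{\mu'_l}^2\alpha^l\ge{\mu'_m}^2\alpha^m>0$. Combining with the above, $\tilde\varphi(z_1^m)^\top\tilde\varphi(z_1)=\alpha d\,k\sum_{l\ge1}{\mu'_l}^2\alpha^l\,(1+o(1))=\Theta(dk)$ and $\|\tilde\varphi(z_1)\|_2^2=d\,k\sum_{l\ge1}{\mu'_l}^2\,(1+o(1))=\Theta(dk)$, which are the last two claims. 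Dividing these two estimates and using that the limiting constant $\alpha(\sum_{l\ge1}{\mu'_l}^2\alpha^l)/(\sum_{l\ge1}{\mu'_l}^2)$ is bounded while numerator and denominator are both $\Theta(dk)$, the multiplicative $(1+o(1))$ factors collapse to an additive $o(1)$, giving \eqref{eq:1st}; a union bound over the events above (using $\exp(-ck)\le\exp(-c\log^2 k)$ for large $k$) yields the stated probability. The computation is routine once the Kronecker factorization is set up; the only delicate point is verifying the error terms are $o(k)$, which requires combining all three conditions of Assumption \ref{ass:overparamntk}, and the one conceptual step is using non-linearity of $\phi$ to guarantee $\sum_{l\ge1}{\mu'_l}^2\alpha^l>0$ so that $\gamma_{\textup{NTK}}>0$.
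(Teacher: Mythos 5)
Your proof is correct and follows essentially the same route as the paper: factor $\tilde\varphi(z_1^m)^\top\tilde\varphi(z_1)$ and $\|\tilde\varphi(z_1)\|_2^2$ via the Kronecker structure, concentrate the data inner product $z_1^\top z_1^m$ by sub-Gaussianity of $x^\top x_1$, concentrate the activation inner products via Bernstein/Hermite arguments (Lemma~\ref{lemma:verycorr}, or equivalently Lemma~\ref{lemma:justBern} for the diagonal term), and invoke non-linearity of $\phi$ to ensure $\sum_{l\ge 1}{\mu_l'}^2$ and $\sum_{l\ge 1}{\mu_l'}^2\alpha^l$ are strictly positive so both factors are $\Theta(dk)$. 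The only cosmetic difference is that the paper invokes Lemma~\ref{lemma:verycorr} in the ``degenerate'' case $\alpha=1$ with $t=k$ for the denominator, while you use Lemma~\ref{lemma:justBern} with $t=N$; these are interchangeable and your verification that the error terms are $o(k)$ via $\log^8 N = o(k)$ is the same bookkeeping the paper does.
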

\begin{proof}
    We have
    \begin{equation}
        \norm{\tilde \varphi(z_1)}_2^2 = \norm{z_1 \otimes \tilde \phi' \left( W z_1 \right)}_2^2 
        = \norm{z_1}_2^2 \norm{\tilde \phi' \left( W z_1 \right)}_2^2 
        = d \norm{\tilde \phi' \left( W z_1 \right)}_2^2.
    \end{equation}
    By Assumption 
    \ref{ass:overparamntk} and Lemma \ref{lemma:verycorr} (in which we consider the degenerate case $\alpha = 1$ and set $t=k$), we have
    \begin{equation}\label{eq:hermitelemmantk1}
        \left| \norm{\tilde \phi' \left( W z_1 \right)}_2^2 - k \sum_{l = 1}^{+\infty} {\mu'_l}^2 \right| = o(k),
    \end{equation}
    with probability at least $1 - \exp(-c \log^2 k)$ over $W$ and $z_1$. Thus, we have
    \begin{equation}\label{eq:ntkdenominator11}
        \left| \norm{\tilde \varphi(z_1)}_2^2 - dk \sum_{l = 1}^{+\infty} {\mu'_l}^2\right| = o(dk).
    \end{equation}
    Notice that the second term in the modulus is $\Theta(dk)$, since the $\mu'_l$-s cannot be all 0, because of Assumption \ref{ass:activationfuncntk}; this shows that $\norm{\tilde \varphi(z_1)}_2^2 = \Theta(dk)$.

    Similarly, we can write
    \begin{equation}\label{eq:ntknumerator1}
         \tilde \varphi(z_1^s)^\top  \tilde \varphi(z_1) = \left( z_1^\top z_1^s \right) \left( \tilde \phi' \left( W z_1 \right) ^\top \tilde \phi' \left( W z_1^s \right) \right).
    \end{equation}
    We have
    \begin{equation}\label{eq:ntknumerator2}
        \left| z_1^\top z_1^s - \alpha d\right| = \left| x_1^\top x \right| \leq \sqrt{d_x} \log d = o(d),
    \end{equation}
    where the inequality holds with probability at least $1 - \exp(-c_1 \log^2 d) \geq 1 - \exp(-c_2 \log^2 N)$ over $x_1$, as we are taking the inner product of two independent and sub-Gaussian vectors with norm $\sqrt {d_x}$. Furthermore, again by Assumption
    \ref{ass:overparamntk} and Lemma \ref{lemma:verycorr}, we have
    \begin{equation}\label{eq:ntknumerator3}
        \left| \tilde \phi' \left( W z_1 \right) ^\top \tilde \phi' \left( W z_1^s \right) - k \sum_{l = 1}^{+\infty} {\mu'_l}^2 \alpha^l \right| = o(k),
    \end{equation}
    with probability at least $1 - \exp(-c_3 \log^2 k)$ over $W$ and $z_1$. Notice that the second term in the modulus is $\Theta(k)$, because of Assumption \ref{ass:activationfuncntk}.

    Thus, putting \eqref{eq:ntknumerator1}, \eqref{eq:ntknumerator2} and \eqref{eq:ntknumerator3} together, we get
    \begin{equation}\label{eq:ntknumerator4}
         \left| \tilde \varphi(z_1^s)^\top  \tilde \varphi(z_1) - dk \alpha \sum_{l = 1}^{+\infty} {\mu'_l}^2 \alpha^l \right| = o(dk),
    \end{equation}
    with probability at least $1 - \exp(-c_3 \log^2 k) - \exp(-c_2 \log^2 N)$ over $W$ and $z_1$; this shows that $\tilde \varphi(z_1^s)^\top \tilde \varphi(z_1) = \Theta(dk)$.

    Finally, merging \eqref{eq:ntknumerator4} with \eqref{eq:ntkdenominator11} and applying triangle inequality, \eqref{eq:1st} follows and the proof is complete.
\end{proof}

\begin{lemma}\label{lemma:Pntksmall}
    Let $z \sim \mathcal P_Z$ be sampled independently from $Z_{-1}$. Then,
    \begin{equation}
        \norm{\Phi_{-1} \tilde \varphi(z)}_2 = o\left(dk\right),
    \end{equation}
    with probability at least $1 - \exp(-c \log^2 N)$ over $W$ and $z$, where $c$ is an absolute constant.
\end{lemma}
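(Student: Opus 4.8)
The plan is to compute $\Phi_{-1}\tilde\varphi(z)\in\R^{N-1}$ entrywise and then bound its $\ell_2$ norm by $\sqrt{N-1}$ times its sup norm. Writing $\tilde\varphi(z)=z\otimes\tilde\phi'(Wz)$ with $\tilde\phi':=\phi'-\mu'_0$, and recalling that the $i$-th row of $\Phi_{-1}$ equals $z_{i+1}\otimes\phi'(Wz_{i+1})$, the Kronecker identity $(a\otimes b)^\top(c\otimes d)=(a^\top c)(b^\top d)$ gives, for $i=1,\dots,N-1$,
\begin{equation}\label{eq:pntksmall-entry}
    \left(\Phi_{-1}\tilde\varphi(z)\right)_i = (z_{i+1}^\top z)\,\bigl(\phi'(Wz_{i+1})^\top\tilde\phi'(Wz)\bigr).
\end{equation}
It then remains to control the two scalar factors uniformly in $i$, exploiting that $z$ is independent of both $Z_{-1}$ and $W$.

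For the first factor, since $z$ is Lipschitz concentrated (Lemma \ref{lemma:lipcompxy}), centered, and $\norm{z}_2=\norm{z_{i+1}}_2=\sqrt d$, the map $z\mapsto z_{i+1}^\top z$ is $\sqrt d$-Lipschitz with zero mean, so $|z_{i+1}^\top z|=\bigO{\sqrt d\log N}$ with probability $1-2\exp(-c\log^2 N)$ over $z$; a union bound over the $N-1$ rows of $Z_{-1}$ retains failure probability $\exp(-c\log^2 N)$. For the second factor I would split it into its $W$-expectation and a fluctuation. Conditioning on $z$ (deterministically of norm $\sqrt d$) and using that $\phi'$ and $\tilde\phi'$ are $L$-Lipschitz, Lemma \ref{lemma:justBern} with $t=N$ yields $\bigl|\phi'(Wz_{i+1})^\top\tilde\phi'(Wz)-\E_W[\,\cdot\,]\bigr|=\bigO{\sqrt k\log N}$ with probability $1-\exp(-c\log^2 N)$ over $W$, again uniformly over $i$ after a union bound. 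For the expectation, the pre-activations $W_{j:}z_{i+1}$ and $W_{j:}z$ are standard Gaussians with correlation $z_{i+1}^\top z/d$, so the Hermite-product formula of Appendix \ref{app:Hermite} gives $\E_W[\phi'(Wz_{i+1})^\top\tilde\phi'(Wz)]=k\sum_{l\ge1}(\mu'_l)^2(z_{i+1}^\top z/d)^l$, which is at most $k\,\norm{\phi'}_{L^2(\R,\gamma)}^2\,|z_{i+1}^\top z|/d$ in absolute value, using $|z_{i+1}^\top z|\le d$ and $\sum_l(\mu'_l)^2=\norm{\phi'}_{L^2(\R,\gamma)}^2<\infty$ (as $\phi'$ grows at most linearly).

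Intersecting these events, the bound $|z_{i+1}^\top z|=\bigO{\sqrt d\log N}$ forces $|\phi'(Wz_{i+1})^\top\tilde\phi'(Wz)|=\bigO{(k/\sqrt d+\sqrt k)\log N}$, so by \eqref{eq:pntksmall-entry} every entry of $\Phi_{-1}\tilde\varphi(z)$ is $\bigO{(k+\sqrt{kd})\log^2 N}=\bigO{\sqrt{kd}\log^2 N}$, where the last equality uses $k=\bigO{d}$ from Assumption \ref{ass:overparamntk}. Hence $\norm{\Phi_{-1}\tilde\varphi(z)}_2\le\sqrt{N-1}\cdot\bigO{\sqrt{kd}\log^2 N}=\bigO{\sqrt{Nkd}\,\log^2 N}$, and this is $o(dk)$ precisely because $N\log^4 N=o(kd)$, which follows from $N\log^8 N=o(kd)$ in Assumption \ref{ass:overparamntk}; the union of all failure events has probability $\exp(-c\log^2 N)$ over $W$ and $z$.

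\textbf{Main obstacle.} The crux — and the reason the naive estimate $\norm{\Phi_{-1}\tilde\varphi(z)}_2\le\opnorm{\Phi_{-1}}\norm{\tilde\varphi(z)}_2$ falls short (it only gives $\bigO{\sqrt{Ndk^2}}$, not $o(dk)$ once $N>d$) — is that one must harvest the approximate orthogonality between $z$ and the rows of $Z_{-1}$ at two levels at once: in the raw inner products $z_{i+1}^\top z=\bigO{\sqrt d\log N}$, and, through the Hermite expansion, inside the nonlinear inner products $\phi'(Wz_{i+1})^\top\tilde\phi'(Wz)$, whose expectation would be $\Theta(k)$ rather than $\bigO{k\log N/\sqrt d}$ were the two samples not nearly orthogonal. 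Combining the two gains is exactly what lets the bound survive the regime $N>d$ permitted by Assumption \ref{ass:overparamntk}.
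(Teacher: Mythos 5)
Your proof is correct and mirrors the paper's argument in every essential step: the entrywise factorization $(\Phi_{-1}\tilde\varphi(z))_i = (z_{i+1}^\top z)\bigl(\phi'(Wz_{i+1})^\top\tilde\phi'(Wz)\bigr)$, the sub-Gaussian bound $|z_{i+1}^\top z| = \bigO{\sqrt d\log N}$, Lemma~\ref{lemma:justBern} for the $W$-fluctuation, the Hermite-expansion bound on $\E_W[\cdot]$ yielding $\bigO{k\log N/\sqrt d}$, and the union-bound-plus-$\sqrt{N}$ assembly. The only cosmetic difference is how you bound the Hermite tail ($|r|^l\le|r|$ with $\sum_l(\mu'_l)^2$ versus the paper's geometric-series-and-$\max_l(\mu'_l)^2$ estimate), which gives the same order.
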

\begin{proof}
    Let's look at the $i$-th entry of the vector $\Phi_{-1} \tilde \varphi(z)$, \emph{i.e.},
    \begin{equation}\label{eq:Pntksmall0}
        \varphi(z_{i + 1})^\top \tilde \varphi(z) = \left(z_{i+1}^\top z \right) \left( \phi'(Wz_{i+1})^\top \tilde \phi'(Wz) \right).
    \end{equation}

    As $z$ and $z_{i+1}$ are sub-Gaussian and independent with norm $\sqrt d$, we can write $\left| z^\top z_{i+1} \right| = \bigO{\sqrt d \log N}$ with probability at least $1 - \exp(-c \log^2 N)$ over $z$. We will condition on such high probability event until the end of the proof.
    
    By Lemma \ref{lemma:justBern}, setting $t = N$, we have
    \begin{equation}\label{eq:Pntksmall1}
        \left|  \phi'(Wz_{i+1})^\top \tilde \phi'(Wz) - \E_W \left[\phi'(Wz_{i+1})^\top \tilde \phi'(Wz) \right]\right| = \bigO{\sqrt k \log N},
    \end{equation}
    with probability at least $1 - \exp(-c_1 \log^2 N)$ over $W$.
    Exploiting the Hermite expansion of $\phi'$ and $\tilde \phi'$, we have
    \begin{equation}\label{eq:Pntksmall2}
    \begin{aligned}
        \left| \E_W \left[\phi'(Wz_{i+1})^\top \tilde \phi'(Wz) \right] \right| \leq & \, k \sum_{l = 1}^{+ \infty} {\mu'_l}^2 \left( \frac{ \left| z_{i + 1}^\top z \right|}{\norm{z_{i + 1}}_2\norm{z}_2} \right) ^l \\
        \leq & \,k \max_l {\mu'_l}^2 \, \sum_{l = 1}^{+ \infty} \left( \frac{ \left| z_{i + 1}^\top z \right| }{\norm{z_{i + 1}}_2\norm{z}_2} \right)^l \\
        = & \,k \max_l {\mu'_l}^2   \frac{ \left| z_{i + 1}^\top z \right|}{\norm{z_{i + 1}}_2\norm{z}_2} \frac{1}{1 - \frac{\left| z_{i + 1}^\top z \right|}{\norm{z_{i + 1}}_2\norm{z}_2}} \\
        \leq & \, 2 k \max_l {\mu'_l}^2  \frac{\left| z_{i + 1}^\top z \right|}{\norm{z_{i + 1}}_2\norm{z}_2}  = \bigO{\frac{k \log N}{\sqrt d}}.
    \end{aligned}
    \end{equation}

    Putting together \eqref{eq:Pntksmall1} and \eqref{eq:Pntksmall2}, and applying triangle inequality, we get
    \begin{equation}
         \left|  \phi'(Wz_{i+1})^\top \tilde \phi'(Wz) \right| = \bigO{\sqrt k \log N + \frac{k \log N}{\sqrt d}} = \bigO{\sqrt k \log N},
    \end{equation}
    where the last step is a consequence of Assumption \ref{ass:overparamntk}.
    Comparing this last result with \eqref{eq:Pntksmall0}, we obtain
    \begin{equation}
        \left| \varphi(z_{i + 1})^\top \tilde \varphi(z) \right| = \bigO{\sqrt {dk} \log^2 N},
    \end{equation}
    with probability at least $1 - \exp(-c_2 \log^2 N)$ over $W$ and $z$.

    We want the previous equation to hold for all $1 \leq i \leq N-1$. Performing a union bound, we have that this is true with probability at least  $1 - (N - 1) \exp(-c_2 \log^2 N) \geq 1 - \exp(-c_3 \log^2 N)$ over $W$ and $z$. Thus, with such probability, we have
    \begin{equation}
    \begin{aligned}
        \norm{\Phi_{-1} \tilde \varphi(z)}_2 \leq& \sqrt{N - 1} \max_{i} \left| \varphi(z_{i + 1})^\top \tilde \varphi(z) \right| \\
        =& \bigO{\sqrt {dk} \sqrt N \log^2 N} = o(dk),
    \end{aligned}
    \end{equation}
    where the last step follows from Assumption \ref{ass:overparamntk}.
\end{proof}

\begin{lemma}\label{lemma:ntkpenultimatestep}
    We have
    \begin{equation}
        \left| \frac{\tilde \varphi(z_1^s)^\top  \tilde \varphi(z_1) -\tilde \varphi(z_1^s)^\top \Ppm \tilde \varphi(z_1)}{\norm{\tilde \varphi(z_1) - \Ppm \tilde \varphi(z_1)}_2^2} - \frac{\tilde \varphi(z_1^s)^\top  \tilde \varphi(z_1)}{\norm{\tilde \varphi(z_1)}_2^2} \right| = o(1),
    \end{equation}
    with probability at least $1 - N \exp(-c \log^2 k) - \exp(-c \log^2 N)$ over $Z$, $x$ and $W$, where $c$ is an absolute constant. With the same probability, we also have
    \begin{equation}
        \tilde \varphi(z_1^s)^\top  \tilde \varphi(z_1) -\tilde \varphi(z_1^s)^\top \Ppm \tilde \varphi(z_1) = \Theta(dk), \qquad \norm{\tilde \varphi(z_1) - \Ppm \tilde \varphi(z_1)}_2^2 = \Theta(dk).
    \end{equation}
\end{lemma}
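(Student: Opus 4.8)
The plan is to show that the two ``correction terms'' $\tilde \varphi(z_1^m)^\top \Ppm \tilde \varphi(z_1)$ and $\norm{\Ppm \tilde \varphi(z_1)}_2^2$ are of negligible order $o(dk)$, so that deleting them from the numerator and denominator changes each by only a $(1+o(1))$ factor; since the uncorrected numerator $\tilde \varphi(z_1^m)^\top \tilde \varphi(z_1)$ and denominator $\norm{\tilde \varphi(z_1)}_2^2$ are $\Theta(dk)$ by Lemma \ref{lemma:ntklaststep}, the two ratios then differ by $o(1)$.

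First I would bound $\norm{\Ppm \tilde \varphi(z)}_2$ for $z \in \{z_1, z_1^m\}$. The key observation is that $z_1$ (the first row of $Z$) is independent of $Z_{-1}$, and that $z_1^m = [x, y_1]$ has law $P_X \times P_Y = P_Z$ (Assumption \ref{ass:datadist}) and is likewise independent of $Z_{-1}$, since $x$ is drawn independently of everything and the block $y_1$ belongs to $z_1$, not to $Z_{-1}$. Hence Lemma \ref{lemma:Pntksmall} applies to each of them and gives $\norm{\Phi_{-1} \tilde \varphi(z)}_2 = o(dk)$. Combining this with $\Ppm = \Phi_{-1}^+ \Phi_{-1}$ (see \eqref{eq:projform}) and $\opnorm{\Phi_{-1}^+} = 1/\sqrt{\evmin{K_{-1}}} = \bigO{1/\sqrt{dk}}$ from Lemma \ref{lemma:evminntk}, I would conclude
\[
\norm{\Ppm \tilde \varphi(z)}_2 \;\le\; \opnorm{\Phi_{-1}^+}\,\norm{\Phi_{-1} \tilde \varphi(z)}_2 \;=\; \bigO{1/\sqrt{dk}}\cdot o(dk) \;=\; o(\sqrt{dk}).
\]
By Cauchy--Schwarz this yields $|\tilde \varphi(z_1^m)^\top \Ppm \tilde \varphi(z_1)| = |(\Ppm \tilde \varphi(z_1^m))^\top (\Ppm \tilde \varphi(z_1))| \le \norm{\Ppm \tilde \varphi(z_1^m)}_2 \norm{\Ppm \tilde \varphi(z_1)}_2 = o(dk)$ and $\norm{\Ppm \tilde \varphi(z_1)}_2^2 = o(dk)$.

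Next, writing $a := \tilde \varphi(z_1^m)^\top \tilde \varphi(z_1)$ and $c := \norm{\tilde \varphi(z_1)}_2^2$ (both $\Theta(dk)$ by Lemma \ref{lemma:ntklaststep}), and $b := \tilde \varphi(z_1^m)^\top \Ppm \tilde \varphi(z_1)$, $e := \norm{\Ppm \tilde \varphi(z_1)}_2^2$ (both $o(dk)$ by the previous step), the $\Theta(dk)$ claims of the lemma follow at once: $a - b = \Theta(dk)$, and since $\Ppm$ is an orthogonal projector, $\norm{\tilde \varphi(z_1) - \Ppm \tilde \varphi(z_1)}_2^2 = \norm{\tilde \varphi(z_1)}_2^2 - \norm{\Ppm \tilde \varphi(z_1)}_2^2 = c - e = \Theta(dk)$. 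For the main bound I would use the identity
\[
\frac{a-b}{c-e} - \frac{a}{c} \;=\; \frac{ae - bc}{c\,(c-e)},
\]
and bound its numerator by $|a|\,|e| + |b|\,|c| = \Theta(dk)\cdot o(dk) + o(dk)\cdot \Theta(dk) = o(d^2k^2)$ and its denominator by $\Theta(dk)\cdot\Theta(dk) = \Theta(d^2k^2)$, giving $o(1)$. The probability bound is then a union bound over the good events of Lemma \ref{lemma:evminntk}, of Lemma \ref{lemma:Pntksmall} applied twice (with $z=z_1$ and with $z=z_1^m$), and of Lemma \ref{lemma:ntklaststep}, which together yield the stated $1 - N\exp(-c\log^2 k) - \exp(-c\log^2 N)$.

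I do not expect a serious obstacle: once the two analytic inputs are in hand --- the spectral lower bound $\evmin{K_{-1}} = \Omega(dk)$ and the near-orthogonality $\norm{\Phi_{-1}\tilde \varphi(z)}_2 = o(dk)$ for $z$ independent of $Z_{-1}$, both supplied by earlier lemmas --- everything reduces to the elementary algebra of ratios above. The one point that deserves care is verifying that Lemma \ref{lemma:Pntksmall} genuinely applies to the masked query $z_1^m$: although $z_1^m$ shares the noise block $y_1$ with the training point $z_1$, it is still distributed as $P_Z$ and independent of the \emph{reduced} feature matrix $\Phi_{-1}$, which is precisely why removing the first row in the definition of stability is what makes the argument go through.
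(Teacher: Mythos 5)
Your proof is correct and follows essentially the same strategy as the paper's: use Lemma \ref{lemma:evminntk} and Lemma \ref{lemma:Pntksmall} to show that $\tilde\varphi(z_1^m)^\top \Ppm \tilde\varphi(z_1)$ and $\norm{\Ppm\tilde\varphi(z_1)}_2^2$ are $o(dk)$, then combine with the $\Theta(dk)$ estimates from Lemma \ref{lemma:ntklaststep}. The only cosmetic differences are that the paper bounds the cross term directly via $\Ppm = \Phi_{-1}^\top K_{-1}^{-1}\Phi_{-1}$ rather than via $\norm{\Ppm\tilde\varphi(z)}_2 = o(\sqrt{dk})$ plus Cauchy--Schwarz, and that you spell out the ratio identity $\frac{a-b}{c-e}-\frac{a}{c}=\frac{ae-bc}{c(c-e)}$ which the paper leaves as a ``straightforward application of the triangle inequality''; your explicit check that Lemma \ref{lemma:Pntksmall} applies to $z_1^m$ is a welcome addition.
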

\begin{proof}
    Notice that, with probability at least $1 - \exp(-c \log^2 N) - \exp(-c \log^2 k)$ over $W$ and $z_1$, we have both
    \begin{equation}\label{eq:bigtermspenultimatentk}
        \tilde \varphi(z_1^s)^\top  \tilde \varphi(z_1) = \Theta(dk) \qquad \norm{\tilde \varphi(z_1)}_2^2 = \Theta(dk).
    \end{equation}
    by the second statement of Lemma \ref{lemma:ntklaststep}. Furthermore,
%
    \begin{equation}\label{eq:smallcrossPnumer}
    \begin{aligned}
        \left| \tilde \varphi(z_1^s)^\top \Ppm \tilde \varphi(z_1) \right| =& \left| \tilde \varphi(z_1^s)^\top \Phi_{-1}^\top K^{-1}_{-1} \Phi_{-1} \tilde \varphi(z_1) \right| \\
        \leq& \norm{\Phi_{-1}\tilde \varphi(z_1^s)}_2 \evmin{K_{-1}}^{-1} \norm{\Phi_{-1} \tilde \varphi(z_1)}_2 \\
        =& o(dk) \bigO{\frac{1}{dk}} o(dk) = o(dk),
    \end{aligned}
    \end{equation}
    where the third step is justified by Lemmas \ref{lemma:evminntk} and \ref{lemma:Pntksmall}, and holds with probability at least $1 - N e^{-c \log^2 k} - e^{-c \log^2 N}$ over $Z$, $x$, and $W$. A similar argument can be used to show that $\norm{\Ppm \tilde \varphi(z_1)}_2^2 =  o(dk)$, which, together with \eqref{eq:smallcrossPnumer} and \eqref{eq:bigtermspenultimatentk}, and a straightforward application of the triangle inequality, provides the thesis.
\end{proof}

\begin{lemma}\label{lemma:ntkthirdlaststep}
    We have
    \begin{equation}
        \left| \frac{\varphi(z_1^s)^\top \Ppm^\perp \varphi(z_1)}{\norm{\Ppm^\perp \varphi(z_1)}_2^2}  - \frac{\tilde \varphi(z_1^s)^\top \Ppm^\perp \tilde \varphi(z_1)}{\norm{\Ppm^\perp \tilde \varphi(z_1)}_2^2} \right| = o(1),
    \end{equation}
    with probability at least $1 - N \exp(-c \log^2 k) - \exp(-c \log^2 N)$ over $Z$, $x$ and $W$, where $c$ is an absolute constant.
\end{lemma}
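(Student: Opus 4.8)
The plan is to exploit the decomposition $\varphi(z) = \tilde\varphi(z) + \mu'_0\,(z \otimes \mathbf 1_k)$, which follows from $\E_W[\phi'(Wz)] = \mu'_0 \mathbf 1_k$. If $\mu'_0 = 0$ then $\varphi \equiv \tilde\varphi$ and there is nothing to prove, so I would assume $\mu'_0 \neq 0$. Applying $\Ppm^\perp$ to this decomposition for $z = z_1$ and $z = z_1^m$ (both vectors have $\ell_2$ norm $\sqrt d$, so Lemma \ref{lemma:Ppntkon1} applies to each) yields $\norm{\Ppm^\perp(z_1 \otimes \mathbf 1_k)}_2 = o(\sqrt{dk})$ and $\norm{\Ppm^\perp(z_1^m \otimes \mathbf 1_k)}_2 = o(\sqrt{dk})$ on an event of probability at least $1 - N e^{-c\log^2 k} - e^{-c\log^2 N}$ over $Z$ and $W$.

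Next I would set $a := \tilde\varphi(z_1^m)^\top\Ppm^\perp\tilde\varphi(z_1)$, $b := \norm{\Ppm^\perp\tilde\varphi(z_1)}_2^2$, and let $a', b'$ denote the corresponding quantities with $\varphi$ in place of $\tilde\varphi$; the claim to be proved is $|a'/b' - a/b| = o(1)$. Expanding $a'$ and $b'$ via the decomposition and the idempotence of $\Ppm^\perp$, each term other than $a$ (resp.\ $b$) is, up to the factor $\mu'_0$ or $\mu_0'^2$, an inner product in which at least one of the two arguments is $\Ppm^\perp(z_1 \otimes \mathbf 1_k)$ or $\Ppm^\perp(z_1^m \otimes \mathbf 1_k)$. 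Bounding all such terms by Cauchy--Schwarz, using $\norm{\Ppm^\perp} \le 1$, the norm bounds $\norm{\tilde\varphi(z_1)}_2, \norm{\tilde\varphi(z_1^m)}_2 = O(\sqrt{dk})$ (which follow from $\norm{z_1}_2 = \norm{z_1^m}_2 = \sqrt d$, the Lipschitzness of $\tilde\phi'$, and Lemma \ref{lemma:justBern}), and the two $o(\sqrt{dk})$ estimates above, I would conclude $a' = a + o(dk)$ and $b' = b + o(dk)$.

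To finish, I would record that $a$ and $b$ have the right order of magnitude: since $\Ppm^\perp = I - \Ppm$ and $\tilde\varphi^\top\Ppm\tilde\varphi = \norm{\Ppm\tilde\varphi}_2^2$, Lemma \ref{lemma:ntkpenultimatestep} gives $a = \tilde\varphi(z_1^m)^\top\tilde\varphi(z_1) - \tilde\varphi(z_1^m)^\top\Ppm\tilde\varphi(z_1) = \Theta(dk)$ and $b = \norm{\tilde\varphi(z_1) - \Ppm\tilde\varphi(z_1)}_2^2 = \Theta(dk)$. Writing $a' = a + \varepsilon_1$, $b' = b + \varepsilon_2$ with $\varepsilon_1, \varepsilon_2 = o(dk)$, so in particular $b' = \Theta(dk) \neq 0$, I would then bound
\begin{equation}
\left| \frac{a'}{b'} - \frac{a}{b} \right| = \frac{\left| \varepsilon_1 b - a\,\varepsilon_2 \right|}{b'\,b} = \frac{o(dk)\cdot\Theta(dk)}{\Theta(d^2 k^2)} = o(1),
\end{equation}
which is exactly the assertion; the final probability comes from a union bound over the events used from Lemmas \ref{lemma:Ppntkon1}, \ref{lemma:ntkpenultimatestep} and \ref{lemma:justBern}. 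The main obstacle I anticipate is bookkeeping rather than conceptual: I must make sure the $\Theta(dk)$ lower bounds on $a$ and $b$ hold simultaneously with the $o(\sqrt{dk})$ control of the $\mathbf 1_k$-components, so that the division in the last display is legitimate and the whole estimate holds on a single event of the advertised probability.
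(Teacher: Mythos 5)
Your proposal is correct and follows essentially the same route as the paper's proof: reduce to $\mu'_0 \neq 0$, use the decomposition $\varphi(z) = \tilde\varphi(z) + \mu'_0 (z \otimes \mathbf 1_k)$ together with Lemma~\ref{lemma:Ppntkon1} to absorb the $\mathbf 1_k$-components, invoke Lemma~\ref{lemma:ntkpenultimatestep} for the $\Theta(dk)$ bounds, and conclude by algebra on the quotient, with a union bound over the conditioning events. The only cosmetic difference is in bookkeeping: the paper bounds the numerator difference via the identity $uv - \tilde u\tilde v = u(v-\tilde v) + (u-\tilde u)\tilde v$ and the denominator via a reverse triangle inequality on norms, whereas you expand $a'$ and $b'$ term by term; both are equivalent.
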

\begin{proof}
    If $\mu'_0 = 0$, the thesis is trivial, as $\varphi \equiv \tilde \varphi$. If $\mu'_0 \neq 0$, we can apply Lemma \ref{lemma:Ppntkon1}, and the proof proceeds as follows.
    
    First, we notice that the second term in the modulus in the statement corresponds to the first term in the statement of Lemma \ref{lemma:ntkpenultimatestep}. We will condition on the result of Lemma \ref{lemma:ntkpenultimatestep} to hold until the end of the proof. Notice that this also implies
    \begin{equation}\label{eq:ntkfirststep0}
        \tilde \varphi(z_1^s)^\top \Ppm^\perp \tilde \varphi(z_1) = \Theta(dk), \qquad \norm{\Ppm^\perp \tilde \varphi(z_1)}_2^2 = \Theta(dk),
    \end{equation}
    with probability at least $1 - N \exp(-c \log^2 k) - \exp(-c \log^2 N)$ over $Z$, $x$, and $W$.
    Due to Lemma \ref{lemma:Ppntkon1}, we jointly have
    \begin{equation}\label{eq:ntkfirststep1}
        \norm{\Ppm^\perp \left( z_1 \otimes \mathbf 1_k \right)}_2 = o(\sqrt {dk}), \qquad \norm{\Ppm^\perp  \left( z_1^s \otimes \mathbf 1_k \right)}_2 = o(\sqrt {dk}),
    \end{equation}
    with probability at least $1 - \exp(c \log^2 N)$ over $Z_{-1}$ and $W$.
    Also, by Lemma \ref{lemma:justBern} and Assumption \ref{ass:overparam}, we jointly have
    \begin{equation}\label{eq:ntkfirststep2}
        \norm{\Ppm^\perp \varphi(z_1^s)}_2 \leq \norm{\varphi(z_1^s)}_2 = \norm{z_1^s}_2 \norm{\phi'(W z_1^s)}_ 2 = \bigO{\sqrt{dk}},
    \end{equation}
    and
    \begin{equation}\label{eq:ntkfirststep3}
        \norm{\Ppm^\perp \tilde \varphi(z_1)}_2 \leq \norm{\tilde \varphi(z_1)}_2 = \norm{z_1}_2 \norm{\tilde \phi'(W z_1)}_ 2 = \bigO{\sqrt{dk}},
    \end{equation}
    with probability at least $1 - \exp(-c_1 \log^2 N)$ over $W$. We will condition also on such high probability events (\eqref{eq:ntkfirststep1}, \eqref{eq:ntkfirststep2}, \eqref{eq:ntkfirststep3}) until the end of the proof.
    Thus, we can write
    \begin{equation}\label{eq:ntkfirststep4}
    \begin{aligned}
        &\left| \varphi(z_1^s)^\top \Ppm^\perp \varphi(z_1) - \tilde \varphi(z_1^s)^\top \Ppm^\perp \tilde \varphi(z_1) \right|  \\
        &\leq \left| \varphi(z_1^s)^\top \Ppm^\perp \left( \varphi(z_1) - \tilde \varphi(z_1) \right)  \right| + \left|\left( \varphi(z_1^s) -\tilde \varphi(z_1^s) \right)^\top \Ppm^\perp \tilde \varphi(z_1)  \right|  \\
        &\leq \norm{\Ppm^\perp \varphi(z_1^s)}_2 \norm{\Ppm^\perp  \left( z_1 \otimes \mu_0 \mathbf 1_k \right)}_2 + \norm{\Ppm^\perp \tilde \varphi(z_1)}_2 \norm{\Ppm^\perp  \left( z_1^s \otimes \mu_0 \mathbf 1_k \right)}_2  = o(dk),
    \end{aligned}
    \end{equation}
    where in the last step we use \eqref{eq:ntkfirststep1}, \eqref{eq:ntkfirststep2}, and \eqref{eq:ntkfirststep3}.
    Similarly, we can show that
    \begin{equation}\label{eq:ntkfirststep5}
    \begin{aligned}
        \left| \norm{\Ppm^\perp \varphi(z_1)}_2 - \norm{\Ppm^\perp \tilde \varphi(z_1)}_2 \right| &\leq \norm{\Ppm^\perp \varphi(z_1) - \Ppm^\perp \tilde \varphi(z_1)}_2 \\
        &\leq \norm{\Ppm^\perp \left( z_1 \otimes \mu_0 \mathbf 1_k  \right)}_2= o(\sqrt{dk}).
    \end{aligned}
    \end{equation}
    By combining 
    \eqref{eq:ntkfirststep0}, \eqref{eq:ntkfirststep4}, and \eqref{eq:ntkfirststep5}, the desired result readily follows.
\end{proof}

Finally, we are ready to give the proof of Theorem \ref{thm:mainntk}.

\begin{proof}[Proof of Theorem \ref{thm:mainntk}]
    We have
    \begin{equation}
    \begin{aligned}
         \left| \frac{\varphi(z_1^s)^\top \Ppm^\perp \varphi(z_1)}{\norm{\Ppm^\perp \varphi(z_1)}_2^2} - \alpha \, \frac{ \sum_{l = 1}^{+\infty} {\mu'_l}^2 \alpha^i }{\sum_{l = 1}^{+\infty} {\mu'_l}^2}  \right| & \leq \left| \frac{\varphi(z_1^s)^\top \Ppm^\perp \varphi(z_1)}{\norm{\Ppm^\perp \varphi(z_1)}_2^2}  - \frac{\tilde \varphi(z_1^s)^\top \Ppm^\perp \tilde \varphi(z_1)}{\norm{\Ppm^\perp \tilde \varphi(z_1)}_2^2} \right| \\
         & + \left| \frac{\tilde \varphi(z_1^s)^\top  \tilde \varphi(z_1) -\tilde \varphi(z_1^s)^\top \Ppm \tilde \varphi(z_1)}{\norm{\tilde \varphi(z_1) - \Ppm \tilde \varphi(z_1)}_2^2} - \frac{\tilde \varphi(z_1^s)^\top  \tilde \varphi(z_1)}{\norm{\tilde \varphi(z_1)}_2^2} \right| \\
         & + \left| \frac{\tilde \varphi(z_1^s)^\top  \tilde \varphi(z_1)}{\norm{\tilde \varphi(z_1)}_2^2} - \alpha \, \frac{ \sum_{l = 1}^{+\infty} {\mu'_l}^2 \alpha^i }{\sum_{l = 1}^{+\infty} {\mu'_l}^2} \right| \\
         & = o(1),
    \end{aligned}
    \end{equation}
    where the first step is justified by the triangle inequality, and the second by Lemmas \ref{lemma:ntkthirdlaststep}, \ref{lemma:ntkpenultimatestep}, and \ref{lemma:ntklaststep}, and it holds with probability at least $1 - N \exp(-c \log^2 k) - \exp(-c \log^2 N)$ over $Z$, $x$, and $W$.
\end{proof}

\section{Additional experiments}\label{app:exp}


\begin{figure*}[!t]
  \begin{center}
    \includegraphics[width=.99\textwidth]{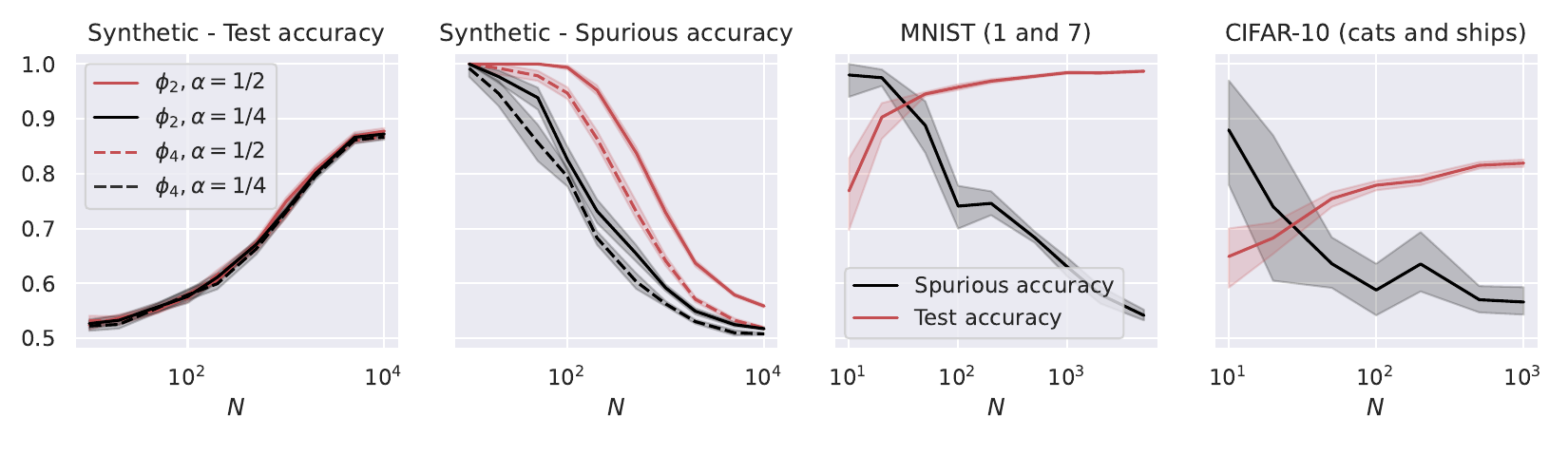}
  \end{center}
  \caption{Test and spurious accuracies as a function of the number of training samples $N$, for various binary classification tasks. In the first two plots, we consider the NTK model in \eqref{eq:NTKmodel} with $k = 100$ trained over Gaussian data with $d = 1000$. The labeling function is $g(x) = \textup{sign}(u^\top x)$. We repeat the experiments for $\alpha = \{ 0.25, 0.5 \}$, and for the two activations whose derivatives are $\phi'_2 = h_0 + h_1$ and $\phi'_4 = h_0 + h_3$, where $h_i$ denotes the $i$-th Hermite polynomial (see Appendix \ref{app:Hermite}). In the last two plots, we consider the same model with ReLU activation, trained over two MNIST and CIFAR-10 classes. The width of the noise background is $10$ pixels for MNIST and $8$ pixels for CIFAR-10, see Figure \ref{fig:cat}. The spurious accuracy is obtained by querying the model only with the noise background from the training set, replacing all the other pixels with $0$, and taking the sign of the output. As we consider binary classification, an accuracy of 0.5 is achieved by random guessing. We plot the average over 10 independent trials
  and the confidence band at 1 standard deviation.}
  \label{fig:ntk}
\end{figure*}

Figure \ref{fig:ntk} reports the experiments on NTK features for the same setting considered in Figure \ref{fig:rf} for random features. We consider binary classification tasks involving synthetic (first two plots) and standard (last two plots) datasets. As predicted by Theorem \ref{thm:mainntk}, when the number of samples $N$ increases, the test accuracy increases and, correspondingly, the spurious accuracy decreases. Furthermore, for the synthetic dataset, while the test accuracy does not depend on $\alpha$ and on the activation function, the spurious accuracy increases with $\alpha$ and by taking an activation function with dominant low-order Hermite coefficients.

\begin{figure}[!t]
  \begin{center}
    \includegraphics[width=\textwidth]{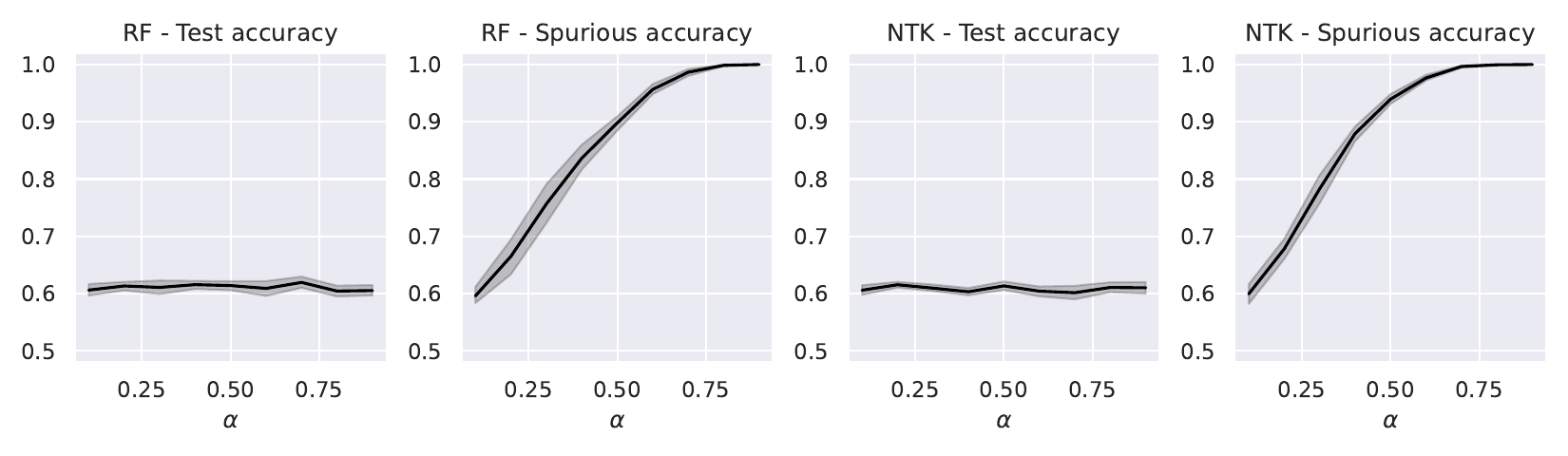}
  \end{center}
  \caption{Test and spurious accuracies as a function of $\alpha$. We consider RF (first and second plot) and NTK (third and fourth plot) models trained on a synthetic dataset. The settings are the same as in Figures \ref{fig:rf} and \ref{fig:ntk}, and we use a ReLU activation function. The number of training samples is fixed to $N = 200$.}
  \label{fig:varying_alpha}
\end{figure}

In Figure \ref{fig:varying_alpha}, we plot the test and the spurious accuracies as a function of $0 < \alpha < 1$. While the test accuracy does not depend on $\alpha$, the spurious accuracy monotonically grows with $\alpha$. This is in agreement with the results of Theorems \ref{thm:RF} and \ref{thm:mainntk}.

\end{document}